\documentclass{article}

    \PassOptionsToPackage{numbers, compress}{natbib}

\usepackage[main, final]{neurips_2026}

\usepackage[utf8]{inputenc} 
\usepackage[T1]{fontenc}    
\usepackage{hyperref}       
\usepackage{url}            
\usepackage{booktabs}       
\usepackage{amsfonts}       
\usepackage{nicefrac}       
\usepackage{microtype}      
\usepackage{xcolor}         
\usepackage{amsmath}
\usepackage{graphicx}
\usepackage{wrapfig}
\usepackage{algorithm}
\usepackage{algorithmic}
 \usepackage{multirow}
\usepackage{setspace}
\usepackage{amsthm}
\usepackage{amssymb}
\usepackage{array}
\usepackage{booktabs}
\usepackage{cleveref}

\newtheorem{theorem}{Theorem}[section]
\newtheorem{proposition}[theorem]{Proposition}

\newtheorem{definition}[theorem]{Definition}

\title{TimeES: Probabilistic and Deterministic Time Series Forecasting via Evolutionary Spectra}

\author{%
  Weiwei Ye \quad Renhe Jiang \quad Hangchen Liu \quad Dongyuan Li \quad Yoshihide Sekimoto \\
  The University of Tokyo
}

\begin{document}

\maketitle
\begin{abstract}

Real-world time series are inherently non-stationary, with trends, periodic patterns, and uncertainty evolving over time. While the Fourier domain offers a natural lens to model time series, current deep learning approaches do not explicitly model evolution and randomness in the Fourier spectra, which limits their ability to accurately predict both the expected trajectory and its uncertainty in non-stationary time series.  Motivated by Evolutionary Spectra (ES) theory, we propose \textbf{TimeES}, a general framework that enables probabilistic and deterministic forecasting via the evolutionary spectra theory.  Specifically, we derive a parameterizable evolutionary spectra formulation, recasting non-stationary random process modeling as learning an evolving representation modulated by random variables. Furthermore, we reduce the complexity of the estimated spectra from $\mathcal{O}(NM)$ to $\mathcal{O}(NK)$, where $K \ll M/2$, by exploiting Hermitian symmetry and spectral energy sparsity for frequency selection. Based on a simple linear backbone, our proposed TimeES achieves consistent state-of-the-art performance across both deterministic and probabilistic forecasting tasks, with high efficiency and interpretability. Code is available at: \url{https://github.com/wwy155/TimeES}.
\end{abstract}

\section{Introduction}
\label{submission}
General time series forecasting involves both deterministic forecasting and uncertainty estimation, and plays a critical role in various domains, such as traffic~\cite{ermagun2018spatiotemporal,liu2023spatio} and finance~\cite{duan2022factorvae}. The frequency domain offers a natural way to model trend and periodic patterns; however, under non-stationarity, these patterns often evolve rapidly and are entangled with uncertainty~\cite{liu2022non}, posing significant challenges.

Regarding deterministic forecasting, current frequency-domain approaches typically rely on analytically derived Fourier features (e.g., Fourier transform (FT)~\cite{cohen1995time} or short-time Fourier transform (STFT)), and then apply models to learn mappings from these features to predictions~\cite{xu2023fits, yi2024frequency}. However, because the Fourier features are obtained analytically, these methods remain bound by the intrinsic assumptions of a locally invariant spectrum~\cite{priestley1988spectral}. Consequently, they inherently fail to capture the spectra evolution that characterizes non-stationary time series \cite{ye2024frequency}.

In parallel, many existing probabilistic forecasting methods inherit implicit stationarity or i.i.d. assumptions from deep generative frameworks originally designed for computer vision~\cite{su2025diffusion}, while few works explore the fundamental theoretical framework specifically designed for modeling non-stationary random processes in a natural and principled manner. In real-world time series, data often exhibit structured deterministic dynamics, such as fixed periodic patterns~\cite{fan2024deep}, that are highly predictable and should not be absorbed into aleatoric uncertainty~\cite{li2025diffusion}. By conflating these systematic components with noise, such methods may misattribute the spectral component to random fluctuations, potentially failing to distinguish deterministic patterns from stochastic variations~\cite{chen2024probabilistic}.

To illustrate this, we present a case study on the ETTm1 dataset. As shown in Figure~\ref{fig:introduction}, the signal exhibits time-varying frequency components and substantial stochastic fluctuations. However, the FFT computed over the entire sequence fails to capture time-varying spectral contents; besides, the STFT  suffers from spectral leakage  and limited frequency resolution, resulting in 
 spikes between the trend and daily spectrum and a blurred spectrogram (as seen in Figure~\ref{fig:introduction} (b)); for probabilistic forecasting, even recent state-of-the-art methods such as TMDM~\cite{li2024tmdm} and NsDiff~\cite{ye2025non} erroneously treat periodic components as stochastic part, leading to poor performance.

 \begin{wrapfigure}{r}{0.5\linewidth}
 \centering
\vspace{-2pt}
\includegraphics[width=1\linewidth]{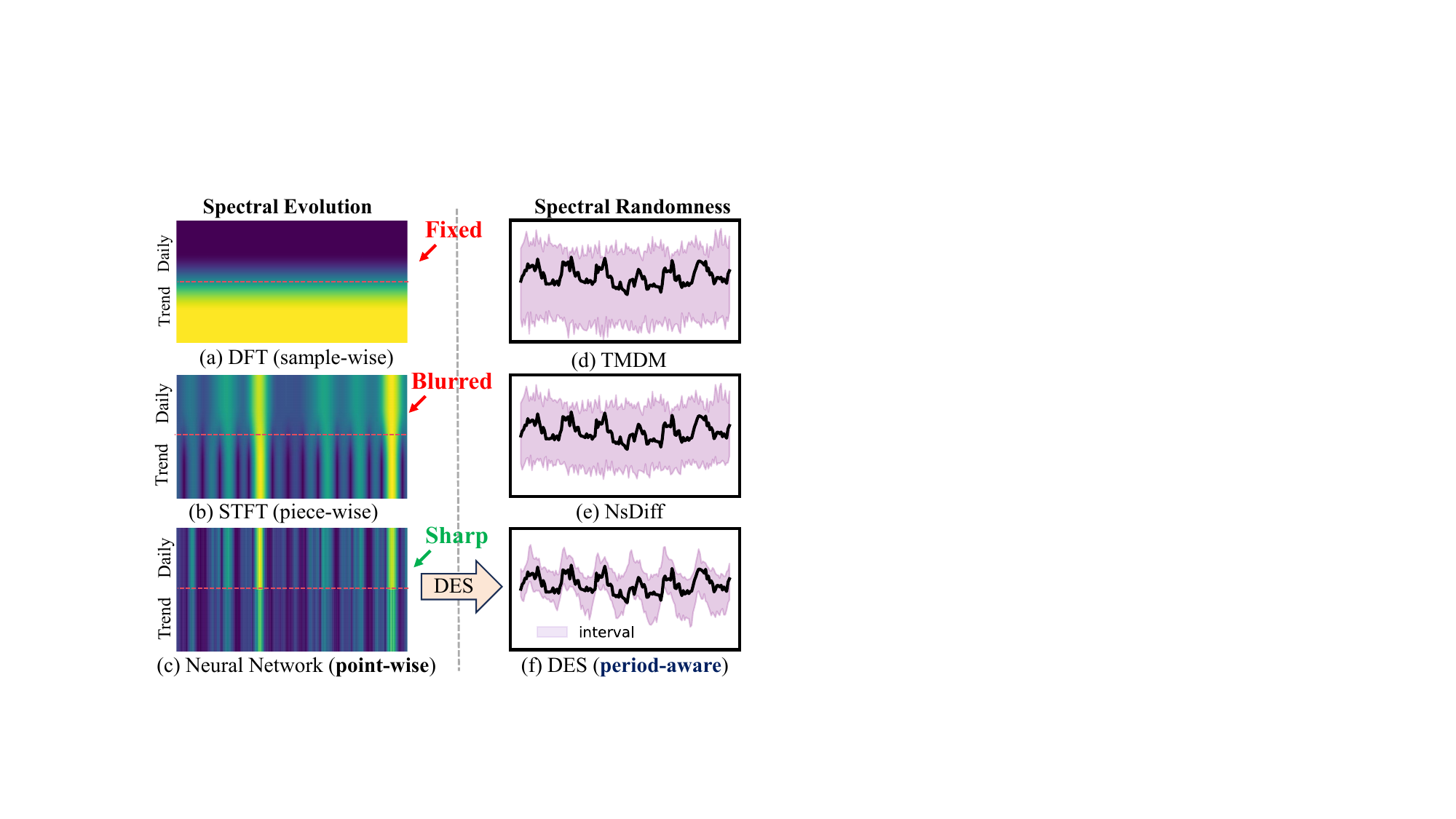}
\caption{A case study on the ETTm1 using 7 days samples with daily and trend spectrum~(0 and 1 frequency). (a-c): spectra obtained by different methods; (d-f): probabilistic forecasting results of different methods. TimeES produces point-wise spectra via neural network and quantify period-aware uncertainty interval.}
  \label{fig:introduction}
\end{wrapfigure}

To address these limitations, we turn to evolutionary spectra~(ES) theory~\cite{priestley1988spectral}, a cornerstone of non-stationary stochastic time series analysis that theoretically defines a time-varying spectrum but has remained underexploited in practice due to the lack of reliable methods for estimating the evolutionary spectra~\cite{benowitz2015determining}. In particular, we derive a parameterized formulation of the evolutionary spectra, which can be interpreted as both a decompositable and generative model and thus the evolutionary spectra can be effectively estimated as a representation learning problem. Building upon the formulation, we propose TimeES, a foundational forecasting framework that offers both physical interpretable decomposition and  probabilistic uncertainty quantification capabilities. As shown in Figure~\ref{fig:introduction}, TimeES yields a sharp and temporally resolved spectral estimate~(Figure ~\ref{fig:introduction}~(c)) and simultaneously captures the intrinsic randomness embedded in periodic patterns~(Figure ~\ref{fig:introduction}~(f)). 

In summary, our contributions are: (1) Motivated by the evolutionary spectra theory, we formulate \textbf{D}iscrete \textbf{E}volutionary \textbf{S}pectra~(\textbf{DES}), a parameterized point-wise generative model for time series,  in which the time-varying spectral density is explicitly represented and thus can be estimated directly from data.  (2) We propose \textbf{TimeES}, a framework that estimates and predicts evolutionary spectra using DES which supports both probabilistic generation and interpretable decomposition, thereby enabling both deterministic and probabilistic forecasting. We reduce the complexity of the estimation of the evolutionary spectra from $\mathcal{O}(NM)$ to $\mathcal{O}(NK)$, where $K \ll M/2$ (Section~\ref{Sec:4.2}) by exploiting Hermitian symmetry and spectral energy sparsity. (3) Through extensive experiments, we demonstrate that TimeES achieves consistent state-of-the-art performance in probabilistic and deterministic forecasting tasks. Notably, on the most challenging probabilistic forecasting benchmarks, TimeES outperforms recent baselines \textbf{17.23\%} on average with high efficiency, highlighting its superior ability to forecast time series.

\section{Related work}

\subsection{Evolutionary Spectra for Non-stationary Process}
To formally account for non-stationarity, Priestley proposed evolutionary spectra (ES) theory~\cite{priestley1965evolutionary,priestley1988spectral}. However, the ES is fundamentally non-identifiable due to infinite many valid solutions. There exist some techniques to estimate the ES, though these depend on very limiting conditions such as local stationarity or narrow-bandness~\cite{schillinger2010accurate, spanos2004evolutionary, spanos2005stochastic, von1996wavelet}. Recent approaches rely on perturbation-based optimization methods~\cite{benowitz2015determining}, requiring a strict initialization. Our proposed method estimates the ES directly from the data, by formulating the ES estimation as a representation learning problem.
\subsection{Deterministic Time Series Forecasting}
Recent years have witnessed significant advances in deep learning for deterministic time series forecasting, architectures such as RNNs~\cite{du2021adarnn} and Transformers~\cite{zhou2021informer} have demonstrated strong performance by capturing temporal dependencies through sequential or attention-based mechanisms~\cite{jiang2023spatio, liu2023itransformer, wu2021autoformer, wu2022timesnet}.  To capture the inherent periodic patterns, recent frequency-aware models~\cite{xu2023fits, yi2024frequency, zhou2022fedformer} explicitly incorporate spectral components to improve forecasting performance and efficiency. To further address non-stationarity, recent works explicitly model distributional or frequency shifts~\cite{kim2021reversible, ye2024frequency}. To model complex dynamics beyond deterministic assumptions, Koopman operator theory~\cite{liu2024koopa, wang2022koopman} and Dynamic Mode Decomposition (DMD)~\cite{schmid2022dynamic} offer powerful alternatives by providing linear representations of nonlinear dynamical systems in lifted feature spaces. Koopman models also learn continuous spectra~\cite{lusch2018deep} and admit probabilistic formulations~\cite{mallen2021deep, naiman2024generative, zheng2025koonpro}, but their spectra decompose an evolution operator in a lifted space; evolutionary spectra~\cite{priestley1965evolutionary} instead model the time-varying power spectra of the observed non-stationary random process itself, making them well-suited for data with non-stationary stochasticity (Appendix~\ref{apdx:koopman_dmd}).


\subsection{Probabilistic Time Series Forecasting}
Early efforts built on GANs~\cite{mogren2016c, yoon2019time} or VAEs~\cite{desai2021timevae} to capture the temporal structure, but their training instability has motivated a shift toward other alternative generative paradigms. Specifically, diffusion-based models, have recently been adapted to time series due to their strong performance in modeling complex distributions~\cite{chen2023provably, rasul2021autoregressive, shen2023non, shi2023diffusion, tashiro2021csdi, yuan2024diffusion}. However, most of these methods assume stationary distributions. To address this, NsDiff~\cite{ye2025non} allows a varying endpoint distribution, thereby better capture non-stationarity.  Nevertheless, nearly all existing approaches rely on methods originally developed for computer vision, which assume that data are generated i.i.d. In contrast, TimeES is explicitly grounded in the premise that data arise from a non-stationary random process.

We discussed related works concerning major methodology; refer to Appendix~\ref{apdx:full_related_works} for more details.

\section{Preliminary}

Throughout this paper, we use uppercase letters (e.g., X) to denote random variables, lowercase letters for scalars, and lowercase boldface letters for vectors. Specifically, we denote an input sample as $\mathbf{x}\in\mathbb{R}^N$  and the corresponding output sample as $\mathbf{y}\in\mathbb{R}^O$. We begin by considering a continuous-parameter stochastic process $\{X_t : t \in \mathbb{R}\}$, and review key preliminaries from non-stationary stochastic process and   the evolutionary spectra theory.



In the stationary case, consider a continuous-time stochastic process $\{X_t\}$ (e.g., white noise). Since realizations of such a process are typically not absolutely integrable over $(-\infty, \infty)$, the classical Fourier integral does not exist. The following theorem provides a general extension by introducing a random spectral measure defined as the followings:

\begin{theorem}[Spectrum Representation Theorem] The spectrum representation of a zero-mean, continuous, stationary process $\{X_t\}$ is:
    \begin{equation}
            X_t=\int_{-\infty}^{\infty} e^{i \omega t} d Z(\omega), 
    \label{eq:spectrum_representation_theorem}
    \end{equation}
    \label{theorem:spectral_representation}
\end{theorem}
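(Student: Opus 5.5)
We prove the statement via the Hilbert-space isometry method (Kolmogorov--Cramér). Assume $\{X_t\}$ is zero-mean, second-order ($\mathbb{E}|X_t|^2<\infty$), weakly stationary and mean-square continuous. Write $R(\tau)=\mathbb{E}[X_{t+\tau}\overline{X_t}]$ for the autocovariance, and read the integral in \eqref{eq:spectrum_representation_theorem} as a mean-square stochastic integral against a random measure $Z$ with orthogonal increments.

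\textbf{Step 1 (spectral measure).} Mean-square continuity makes $R$ continuous. Stationarity makes $R$ positive definite, since $\sum_{j,k}c_j\bar c_k R(t_j-t_k)=\mathbb{E}|\sum_j c_jX_{t_j}|^2\ge 0$. Bochner's theorem then gives a unique finite positive Borel measure $F$ on $\mathbb{R}$ with
\begin{equation*}
R(\tau)=\int_{-\infty}^{\infty} e^{i\omega\tau}\,dF(\omega).
\end{equation*}

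\textbf{Step 2 (isometry).} Let $H_X$ be the closed linear span of $\{X_t\}$ in $L^2(\Omega)$, and let $L^2(F)$ be the square-integrable functions with respect to $F$. Define $U$ on finite sums by
\begin{equation*}
U\Big(\sum_j c_j e^{i\omega t_j}\Big)=\sum_j c_j X_{t_j}.
\end{equation*}
Step 1 shows that $U$ preserves inner products: $\langle e^{i\omega s},e^{i\omega t}\rangle_{L^2(F)}=R(s-t)=\langle X_s,X_t\rangle$. So $U$ is well defined and extends to an isometry from the closure of the trigonometric polynomials onto $H_X$. Trigonometric polynomials are dense in $L^2(F)$ for a finite measure $F$: approximate indicators of bounded intervals by continuous compactly supported functions, then uniformly on compacts by trigonometric polynomials, and control the tails using finiteness of $F$. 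Hence $U:L^2(F)\to H_X$ is a unitary isomorphism.

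\textbf{Step 3 (random measure).} Set $Z(\omega)=U(\mathbf 1_{(-\infty,\omega]})$. The isometry gives
\begin{equation*}
\mathbb{E}\big[(Z(b)-Z(a))\overline{(Z(d)-Z(c))}\big]=F((a,b]\cap(c,d]),
\end{equation*}
so $Z$ has orthogonal increments with structure measure $F$. Then:
\begin{itemize}
\item For step functions $g=\sum g_k\mathbf 1_{(a_k,b_k]}$, define $\int g\,dZ=\sum g_k\,(Z(b_k)-Z(a_k))$. This equals $U(g)$.
\item By density of step functions and the isometry, $\int g\,dZ=U(g)$ for every $g\in L^2(F)$.
\item Taking $g=e^{i\omega t}$ gives $X_t=U(e^{i\omega t})=\int e^{i\omega t}\,dZ(\omega)$.
\end{itemize}
Each $Z(\omega)$ lies in $H_X$, so $\mathbb{E}Z=0$. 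Uniqueness of $Z$ up to null sets follows from uniqueness of $F$ together with the isometry.

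\textbf{Main obstacle.} The delicate points are the density argument in Step 2 and the rigorous construction of the stochastic integral in Step 3, including the continuity conventions needed to fix $Z$ uniquely. Bochner's theorem is taken as given. We construct the integral concretely through the unitary map $U$, rather than through pathwise Fourier transforms, which do not exist here.
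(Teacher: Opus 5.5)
Your argument is correct and takes a genuinely different route from the paper. The paper's appendix gives a heuristic truncation argument in the style of Priestley. It restricts the process to $[-T,T]$ and notes that the truncated Fourier transform $G_T(\omega)$ grows like $\sqrt{T}$. It then passes to the integrated transform $Z_T(\omega)=\frac{1}{\sqrt{2\pi}}\int_{-\infty}^{\omega}G_T(\theta)\,d\theta$, assumes that the density $h(\omega)=\lim_{T\to\infty}\mathbb{E}[|G_T(\omega)|^2/2T]$ exists, and asserts without proof that $Z_T$ converges ``in an appropriate probabilistic sense'' to an orthogonal-increment process with $\mathbb{E}|dZ(\omega)|^2=h(\omega)\,d\omega$.

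You instead use the Kolmogorov--Cramér Hilbert-space construction: Bochner's theorem, the unitary extension of $e^{i\omega t}\mapsto X_t$ from $L^2(F)$ onto $H_X$, and $Z$ defined as the image of indicators. The stochastic integral and $X_t=\int e^{i\omega t}\,dZ(\omega)$ then follow directly from the isometry.

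Your route buys rigor and generality. The limit the paper leaves unproved is replaced by an explicit construction. No spectral density is needed, so line spectra (purely periodic components) are covered, whereas the paper's limit defining $h$ diverges at such frequencies. Uniqueness comes for free, and the paper's relation $\mathbb{E}|dZ|^2=h\,d\omega$ is recovered as $\mathbb{E}|dZ|^2=dF$ when $F$ is absolutely continuous.

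The paper's route buys intuition that it reuses later. It explains why pathwise Fourier integrals fail and why increments scale like $\sqrt{d\omega}$. That is exactly the scaling $\Delta W_k=\sqrt{\Delta\omega}\,W_k$ used in the discretization behind Theorem~\ref{theorem:main_theorem}.

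One detail in your density step deserves an explicit sentence. Uniform approximation on compacts does not by itself control the tails. Take Fejér means of the $2L$-periodic extension of a continuous $g$ supported in $(-L,L)$. These are trigonometric polynomials bounded by $\sup|g|$ on all of $\mathbb{R}$, so the tail contribution is at most $(\sup|g|)^2F(\{|\omega|>L\})$.
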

where \( Z(\omega) \) is a \textit{complex random measure} satisfying \( dZ(\omega) = \mathcal{O}(\sqrt{d\omega}) \). \textit{Particularly, $X_t$ admits a white noise process $\mathcal{N}_t(\mathbf{0}, \mathbf{I})$ if we sample $dZ(\omega)$ from a standard complex Gaussian $\mathcal{CN}(\mathbf{0}, \mathbf{I})$.}  The transform expressed is called the Fourier–Stieltjes transform, as opposed to the classical Fourier transform. To facilitate  subsequent results, we provide a brief proof and illustration of Theorem~\ref{eq:spectrum_representation_theorem} in Appendix~\ref{apdx:proof:spectral_representation} (see \citet{priestley1988spectral}, Sections~4.7–4.11, for detailed mathematical treatment). 





In real-world settings, data-generating processes are often non-stationary, as a process is deemed non-stationary when its statistical and spectral properties vary over time. To address this, \citet{priestley1988spectral} proposed evolutionary spectral theory, which extends Equation~\eqref{eq:spectrum_representation_theorem} to the non-stationary regime. Specifically, for a non-stationary process with time-varying distributions and frequencies, it is tempting to define a varying quantity as proposed as:


 \begin{definition}[Evolutionary Spectra] Given a non-stationary random process $\{X_t\}$, assuming a deterministic square-integrable function $A(t, \omega) e^{i \omega t}$ and a random measure $dZ(\omega)$, $X_t$ is defined as:
    \begin{equation}
    X_t=\int_{-\infty}^{\infty} A(t, \omega) e^{i \omega t} d Z(\omega),
    \label{eq:evolutionary_spectra1}
    \end{equation}
    \label{theorem:evolutionary_spectra}
\end{definition}
where $A(t, \omega)$ is the complex amplitude and the energy spectral density is defined as $dS(t, \omega) = |A(t, \omega)|^2$. The representation in Definition~\ref{theorem:evolutionary_spectra} provides a principled foundation for modeling non-stationary random process through time-varying spectral characteristics. Identifying evolutionary spectra has significant practical importance~\cite{grigoriu1993spectral}, and is critical for effective forecasting, as they reflect the non-stationary random structures across the Fourier spectra with random consideration. However, in most scenarios, the true evolutionary spectra are unknown and admit infinite amounts of solutions, which poses a major challenge~\cite{spanos2004evolutionary}. In this paper, we propose a tractable formulation of this equation so the ES can be estimated through representation learning.

\section{Methodology}
In this section, we present the proposed framework, TimeES. We first discuss the main theoretical result underlying the proposed generative model, DES, which reformulates evolutionary spectra estimation as a tractable representation learning problem. Next, we describe how this model effectively estimates the evolutionary spectra and leverages it for general time series forecasting tasks. Then, we discuss how TimeES can degenerate to the DFT and STFT. An overview of the TimeES framework is provided in Figure~\ref{fig:overview}.

\begin{figure*}[!h]
\begin{center}
\centerline{\includegraphics[width=1\linewidth]{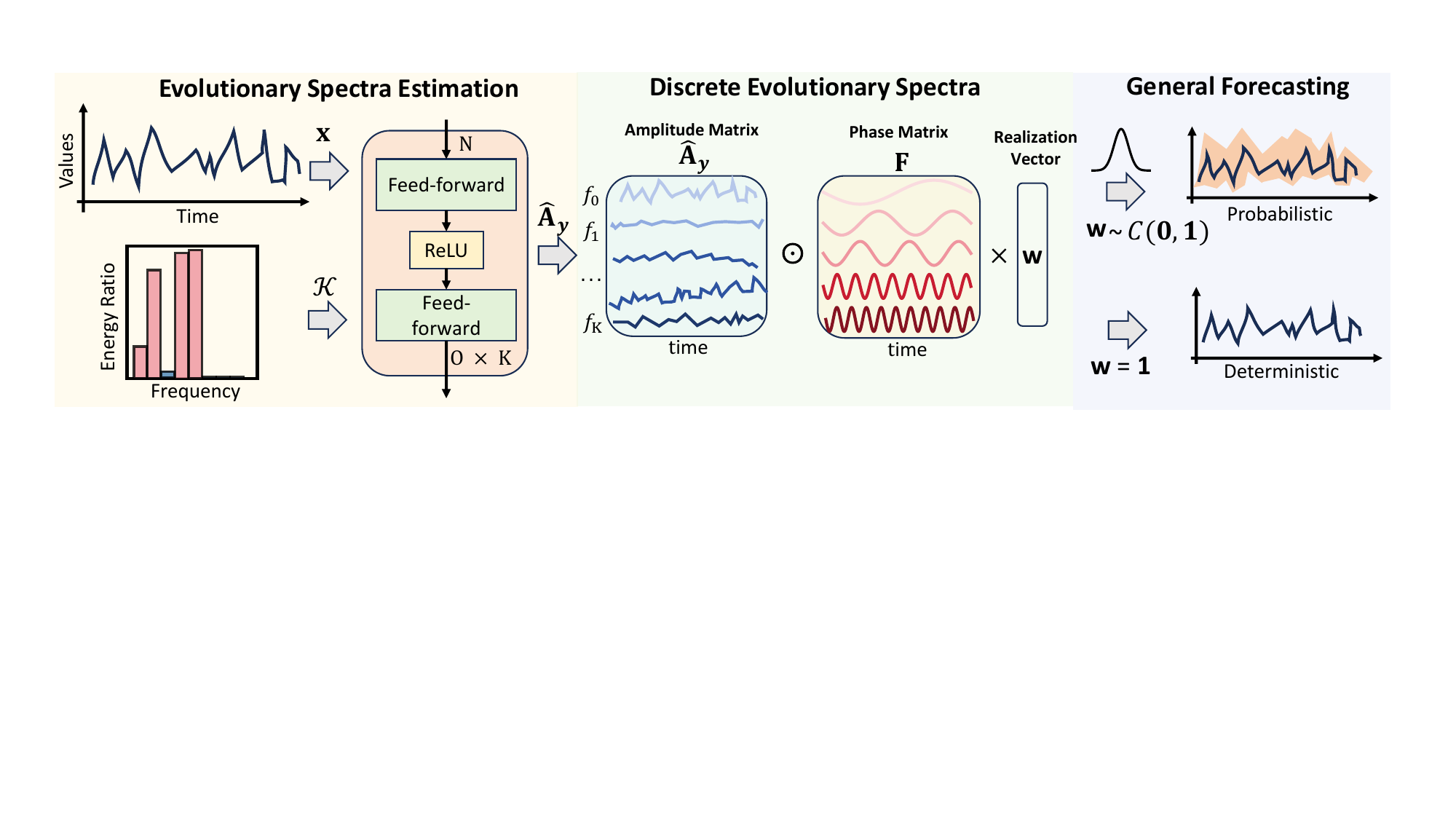}}
\caption{The overview of TimeES. TimeES estimates the interpretable amplitude matrix $\mathbf{A}$ by a neural network; then the DES is used to synthesize time series with a physical phase matrix $\mathbf{F}$ and a probabilistic realization vector $\mathbf{w}$, enabling general time series forecasting.}
\label{fig:overview}
\end{center}
\vskip -0.3in 
\end{figure*}

\subsection{Discrete Evolutionary Spectra}
Definition~\ref{theorem:evolutionary_spectra} establishes a direct relationship between the instantaneous amplitude function $A(t, \omega)$ and the continuous-time process $X_t$ at each time $t$. Motivated by this connection, we formulate Definition~\ref{theorem:evolutionary_spectra} as a parameterized generative model, which is the main theorem of this paper:


\begin{theorem}[Discrete Evolutionary Spectra]
Given discretely sampled data $\{X_n\}_{n=1}^N$, define a set of discrete frequencies 
$
\omega_k = \frac{2\pi k}{M}, \quad k = 0, 1, \ldots, M-1.
$
Then $X_n$ admits:
\begin{equation}
\label{eq:discrete_evolutionary_rep}
X_n = \frac{1}{\sqrt{M}} \sum_{k=0}^{M-1} A\!\left(n, \omega_k\right) W_k \, e^{i 2\pi k n / M},
\end{equation}
\label{theorem:main_theorem}
\end{theorem}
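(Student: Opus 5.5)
The plan is to obtain the discrete formula by discretizing Definition~\ref{theorem:evolutionary_spectra} in the same way Theorem~\ref{theorem:spectral_representation} is discretized in the stationary case. Throughout I read the statement as a \emph{representation} claim. For samples $\{X_n\}_{n=1}^N$ observed at integer times, there exist a deterministic complex amplitude $A(n,\omega_k)$ and random variables $W_k$, which in the canonical case are i.i.d.\ standard complex Gaussian $\mathcal{CN}(0,1)$, such that \eqref{eq:discrete_evolutionary_rep} holds, at least in distribution or in second-order (mean-square) sense.

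\textbf{Step 1: reduce to one period.} Integer sampling aliases frequencies. Since $e^{i\omega n}=e^{i(\omega+2\pi)n}$ for $n\in\mathbb{Z}$, the integral in \eqref{eq:evolutionary_spectra1} folds onto $[0,2\pi)$. The folded amplitude is absorbed into a new $A(n,\omega)$ and a new random measure $dZ$ on the circle. If the folded measure is not orthogonal across aliases, I would first restrict to a band-limited process.

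\textbf{Step 2: Riemann--Stieltjes discretization.} Partition $[0,2\pi)$ into the $M$ cells $[\omega_k,\omega_{k+1})$, each of width $\Delta\omega=2\pi/M$. On each cell, approximate $A(n,\omega)e^{i\omega n}$ by its value at $\omega_k$, which gives $e^{i2\pi kn/M}$. Then
\[
X_n\approx\sum_{k=0}^{M-1}A(n,\omega_k)e^{i2\pi kn/M}\,\Delta Z_k,\qquad \Delta Z_k=\int_{\omega_k}^{\omega_{k+1}}dZ(\omega).
\]
The orthogonal-increment property behind Theorem~\ref{theorem:spectral_representation} ($dZ=\mathcal{O}(\sqrt{d\omega})$, with $\mathbb{E}|dZ|^2\propto d\omega$) makes the $\Delta Z_k$ uncorrelated with variance proportional to $1/M$. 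Writing $\Delta Z_k=W_k/\sqrt{M}$ therefore yields unit-variance, uncorrelated $W_k$ and the prefactor $1/\sqrt{M}$. Under a Gaussian $dZ$ the $W_k$ are i.i.d.\ $\mathcal{CN}(0,1)$; any constant such as $2\pi$ is absorbed into $A$. As a sanity check, taking $A\equiv1$ recovers the white-noise statement following Theorem~\ref{theorem:spectral_representation}, since a unitary DFT of i.i.d.\ $\mathcal{CN}(0,1)$ variables is again i.i.d.\ $\mathcal{CN}(0,1)$.

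\textbf{Step 3: make the representation exact.} For finitely many samples I would argue exactness directly rather than only through the limit. Fix $M\ge N$, and fix any realization or covariance target. The map $(A(n,\cdot))_k\mapsto X_n$ has $M$ free complex parameters per time index $n$. A per-$n$ choice of $A(n,\omega_k)$ can therefore match any prescribed covariance $\mathbb{E}X_nX_m^*=\frac1M\sum_k A(n,\omega_k)\overline{A(m,\omega_k)}e^{i2\pi k(n-m)/M}$. This is a Gram factorization: the vectors $a_n=(A(n,\omega_k)e^{i2\pi kn/M})_k/\sqrt M\in\mathbb{C}^M$ must have Gram matrix equal to the target covariance, and such vectors exist whenever $M\ge N$ because the covariance is positive semidefinite. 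This also exhibits the non-uniqueness (any unitary rotation of the $a_n$ works), which the paper later resolves by learning.

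\textbf{Main obstacle.} I expect the hardest step to be Step~2's error control: showing that the discretization error vanishes in mean square as $M\to\infty$. This needs some regularity of $A(n,\cdot)$ in $\omega$, for example continuity or bounded variation, together with $\int|A|^2\,dS<\infty$. I would first try dominated convergence applied to $\mathbb{E}|X_n-X_n^{(M)}|^2=\int|A(n,\omega)e^{i\omega n}-\text{step approx}|^2 d\mu(\omega)$. If that stalls, I would fall back on the exact finite-dimensional Gram argument of Step~3 as the proof proper.
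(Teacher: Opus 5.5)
Your Steps 1--2 are essentially the paper's proof. The paper starts directly from a discrete-parameter representation $X_n=\frac{1}{\sqrt{2\pi}}\int_{-\pi}^{\pi}e^{i\omega n}A(n,\omega)\,dW(\omega)$ with white $dW$. It then relabels $\omega_k>\pi$ as $\omega_k-2\pi$, forms the Riemann sum, writes $\Delta W_k=\sqrt{2\pi/M}\,W_k$, and lets the $1/\sqrt{2\pi}$ cancel instead of absorbing $2\pi$ into $A$. It stops at ``$\approx$'' without any error analysis. Your dominated-convergence step (bounded $A(n,\cdot)$, continuous $\mu$-a.e., finite spectral measure) would therefore upgrade it to mean-square convergence as $M\to\infty$. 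That is still only a limit, with $W_k=W_k^{(M)}$ changing with $M$.

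Your Step 3 has no counterpart in the paper, and it is the only argument in either that gives the stated equality at finite $M$. For $M\ge N$, take any factorization $\Sigma=LL^{*}$ of the target covariance and pad $L$ with zero columns. Setting $A(n,\omega_k)=\sqrt{M}\,L_{nk}\,e^{-i2\pi kn/M}$ then reproduces $\Sigma$ exactly, and also the law when the target is a zero-mean circular complex Gaussian. The price is that this $A$ is an arbitrary factor of $\Sigma$, unrelated to any given continuous amplitude $A(t,\omega)$. So Steps 2 and 3 establish two different readings of ``admits''.

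A few small corrections:
\begin{enumerate}
\item In Step 1 the obstruction to folding is not orthogonality. The sum $\sum_j A(n,\omega+2\pi j)\,dZ(\omega+2\pi j)$ cannot be written as $\tilde A(n,\omega)\,d\tilde Z(\omega)$ with a single random measure whenever the ratios of the aliased amplitudes vary with $n$. Your band-limiting fallback, or the paper's discrete-time starting point, removes this.
\item The claim $\mathbb{E}|\Delta Z_k|^2\propto 1/M$ needs an absolutely continuous spectral measure whose density is absorbed into $A$.
\item Matching a single realization in Step 3 is vacuous. $A$ would then depend on the draw of $W$ and would no longer be deterministic.
\item For real-valued series, circular $W_k$ can match $\mathbb{E}[X_n\overline{X_m}]$ but produce complex $X_n$. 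Equality in law therefore requires real $W_k$ or conjugate-paired frequencies. This is the same tension the paper's Hermitian-symmetry theorem runs into.
\end{enumerate}
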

where $W_k \sim \mathcal{CN}(\mu_{W_k},\sigma_{W_k})$ are independent complex Gaussian random variables, and $A(n, \omega_k)$ denotes the evolutionary amplitude spectrum at time $n$ and frequency $\omega_k$.  We provide the proof in Appendix \ref{section:proof:maintheorem}. The parameter $M$ can be seen as the frequency length in FFT or STFT, we thus follow this convention in our experiments to use $M=N$ to cover the full discrete frequencies. Note that, this equation describes a single realization of the stochastic process $\{X_n\}$ corresponding to a particular draw of the complex random variables $W_k$. By sampling $W_k$ multiple times, we can perform generative modeling of the signal. \textit{In particular, in the special case $A=1$, $X_n$ is a random variable following a standard normal distribution as stated in Definition~\ref{theorem:evolutionary_spectra}. }
 
Importantly, unlike many prior approaches that adopt score-based generative models from computer vision which typically assume that data are drawn from instance-independent distributions, DES instead assumes that the observed time series is sampled from an oscillatory non-stationary process. This formulation is therefore far better suited to capturing the inherent non-stationary distribution in time series data. Presenting Theorem~\ref{theorem:main_theorem} in matrix form will be more convenient for later text:




\begin{proposition}
Theorem~\ref{theorem:main_theorem} can be equivalently expressed as
\begin{equation}
    \mathbf{x} =  \frac{1}{\sqrt{M}}\left( \, \mathbf{A} \odot \mathbf{F} \right) \mathbf{w},\label{eq:main_theorem}
\end{equation}
\label{proposition:matrix_form_of_main_theorem}
\end{proposition}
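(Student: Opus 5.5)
The plan is to fix the matrix objects so that entry $n$ of the right-hand side of \eqref{eq:main_theorem} is exactly the scalar sum in \eqref{eq:discrete_evolutionary_rep}, and then check that one equality. The proposition asserts no definitions of its own, so I will choose them to match Theorem~\ref{theorem:main_theorem}. Let $\mathbf{x} = (X_1,\dots,X_N)^\top \in \mathbb{C}^N$; it is real-valued when the representation is real. Index frequencies by $k=0,\dots,M-1$ and define:
\begin{itemize}
\item the amplitude matrix $\mathbf{A}\in\mathbb{C}^{N\times M}$ with entries $\mathbf{A}_{n,k} = A(n,\omega_k)$;
\item the phase (inverse-DFT) matrix $\mathbf{F}\in\mathbb{C}^{N\times M}$ with entries $\mathbf{F}_{n,k} = e^{i 2\pi k n/M}$;
\item the realization vector $\mathbf{w} = (W_0,\dots,W_{M-1})^\top$, whose entries are independent $\mathcal{CN}(\mu_{W_k},\sigma_{W_k})$.
\end{itemize}
Here $\odot$ is the Hadamard product, so $(\mathbf{A}\odot\mathbf{F})_{n,k} = A(n,\omega_k)e^{i2\pi kn/M}$.

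The only computation is the definition of a matrix–vector product:
\begin{equation*}
\Bigl[\tfrac{1}{\sqrt{M}}(\mathbf{A}\odot\mathbf{F})\mathbf{w}\Bigr]_n = \tfrac{1}{\sqrt{M}}\sum_{k=0}^{M-1} (\mathbf{A}\odot\mathbf{F})_{n,k}\, W_k = \tfrac{1}{\sqrt{M}}\sum_{k=0}^{M-1} A(n,\omega_k)\, W_k\, e^{i2\pi kn/M}.
\end{equation*}
This is exactly the right-hand side of \eqref{eq:discrete_evolutionary_rep}, which by Theorem~\ref{theorem:main_theorem} equals $X_n$. The identity holds for every $n=1,\dots,N$, so the two vectors agree entrywise and \eqref{eq:main_theorem} follows.

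For the converse direction of "equivalently," I will read the same identity backwards. Any $\mathbf{A}$, together with the fixed $\mathbf{F}$ and a $\mathbf{w}$ of independent complex Gaussians, defines the function $A(n,\omega_k) := \mathbf{A}_{n,k}$. Substituting this into \eqref{eq:discrete_evolutionary_rep} recovers the scalar form, since the map $A(\cdot,\omega_\cdot) \leftrightarrow \mathbf{A}$ is a bijection.

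The only real obstacle is bookkeeping rather than mathematics: the time index runs over $n=1,\dots,N$ while the frequency index runs over $k=0,\dots,M-1$. I will state the indexing of $\mathbf{F}$ explicitly so that the phase exponent $2\pi kn/M$ agrees with the theorem. I will also remark that with $M=N$ and $\mathbf{A}\equiv 1$, the product $\frac{1}{\sqrt{M}}\mathbf{F}$ is the unitary inverse DFT matrix. This serves as a sanity check of the normalization: it gives $\mathbf{x}=\frac{1}{\sqrt M}\mathbf{F}\mathbf{w}$, consistent with the white-noise remark after Theorem~\ref{theorem:main_theorem}.
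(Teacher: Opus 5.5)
Your proposal is correct and matches the paper's treatment: the paper also simply defines $\mathbf{x}$, $\mathbf{w}$, $\mathbf{A}_{n,k}=A(n,\omega_k)$ and $\mathbf{F}_{n,k}=e^{i2\pi kn/M}$, so that the entrywise matrix--vector product reproduces the scalar sum of Theorem~\ref{theorem:main_theorem}. The only difference is that the paper indexes $\mathbf{x}=[x_0,\dots,x_{N-1}]^\top$ from zero rather than from one, which is immaterial.
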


where $\mathbf{x} = [x_0, \dots, x_{N-1}]^\top \in \mathbb{C}^N$,  $\mathbf{w} = [W_0, \dots, W_{M-1}]^\top \in \mathbb{C}^M$,  $\mathbf{A}_{n,k} = A(n, \omega_k)$, $\mathbf{F}_{n,k} = e^{i 2\pi k n / M}$. In Equation~\eqref{eq:main_theorem}, $\odot$ denotes the Hadamard product.  To this end, we establish a generative model for non-stationary time series where $\mathbf{F}$ represents the physical frequency basis, $\mathbf{A}$ assigns evolutionary amplitude to each of these components; $\mathbf{w}$ encodes the probabilistic nature of the signal, so that any observed realization is a sample path of the underlying random process.

\subsection{Probabilistic and Deterministic Forecasting via Evolutionary Spectra}\label{Sec:4.2}
Currently, there is no analytic or numerical way to determine evolutionary spectra~\cite{benowitz2015determining}. Given that Theorem~\ref{theorem:main_theorem} establishes a point-wise and tractable relationship between the evolutionary spectrum and the sampled time series $X_n$, we consider a natural learning strategy: train a neural network to estimate $\mathbf{A}$ by matching the reconstructed signal with observed data. In principle, this approach requires a network parameterization with an output size of $\mathcal{O}(NM)$; and to capture all possible spectral components present in the data, one would set $M \geq N$ (e.g., $M = N$ for a full DFT basis), resulting in an output size of at least $\mathcal{O}(N^2)$. However, we show that by exploiting two inherent properties of real-world time series (Hermitian symmetry and spectral sparsity) the computational complexity can be reduced to $\mathcal{O}(NK)$, where $K \ll M/2$.

 







\textbf{Hermitian Symmetry}. Since most real-world time series are inherently real-valued, this requirement imposes a structural constraint on the complex amplitude spectrum $A(n, \omega_k)$ in Theorem~\ref{theorem:main_theorem}. Specifically, \textit{Hermitian symmetry}~\cite{roberts1987digital} must hold across the frequency dimension to ensure that the inverse spectral synthesis yields a real-valued sequence, which can be defined as:
\begin{theorem}
Let the signal \( X_n \) be synthesized as Equation~\eqref{eq:discrete_evolutionary_rep} and define $ B(n, k) = A(n, \omega_k) W_k $. Then \( X_n \in \mathbb{R} \) for all \( n \) if and only if the following Hermitian symmetry condition holds for all \( n \) and all \( k = 0, 1, \dots, M-1 \):
\begin{equation}
\label{eq:hermitian}
B(n, \omega_k) = \overline{B(n, \omega_{(M - k) \bmod M})}.
\end{equation}
\label{theorem:hermitian}
\end{theorem}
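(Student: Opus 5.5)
The plan is to work with the scalar identity
\[
X_n=\tfrac{1}{\sqrt{M}}\sum_{k=0}^{M-1}B(n,k)\,e^{i2\pi kn/M},
\]
and to use two facts about the twiddle factors: $\overline{e^{i2\pi kn/M}}=e^{-i2\pi kn/M}$, and $e^{-i2\pi kn/M}=e^{i2\pi (M-k)n/M}$ because $e^{i2\pi n}=1$ for integer $n$. Throughout I write $k':=(M-k)\bmod M$. The map $k\mapsto k'$ is an involution of $\{0,\dots,M-1\}$. Its fixed points are $k=0$ and, when $M$ is even, $k=M/2$.

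\textbf{Sufficiency.} Assume Equation~\eqref{eq:hermitian}. Conjugating the synthesis sum gives
\[
\overline{X_n}=\tfrac{1}{\sqrt{M}}\sum_k \overline{B(n,k)}\,e^{i2\pi k' n/M}=\tfrac{1}{\sqrt{M}}\sum_k B(n,k')\,e^{i2\pi k' n/M}.
\]
Reindexing $k\mapsto k'$, which is a bijection, turns the right-hand side back into $X_n$. Hence $X_n=\overline{X_n}\in\mathbb{R}$. Equivalently, the terms pair off into $2\,\mathrm{Re}(\cdot)$, and the self-conjugate bins $k=0$ and $k=M/2$ contribute real values because $B(n,k)$ is then real by Equation~\eqref{eq:hermitian}.

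\textbf{Necessity.} This direction is the main obstacle, and I expect the literal implication to fail. For a fixed $n$, the condition ``$X_n\in\mathbb{R}$'' is a single real linear constraint, namely $\mathrm{Im}\,X_n=0$, on the $M$ complex numbers $B(n,\cdot)$. Equation~\eqref{eq:hermitian}, by contrast, imposes roughly $M$ real constraints. A concrete counterexample: take $M=2$, $n=0$, $B(0,0)=i$ and $B(0,1)=-i$. Then $X_0=0$ is real, yet $B(0,0)$ is not real. So I will not try to derive Equation~\eqref{eq:hermitian} from realness alone.

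Instead I will prove the strongest correct form of the converse: Hermitian symmetry can be assumed without loss of generality. Define the symmetrization
\[
\tilde B(n,k)=\tfrac12\bigl(B(n,k)+\overline{B(n,k')}\bigr).
\]
This $\tilde B$ satisfies Equation~\eqref{eq:hermitian} by construction. Repeating the sufficiency computation on each half gives
\[
\tfrac{1}{\sqrt{M}}\sum_k \tilde B(n,k)\,e^{i2\pi kn/M}=\tfrac12\bigl(X_n+\overline{X_n}\bigr)=\mathrm{Re}\,X_n .
\]
Therefore, whenever every $X_n$ is real, the same signal is synthesized by a Hermitian-symmetric $\tilde B$. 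The resulting statement is: the real sequences produced by Equation~\eqref{eq:discrete_evolutionary_rep} are exactly those produced by coefficients satisfying Equation~\eqref{eq:hermitian}.

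Under that reading the ``iff'' holds, and it is precisely what the paper needs to restrict $\mathbf{A}$ (and $\mathbf{w}$) to roughly $M/2$ free frequencies. I will state this reinterpretation explicitly alongside the counterexample, so that the literal ``only if'' is not claimed.
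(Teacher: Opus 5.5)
Your sufficiency direction is the paper's ($\Leftarrow$) argument essentially verbatim: conjugate, rewrite $e^{-i2\pi kn/M}$ as $e^{i2\pi k'n/M}$, apply \eqref{eq:hermitian}, and reindex by the involution. For necessity you diverge from the paper, and you are right to. The paper's ($\Rightarrow$) proof writes $\overline{X_n}=\frac{1}{\sqrt{M}}\sum_k\overline{B(n,\omega_{k'})}\,e^{i\omega_k n}$. It then equates coefficients with those of $X_n$ on the grounds that $\{e^{i\omega_k n}\}_k$ is orthogonal over $n=0,\dots,M-1$. That step fails here. Orthogonality in $n$ can only identify coefficients that do not depend on $n$, and it also needs $N\ge M$ samples. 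But $B(n,\omega_k)$ varies with $n$, so at each $n$ realness is a single real constraint, exactly as you say. Your $M=2$ example is a valid counterexample, and it is realizable as $A(0,\omega_k)W_k$ for any nonzero draw of the $W_k$. So the literal ``only if'' is false. The paper's argument is sound only in the time-invariant iDFT case of Table~\ref{tab:method_comparison} with $N\ge M$; you could state that as the hypothesis under which the literal equivalence holds.

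Your symmetrization identity is the correct replacement, and it holds for arbitrary $n$-dependent coefficients. It shows that restricting to Hermitian-symmetric $B$ loses no real signals, which is what actually justifies keeping only $\lfloor M/2\rfloor+1$ frequencies. It also shows that taking the real part of a one-sided synthesis is the same as synthesizing with symmetric coefficients.

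One caveat concerns your closing remark about $\mathbf{A}$ and $\mathbf{w}$: your lemma holds per realization, not for the generative model. Take $\mathbf{A}$ fixed and the $W_k\sim\mathcal{CN}$ independent with positive variance. Then $\mathbf{w}\mapsto\mathrm{Im}\,X_n$ is a real-linear map, and it vanishes almost surely only if the $n$-th row of $\mathbf{A}$ is zero. Real sample paths therefore require either coupling $W_{k'}=\overline{W_k}$ (with the matching symmetry of $\mathbf{A}$) or taking the real part. So state your reformulation at the level of $B$, not as a restriction on a fixed $\mathbf{A}$ with independent $W_k$.
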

\vskip -0.1in
See the proof in Appendix~\ref{section:proof:hermitian}. In particular: The DC component (\( k = 0 \)) must be real: \( A(n, \omega_0) \in \mathbb{R} \). If \( M \) is even, the Nyquist component (\( k = M/2 \)) must also be real. Consequently, the entire spectrum is uniquely determined by the amplitudes at the non-redundant frequency indices $\left\{ 0, 1, \dots, \left\lfloor \frac{M}{2} \right\rfloor \right\}$, with total $M_h =\left\lfloor \frac{M}{2} \right\rfloor + 1$ frequencies. By exploiting Hermitian symmetry in Theorem~\ref{theorem:main_theorem}, we only need to estimate half of the frequency components.

\textbf{Spectral energy sparsity}. Moreover, in time series, not all frequency components contain useful information~\cite{donoho1998data}. While many prior methods exploit this property by selecting frequencies based on fixed thresholds~\cite{xu2023fits, zhou2022fedformer} or a pre-specified number of top components~\cite{wu2022timesnet, ye2024frequency}, the number of frequencies $K$ is often heuristic and lacks principled guidance. Motivated by the observation that real-world signals typically concentrate the majority of their energy $ \|\mathbf{\mathbf{x}}\|^2$ in only a few frequencies, we propose an \textit{energy-based frequency selection strategy}. Specifically, we first compute the average periodogram then sort all frequencies by their energy $I(k)$ in descending order and select the smallest subset of frequencies $\mathcal{K}$ such that the cumulative energy accounts for at least $r$ of the total:
\begin{equation}
I(k) = \mathbb{E}_{x \sim \mathcal{D}^{\text{train}}} \left[ |\hat{X}_k|^2 \right], \quad \frac{\sum_{k \in \mathcal{K}} I(k)}{\sum_{k=0}^{M-1} I(k)} \geq r,
\end{equation}
where $\hat{X}_k$ are the Fourier coefficients obtained by using Fourier Transform on each training sample, $I(k)$ is the periodogram at frequency $k$,   $r \in (0, 1)$ is the intended energy ratio. This subset $\mathcal{K}$ captures the dominant oscillatory components of the data. In practice, for sequences of Fourier length $M = 96$, this typically yields $K = 2$ to $5$ dominant frequencies across datasets as presented in Appendix~\ref{apdx:energy_distribution}. Compared to previous works~\cite{xu2023fits, yi2024frequency, zhou2022fedformer} choosing this set $\mathcal{K}$ heuristically on each dataset, we show that a fixed ratio $r$ can adapt to data signal-noise-ratio in Appendix~\ref{apdx:subsec:snr-r-K}.

\textbf{A Simple Linear Backbone}. Together, rather than model the entire matrix with $O(NM)$ complexity, we select only a small set of frequencies $\{\omega_1, \omega_2, \dots, \omega_K\}$ from $\{\omega_0, \omega_1, \dots, \omega_{\lfloor M/2 \rfloor\}}\}$. Then, we directly estimate the ES of the forecasting series $\mathbf{y}$ by constructing an estimation  $\hat{\mathbf{A}}_{\mathbf{y}}$:
\begin{equation}
[\hat{\mathbf{A}}^r_{{\mathbf{y}}},\hat{\mathbf{A}}_{{\mathbf{y}}}^i] = \boldsymbol{\Theta}_2 \, \operatorname{ReLU}\!\big( \boldsymbol{\Theta}_1 \, \mathbf{x} + \mathbf{b}_1 \big) + \mathbf{b}_2 \in \mathbb{R}^{N \times 2K},
\label{eq:nes_model_mlp}
\end{equation}

where $\hat{\mathbf{A}}^r_{{\mathbf{y}}}, \hat{\mathbf{A}}^i_{{\mathbf{y}}} \in \mathbb{R}^{N \times K}$ denote the real and imaginary parts of the amplitude matrix, respectively, and the complex-valued representation is given by $\hat{\mathbf{A}}_{{\mathbf{y}}} =\hat{\mathbf{A}}^r_{{\mathbf{y}}} + i \hat{\mathbf{A}}^i_{{\mathbf{y}}}$. Here $\boldsymbol{\Theta}_1, \boldsymbol{\Theta}_2, \mathbf{b}_1, \mathbf{b}_2$ are network parameters, and the output dimension is $2NK$ rather than $2NM$: the $j$-th column of $\hat{\mathbf{A}}_{\mathbf{y}}$ corresponds to the $j$-th selected frequency $\omega_{k_j}$, and $\mathbf{F}$ in Equation~\eqref{eq:main_theorem} is restricted to the same columns, $\mathbf{F}_{n,j}=e^{i\omega_{k_j} n}$, so frequencies outside $\mathcal{K}$ carry zero amplitude by construction. Our experiments show that this simple backbone is sufficient to achieve state-of-the-art performance with high efficiency; see Appendix~\ref{subsect:efficiency_analysis} for more experimental details.

\textbf{Probabilistic and Deterministic Time Series Forecasting}. Notably, Equation~\eqref{eq:discrete_evolutionary_rep} admits two interpretations depending on the treatment of the latent variables \(W_k\), enabling deterministic and probabilistic modeling across tasks. 
If we assume that the entire sequence is generated by a set of fixed oscillatory components (e.g., by fixing \(W_k = 1\) for all \(k\)), the representation reduces to a deterministic time–frequency decomposition of the given signal. 
In contrast, when \(W_k \sim \mathcal{CN}(0,1)\) are treated as random variables, the same formulation becomes a generative model for non-stationary stochastic processes. 
We summarize these two perspectives in Table~\ref{tab:interpretations}.

\begin{wraptable}{r}{0.55\linewidth}
\centering
\footnotesize	
\vspace{-10pt}
\setlength{\tabcolsep}{4pt} 
\caption{Deterministic vs. Probabilistic Interpretation.}
\label{tab:interpretations}
\begin{tabular}{lcc}
\toprule
\textbf{Interpretation} & $\mathbf{y}$  &  $\mathbf{w}$ \\
\midrule
Deterministic &
$\displaystyle \frac{1}{\sqrt{M}} (\hat{\mathbf{A}}_{\mathbf{y}} \odot \mathbf{F}) \mathbf{1}$ &
$W_k = 1,\ \forall k$ \\[2ex]
Probabilistic &
$\displaystyle \frac{1}{\sqrt{M}} (\hat{\mathbf{A}}_{\mathbf{y}} \odot \mathbf{F}) \mathbf{w}$ &
$W_k \sim \mathcal{CN}(\mu_{W_k},\sigma_{W_k})$  \\
\bottomrule
\end{tabular}
\end{wraptable}

In practice, $\mu_{W_k},\sigma_{W_k}$ are learnable constants for each selected frequency and channel, shared across the forecast horizon, so that the horizon dependence of the predictive distribution enters only through $\hat{\mathbf{A}}$ (Appendix~\ref{apdx:spectral_rv}). For probabilistic forecasting, sampling from a distribution incurs a higher computational cost due to the need to sample and store latent variables $\mathbf{w}$. In contrast, the deterministic decomposition avoids this overhead. It enables efficient, lightweight inference while providing a physically interpretable representation that directly encodes time-varying spectral energy through the amplitude $\mathbf{A}$.
It is worth mentioning that, the estimated evolutionary spectrum $\hat{\mathbf{A}}$ itself serves as a physically interpretable representation for various downstream tasks. We explain and evaluate this capability via an interpretability analysis in Section~\ref{subsec:interpretability_analysis} and a time series classification task in Appendix~\ref{subsec:exp:timeseries_classification}.

\subsection{Connections with DFT and STFT}


The proposed framework TimeES subsumes classical Fourier methods as special cases. When the amplitude spectrum is time-invariant, i.e., $A(n, \omega_k) = \widehat{X}_k$ for all $n$, DES reduces to the inverse DFT. When the signal is partitioned into non-overlapping blocks of length $L$ and $A(n, \omega_k)$ is constant within each block (i.e., piecewise time-invariant), DES recovers the inverse STFT with a rectangular window. Thus, DFT and STFT correspond to fixed, analytically computed amplitude spectra with deterministic weights ($W_k = 1$), whereas DES learns $A(n, \omega_k)$ and supports stochastic synthesis, we summarize how DES can degenerate to DFT and STFT.


\begin{wraptable}{r}{0.54\linewidth}
\centering
\vspace{-10pt}
\caption{Special cases of the $A(n, \omega_k)$ and $W_k$}
\label{tab:method_comparison}
\footnotesize
\begin{tabular}{lcc}
\toprule
Method & $A(n, \omega_k)$ & $W_k$ \\
\midrule
iDFT   & $(\mathbf{F} \mathbf{x})_k$ & $1$ \\[2pt]
 iSTFT  & $(\mathbf{F} \mathbf{x}^{(m)})_k,\; m = \lfloor n/L \rfloor$ & $1$ \\[2pt]
DES (ours) & $\text{MLP}_{n,k}(\mathbf{x})$ & $\mathcal{CN}(\mu_{W_k},\sigma_{W_k})$ \\
\bottomrule
\end{tabular}
\end{wraptable}
As summarized in Table~\ref{tab:method_comparison}, TimeES, DFT, and STFT are closely related. In fact, under the local stationarity assumption and with a normalized window, the continuous-time STFT approximates the evolutionary spectrum: $X_{\mathrm{STFT}}(t,\omega) \approx A(t,\omega)$, providing a localized estimate of its instantaneous spectral content. A theoretical justification is given in Appendix~\ref{apdx:proof:approx}.

\section{Experiments}
\label{sec:experiments}
To verify the performance of TimeES as both probabilistic and deterministic forecasting framework, we conduct experiments on three types of tasks: multivariate probabilistic forecasting, univariate and multivariate deterministic time series forecasting.  We provide details of the experiment in Appendix~\ref{apdx:experiment}.  Among these, probabilistic forecasting is particularly challenging due to the lack of generative models specifically designed for time series; as a result, current strong baselines are all based on architectures originally developed for computer vision~(e.g., diffusion models)~\cite{ye2025non}, which may not adequately account for non-stationary time series.

\subsection{Probabilistic Time Series Forecasting}
\label{subsec:probabilistic_forecasting}
\textbf{Setup}. We follow previous work~\cite{ye2025non} to assess the effectiveness of our proposed method, TimeES, in multivariate probabilistic forecasting task. We conduct comprehensive experiments across nine widely used real-world benchmarks including ETT\{m1,m2,h1,h2\}, ECL, EXG, Traffic, and Solar with input length 168 and forecast horizon 192 (36 on ILI). We assess both uncertainty quantification (CRPS, QICE) and forecasting accuracy (MSE and MAE) and compare against recent models including TimeGrad~\cite{rasul2021autoregressive}, CSDI~\cite{tashiro2021csdi}, TimeDiff~\cite{shen2023non}, DiffusionTS~\cite{yuan2024diffusion}, TMDM~\cite{li2024tmdm}, and NsDiff~\cite{ye2025non}. We set $M$ equal to the input length $N$ to cover all frequencies and set $r=0.9$ to capture the main energy frequency components. All other experiments follow this setting. For training, we sample 100 times and calculate the mean values to match the ground truth. We provide a detailed hyperparameter sensitivity analysis in Appendix~\ref{apdx:hyperparameter_sensitivity}.

\begin{table*}[htbp]
  \centering
      \setstretch{0.9}
  \footnotesize
  \caption{Experimental results on nine real-world datasets. \textbf{Bold face} indicates the best result, and \underline{underline} indicates the second-best result.}
    \begin{tabular}{ccccccccccc}
    \toprule
    Models & Datasets & ETTh1 & ETTh2 & ETTm1 & ETTm2 & ECL   & EXG   & ILI   & Solar & Traffic \\
    \midrule
    TimeGrad & CRPS  & 0.606  & 1.212  & 0.647  & 0.775  & 0.397  & 0.826  & 1.140  & \underline{0.293} & 0.407  \\
    (2021) & MSE   & 1.062  & 3.462  & 1.218  & 1.690  & 0.505  & 1.567  & 4.197  & 0.475  & 0.983  \\
    \midrule
    CSDI  & CRPS  & 0.492  & 0.647  & 0.524  & 0.817  & 0.577  & 0.855  & 1.244  & 0.432  & 1.418  \\
    (2022) & MSE   & 0.949  & 1.226  & 1.002  & 1.723  & 1.007  & 1.701  & 4.515  & 0.763  & 1.731  \\
    \midrule
    TimeDiff & CRPS  & 0.465  & 0.471  & 0.464  & 0.316  & 0.750  & 0.433  & 1.153  & 0.700  & 0.771  \\
    (2023) & MSE   & \underline{0.517} & \underline{0.456} & 0.537  & \underline{0.268} & 0.879  & 0.402  & 3.958  & 0.821  & 1.350  \\
    \midrule
    DiffusionTS & CRPS  & 0.603  & 1.168  & 0.574  & 1.035  & 0.633  & 1.251  & 1.612  & 0.470  & 0.668  \\
    (2024) & MSE   & 1.089  & 3.273  & 1.030  & 2.372  & 1.072  & 3.628  & 6.053  & 0.749  & 1.473  \\
    \midrule
    TMDM  & CRPS  & 0.452  & 0.383  & 0.375  & 0.289  & 0.461  & 0.336  & 0.967  & 0.350  & 0.557  \\
    (2024) & MSE   & 0.696  & 0.512  & 0.494  & 0.315  & 0.257  & 0.334  & 3.636  & 0.250  & 0.679  \\
    \midrule
    NsDiff & CRPS  & \underline{0.392} & \underline{0.358} & \underline{0.346} & \underline{0.256} & \underline{0.290} & \underline{0.324} & \textbf{0.806 } & 0.300  & \underline{0.378} \\
    (2025) & MSE   & 0.594  & 0.514  & \underline{0.488} & 0.281  & \underline{0.209} & \underline{0.300} & \textbf{2.846 } & \underline{0.242} & \underline{0.637} \\
    \midrule
    TimeES   & CRPS  & \textbf{0.349 } & \textbf{0.320 } & \textbf{0.310 } & \textbf{0.230 } & \textbf{0.205 } & \textbf{0.270 } & \underline{0.834} & \textbf{0.193 } & \textbf{0.245 } \\
    (ours) & MSE   & \textbf{0.474 } & \textbf{0.376 } & \textbf{0.371 } & \textbf{0.235 } & \textbf{0.180 } & \textbf{0.242 } & \underline{3.052} & \textbf{0.179 } & \textbf{0.456 } \\
    \bottomrule
    \end{tabular}%
  \label{tab:probabilistic_forecasting_results}%
\end{table*}%
\begin{figure*}[!th]
\begin{center}
\centerline{\includegraphics[width=0.9\linewidth]{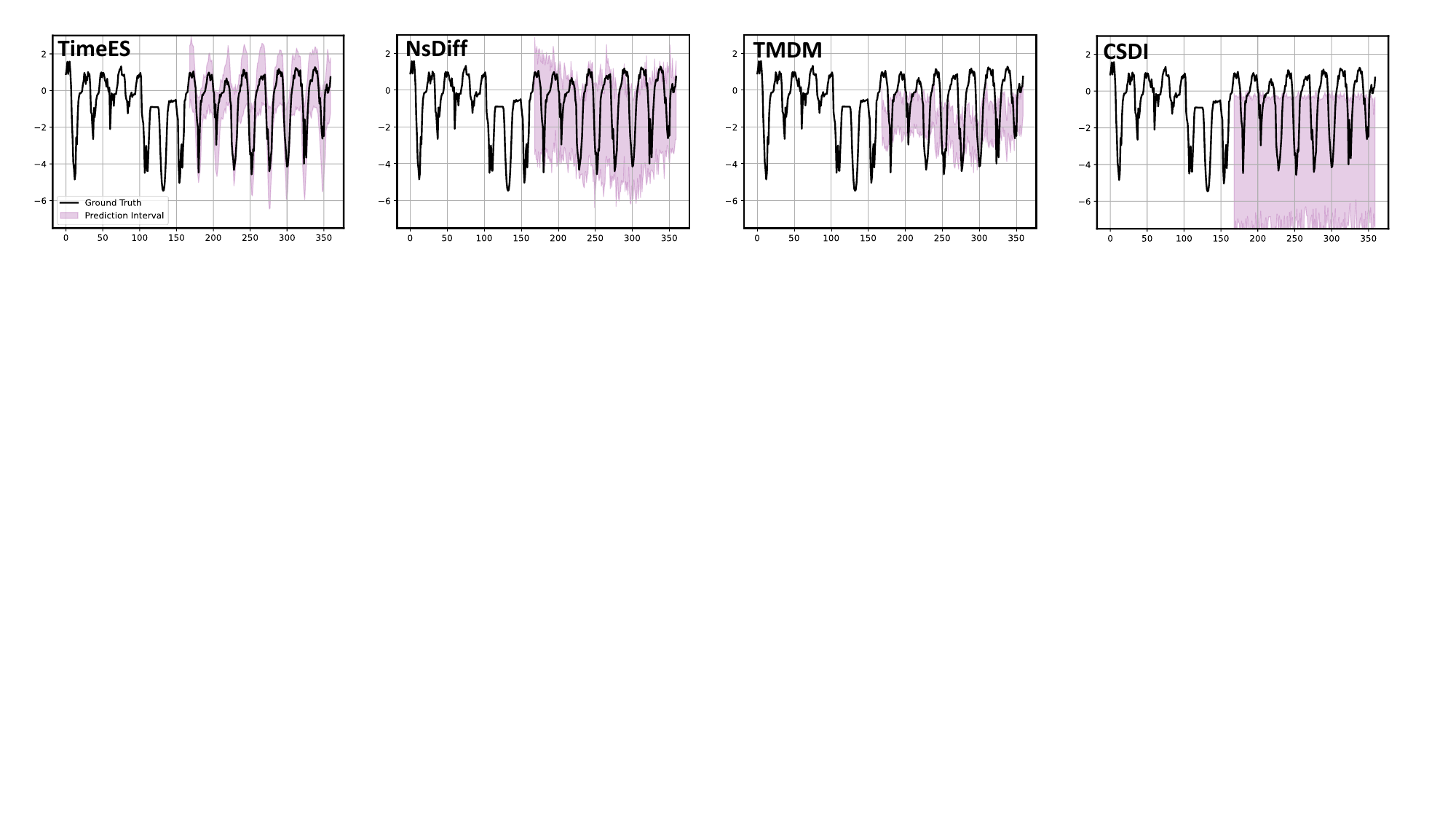}}
\caption{A probabilistic forecasting case of ETTh1 testing sample. We present 96\% prediction intervals as the purple areas.}
\vspace{-0.26in}
\label{fig:prob_show_case}
\end{center}
\end{figure*}

\textbf{Results}. We report CRPS and MSE in Table~\ref{tab:probabilistic_forecasting_results} and full results including QICE and MAE can be found in Appendix~\ref{apdx:full_other_results}. As shown in Table~\ref{tab:probabilistic_forecasting_results}, TimeES achieves great performance, surpassing the recent diffusion-based approach NsDiff specifically designed for non-stationarity, achieving state-of-the-art results in both deterministic prediction~(MSE) and uncertainty quantification~(CRPS). On average, TimeES achieves a 17.23\% reduction in CRPS and an 18.66\% reduction in MSE across all datasets. The improvement is particularly pronounced on the Traffic dataset, which exhibits strong non-stationarity~\cite{ye2024frequency, ye2025non} where TimeES reduces CRPS by 35.15\% and MSE by 28.34\%, demonstrating its ability to effectively model evolving spectral dynamics and non-stationary stochastic patterns. The only exception is the ILI dataset, where gains are marginal due to its limited sample size, which hinders reliable estimation of ES. Notably, TimeES achieves the highest efficiency in generating samples; refer to Appendix~\ref{subsect:efficiency_analysis} for full efficiency analysis.

\textbf{Sample Cases}. To provide a clearer understanding of TimeES’s capabilities, we visualize a representative prediction on the ETTh1 dataset. As shown in Figure~\ref{fig:prob_show_case}, compared to other baselines, TimeES effectively captures both the time-varying uncertainty and the underlying frequency dynamics. Specifically, TMDM and CSDI fail to model the non-stationarity between the input and output, resulting in poorly calibrated predictive variances. Although NsDiff better captures the non-stationary patterns of uncertainty, it still fails to identify the deterministic spectral components. In contrast, TimeES is the only model that accurately recovers both the deterministic frequency structure and the evolving uncertainty. 
We provide additional cases in Appendix~\ref{apdx:other_prob_showcases}.


\subsection{Deterministic Time Series Forecasting}
\label{subsec:univariate_and_multivariate_forecasting}
\textbf{Setup}. We follow previous work~\cite{liu2023itransformer} and evaluate on eight benchmarks: ETT\{m1,m2,h1,h2\}, ECL, EXG, Traffic, Weather, and Solar Energy. We use an input length of 96 and predict horizons of \{96, 192, 336, 720\}, reporting MSE and MAE.  To fully assess our model’s capability in modeling temporal dependency, we conduct an extra single variate forecasting experiments on each dataset by using only the first variable to predict its own future values. For multivariate forecasting, we train the model in a channel-independence setting~\cite{nie2022time} to independently predict each variate. We compare TimeES against the following representative baselines: Autoformer~\cite{wu2021autoformer}, FEDformer~\cite{zhou2022fedformer}, TimesNet~\cite{wu2022timesnet}, FITS~\cite{xu2023fits}, FreTS~\cite{yi2024frequency}, PatchTST~\cite{nie2022time}, and iTransformer~\cite{liu2023itransformer}. Among them, FEDformer, TimesNet, FITS, and FreTS are representative models that leverage FT or wavelet transforms for frequency modeling. We provide other additional baselines in Appendix~\ref{subsec:addtional_baselines}.

\begin{table*}[htbp]
  \centering
  \vspace{-8pt}
  \caption{Experimental results on forecasting, we report MSE. \textbf{Bold face} and \underline{underline} indicate the best and second-best results. S.V denotes average MSE results on single forecasting and M.V. denotes average MSE results on multivariate forecasting.}
    \setlength{\tabcolsep}{2pt} 
    \footnotesize
    \begin{tabular}{c|cc|cc|cc|cc|cc|cc|cc|cc}
    \toprule
      \multicolumn{3}{c}{Models \ \ Autoformer} & \multicolumn{2}{c}{FEDformer} & \multicolumn{2}{c}{TimesNet} & \multicolumn{2}{c}{FITS} & \multicolumn{2}{c}{FreTS} & \multicolumn{2}{c}{PatchTST} & \multicolumn{2}{c}{iTransformer} & \multicolumn{2}{c}{TimeES~(Ours)} \\
    Datasets & S.V.  & M.V.  & S.V.  & M.V.  & S.V.  & M.V.  & S.V.  & M.V.  & S.V.  & M.V.  & S.V.  & M.V.  & S.V.  & M.V.  & S.V.  & M.V. \\
    \midrule
    ETTm1 & 1.176  & 0.580  & 0.840  & 0.448  & 0.811  & 0.400  & 0.793  & 0.418  & \underline{0.755} & 0.414  & 0.763  & \textbf{0.386 } & 0.778  & 0.407  & \textbf{0.740 } & \underline{0.390} \\
    \midrule
    ETTm2 & 0.498  & 0.327  & 0.495  & 0.305  & 0.480  & 0.291  & 0.478  & 0.285  & \underline{0.452} & 0.323  & 0.462  & \underline{0.281} & 0.487  & 0.288  & \textbf{0.434 } & \textbf{0.278 } \\
    \midrule
    ETTh1 & 0.925  & 0.496  & 0.882  & \textbf{0.440 } & 0.929  & 0.458  & 1.155  & 0.534  & 1.010  & 0.504  & 0.970  & 0.469  & 0.968  & \underline{0.454} & \textbf{0.878 } & \textbf{0.440 } \\
    \midrule
    ETTh2 & 0.700  & 0.450  & 0.713  & 0.437  & 0.659  & 0.414  & 0.656  & 0.396  & \underline{0.596} & 0.500  & 0.633  & 0.387 & 0.682  & \underline{0.383}  & \textbf{0.583 } & \textbf{0.379 } \\
    \midrule
    ECL   & 0.679  & 0.227  & 0.610  & 0.214  & 0.591  & \underline{0.193} & 0.587  & 0.328  & \underline{0.530} & 0.205  & 0.543  & 0.216  & 0.601  & \textbf{0.178 } & \textbf{0.516 } & 0.197  \\
    \midrule
    EXG   & 0.827  & 0.613  & 0.765  & 0.519  & 0.711  & 0.416  & 0.762  & 0.389  & \underline{0.517} & 0.422  & 0.693  & 0.367  & 0.713  & \underline{0.360} & \textbf{0.356 } & \textbf{0.357 } \\
    \midrule
    Traffic & 1.019  & 0.628  & 0.566  & 0.610  & 0.564  & 0.620  & 1.164  & 0.827  & 0.576  & 0.555  & 0.521  & 0.555  & \underline{0.500} & \textbf{0.428 } & \textbf{0.431 } & \underline{0.478} \\
    \midrule
    Weather & 0.757  & 0.338  & 0.654  & 0.309  & 0.641  & 0.259  & 0.630  & 0.254  & \underline{0.523} & \underline{0.243} & 0.640  & 0.259  & 0.645  & 0.258  & \textbf{0.513 } & \textbf{0.239 } \\
    \midrule
    Solar & 3.046  & 0.885  & 0.252  & 0.292  & 0.216  & 0.301  & 0.330  & 0.403  & 0.214  & 0.255  & \textbf{0.191 } & 0.270  & \underline{0.200} & \textbf{0.233 } & 0.205  & \underline{0.243} \\
    \bottomrule
    \end{tabular}%
  \label{tab:forecasting_results}%
\end{table*}%

\textbf{Main Results}. We report MSE results in single variate and multivariate forecasting tasks in Table~\ref{tab:forecasting_results}, full results across steps are in Appendix~\ref{apdx:uniandmulti_results_apdx_exp}. Specifically, TimeES achieves state-of-the-art results on 8 out of 9 datasets in single variate forecasting. Notably, FreTS, an MLP-based model that leverages the STFT, and FEDformer, an architecture based on wavelet decomposition that also explicitly models time-frequency variations, are both outperformed by TimeES. On average, TimeES improves upon FreTS by 19.38\% and 16.00\%, and surpasses FEDformer by 9.98\% and 12.28\%, respectively, in single variate and multivariate settings. These gains highlight the importance of modeling frequency components at a finer granularity beyond conventional  representations. Also, TimeES achieves the highest efficiency in the inference step; refer to Appendix~\ref{subsect:efficiency_analysis} for efficiency analysis. We provide an additional synthetic experiment on a chirp signal in Appendix~\ref{apdx:chirp_synthetic_exp}.

\subsection{Ablation Study}
\begin{wraptable}{b}{0.55\linewidth}
  \centering
  \vspace{-13pt}
  \caption{Experimental results on different variants of TimeES on ETTh1 dataset. $\times$ indicates that the variant lacks the DES and is therefore not generative. P.F. denotes probabilistic forecasting.}
  \footnotesize
    \begin{tabular}{cccc}
    \toprule
    Variants & S.V.~(MSE) & M.V.~(MSE) & P.F.~(CRPS) \\
    \midrule
    w/o DES & 0.974±0.014 & 0.554±0.006 & $\times$ \\
    w/o E & 1.282±0.012 & 0.604±0.010 & 0.466±0.009 \\
    TimeES & \textbf{0.929±0.009} & \textbf{0.485±0.003} & \textbf{0.349±0.007} \\
    \bottomrule
    \end{tabular}%
  \label{tab:ablation_study}%
  \vspace{-2pt}
\end{wraptable}%
\textbf{Main Ablation.} To analyze the effectiveness of TimeES'components, we conduct ablation studies  across deterministic 720 steps forecasting, and probabilistic forecasting tasks on ETTh1 dataset. The experiment settings are consistent with Section~\ref{subsec:probabilistic_forecasting} and ~\ref{subsec:univariate_and_multivariate_forecasting}. Specifically, we include two types of variants: (1) \textbf{w/o E}~(without evolution): We enforce identical spectra at all forecasting steps to simulate a static spectral representation. (2) \textbf{w/o DES}~(without DES):  We replace the DES with a simple linear layer that directly outputs predictions.  The results are presented in Table~\ref{tab:ablation_study}, where it shows that TimeES achieves the best performance across all three tasks. Notably, although omitting DES increases prediction variance, it still yields a modest gain over w/o E: relative MSE improvements of 24.02\% and 8.27\% on single variate and multivariate long-term forecasting tasks, respectively. This further underscores the importance of modeling the evolutionary spectra. Full results are in Appendix~\ref{apdx:subsec:ablatino_exp}.

 \textbf{Plug-and-Play Testing}. We further evaluate TimeES as a plug-in enhancement module within a decomposition framework by setting $\mathbf{w} = \mathbf{1}$ (see Table~\ref{tab:interpretations}). Specifically, we replace the default backbone with PatchTST and iTransformer and compare their enhanced variants against the original models on the ETTh1 dataset in multivariate long-term forecasting. As shown in Table~\ref{tab:backbone_enhancement}, TimeES consistently improves all backbone models. When integrated with iTransformer, the average MSE reductions become increasingly pronounced as the horizon grows: 0.56\%, 1.88\%, 2.44\%, and 5.83\% for horizons 96, 192, 336, and 720, respectively. Full ETT results are provided in Appendix~\ref{apdx:full_other_results}.
\begin{table}[htbp]
  \centering
  \footnotesize
    \setlength{\tabcolsep}{5pt} 
  \caption{Backbone enhancement on the ETTh1 dataset, TimeES is integrated as a plug-in module.}
    \begin{tabular}{c|cccc|cc|cccc|cc}
    \toprule
    \multicolumn{1}{c}{Model} & \multicolumn{2}{c}{PatchTST} & \multicolumn{2}{c|}{+TimeES} & \multicolumn{2}{c}{Improvement} & \multicolumn{2}{c}{iTransformer} & \multicolumn{2}{c|}{+TimeES} & \multicolumn{2}{c}{Improvement} \\
    \multicolumn{1}{c}{Metrics} & MSE   & MAE   & MSE   & MAE   & MSE   & MAE   & MSE   & MAE   & MSE   & MAE   & MSE   & MAE \\
    \midrule
    96    & 0.414  & 0.419  & \textbf{0.385 } & \textbf{0.402 } & 6.89\% & 4.15\% & 0.386  & 0.405  & \textbf{0.384 } & \textbf{0.402 } & 0.56\% & 0.78\% \\
    192   & 0.460  & 0.445  & \textbf{0.434 } & \textbf{0.437 } & 5.65\% & 1.86\% & 0.441  & 0.436  & \textbf{0.433 } & \textbf{0.428 } & 1.88\% & 1.77\% \\
    336   & 0.501  & 0.466  & \textbf{0.451 } & \textbf{0.457 } & 9.99\% & 2.02\% & 0.487  & 0.458  & \textbf{0.475 } & \textbf{0.451 } & 2.44\% & 1.51\% \\
    720   & 0.500  & 0.488  & \textbf{0.482 } & \textbf{0.479 } & 3.68\% & 1.82\% & 0.503  & 0.491  & \textbf{0.474 } & \textbf{0.480 } & 5.83\% & 2.18\% \\
    \midrule
    Avg.   & 0.469  & 0.455  & \textbf{0.438 } & \textbf{0.444 } & 6.56\% & 2.42\% & 0.454  & 0.448  & \textbf{0.441 } & \textbf{0.440 } & 2.84\% & 1.59\% \\
    \bottomrule
    \end{tabular}%
  \label{tab:backbone_enhancement}%
\end{table}%

\subsection{Interpretability Analysis}
\label{subsec:interpretability_analysis}

 \begin{wrapfigure}{r}{0.5\linewidth}
 \centering
\vspace{-8pt}
\centerline{\includegraphics[width=1\linewidth]{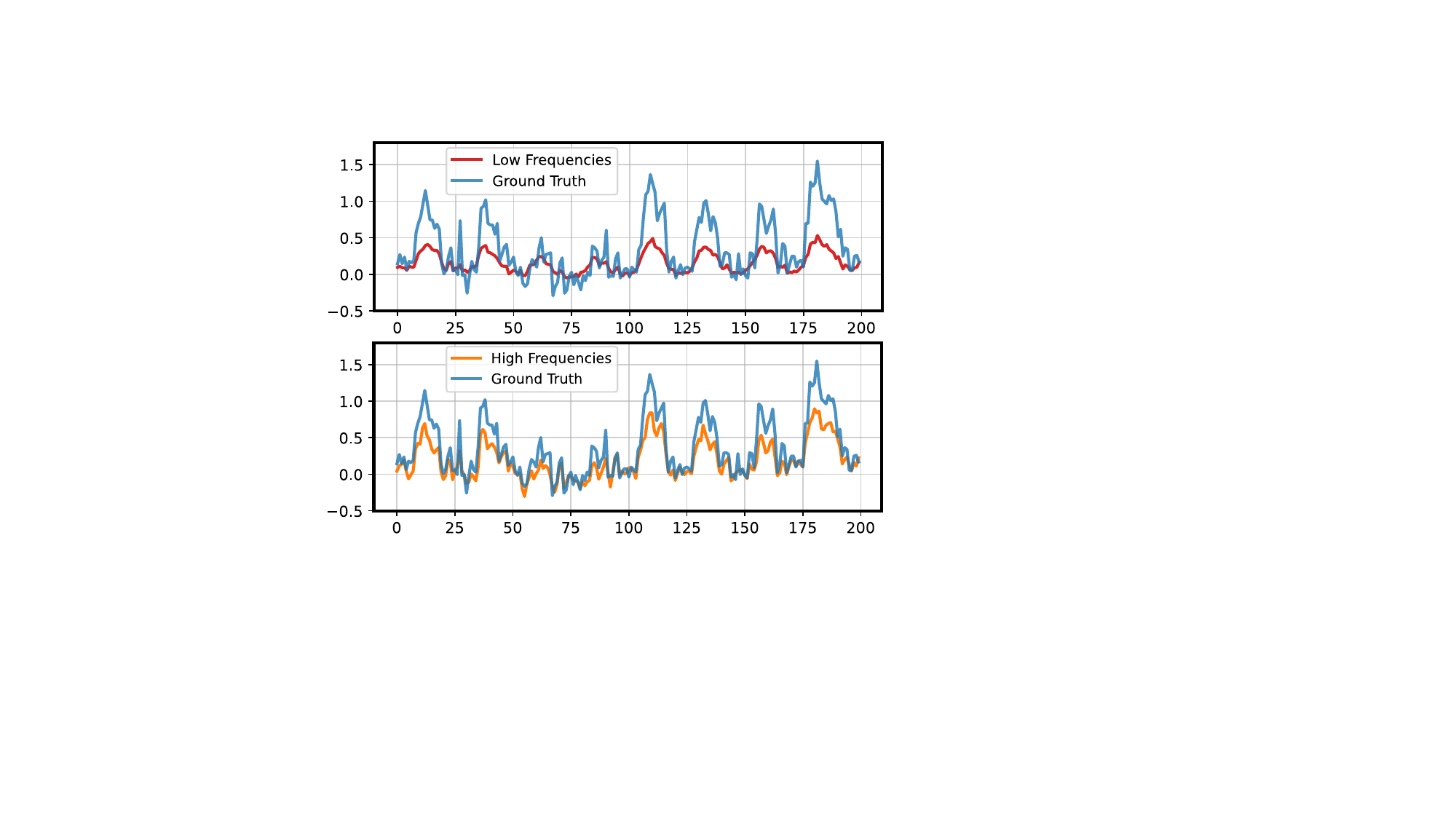}}
\caption{Low- and high-frequency components analysis on ETTh1 dataset.}
\label{fig:main_interpretability_analysis}
\end{wrapfigure}

Our learned ES can be used to reconstruct signals by separately synthesizing components from different frequency bands (like DFT), which offers strong interpretability. To investigate the interpretability of TimeES, we reconstruct high- and low-frequency signals using the learned ES and examine whether each component captures the expected characteristics. Specifically, we conduct an experiment on the ETTh1 dataset, with results visualized in Figure~\ref{fig:main_interpretability_analysis}. As shown, the high- and low-frequency components encode distinctly different information: the low-frequency component clearly captures the long-term trend, while the high-frequency component accurately reflects abrupt changes and fine-grained dynamics.  We provide more results in Appendix~\ref{apdx:interpretability_analysis}.

\section{Conclusion}
\label{sec:conclusion}
We introduce \textbf{TimeES}, a novel framework that bridges ES theory with deep learning for both probabilistic and deterministic forecasting. By exploiting Hermitian symmetry and spectra energy sparsity, we reduce  complexity from $\mathcal{O}(NM)$ to $\mathcal{O}(NK)$ with $K \ll M/2$. To the best of our knowledge, TimeES represents the first work to harness evolutionary spectra theory and employ deep neural networks for modeling non-stationary random process.


\textbf{Limitation \& Future work} Our frequency selection strategy may not be effective under high-noise conditions, as spectral energy may not concentrate at a few frequencies. Learning evolutionary spectra also requires sufficient data (Appendix~\ref{apdx:limited_data}). Additionally, the current implementation employs a standard architecture. Future work can focus on noise-robust frequency selection, limited-data learning, non-Gaussian or mixture spectral measures, and specialized architectures that better capture the temporal dynamics of evolutionary spectra while maintaining computational efficiency.

\bibliographystyle{plainnat}
\bibliography{nips2026_camera_ready}








\appendix
\clearpage

\section{Full Related Works}
\label{apdx:full_related_works}
\subsection{Evolutionary Spectra for Non-stationary Time Series}
Classical frequency analysis relies on the Fourier Transform (FT)~\cite{brigham1988fast}, which assumes a time-invariant spectrum, making it unsuitable signals with evolving frequency content. Time–frequency methods such as STFT~\cite{allen2005unified} and wavelet transforms~\cite{mallat1999wavelet} address this by localizing spectral estimation, but suffer from inherent trade-offs: STFT is limited by the Heisenberg uncertainty principle, while wavelets use fixed bases that lack adaptivity~\cite{flandrin1998time}. The Wigner–Ville distribution~\cite{wigner1932quantum} provides high-resolution estimates, but can introduce severe cross-terms in multi-component signals. To formally account for non-stationarity, Priestley proposed evolutionary spectra theory~\cite{priestley1965evolutionary,priestley1988spectral}, which defines a point-wise spectrum for harmonizable stochastic processes. It is intractable to determine the ES since the evolutionary spectrum is fundamentally non-identifiable and admits infinitely many valid solutions. There exist some techniques to estimate the ES, though these depend on very limiting conditions such as local stationarity or narrow-bandness ~\cite{schillinger2010accurate, spanos2004evolutionary, spanos2005stochastic, von1996wavelet}. Recent approaches rely on perturbation-based optimization methods~\cite{benowitz2015determining}, requiring a strict initialization. TimeES estimates evolutionary spectra directly from the data, without relying on strong parametric assumptions or constraints.

\subsection{Deep Learning for Deterministic Time Series Forecasting}
Time series is important in applications like traffic~\cite{dong2024heterogeneity, gao2023spatial, jin2023spatio}, and stock forecasting~\cite{cheng2022financial, duan2022factorvae}. Recent years have witnessed significant advances in deep learning for deterministic time series forecasting~\cite{cirstea2018correlated, jin2022multivariate, yang2024survey}, spanning forecasting~\cite{zhou2021informer}, classification~\cite{wu2022timesnet}, and anomaly detection~\cite{chandola2009anomaly}, etc. Architectures such as RNNs~\cite{connor1994recurrent, lin2025segrnn, sbrana2020n}, CNNs~\cite{liu2022scinet}, and Transformers~\cite{wu2021autoformer} have demonstrated strong empirical performance by capturing temporal dependencies through sequential or attention-based mechanisms~\cite{chen2024multi, jiang2023spatio, qiu2025duet, liu2023itransformer}. More recently, frequency-aware models like FEDformer~\cite{zhou2022fedformer} and FreTS~\cite{yi2024frequency} explicitly incorporate spectral components to improve long-term forecasting~\cite{yi2024filternet, yi2025survey}.  To further address non-stationarity, recent works explicitly model distributional or frequency shifts~\cite{fan2023dish, kim2021reversible, ye2024frequency, liu2024timebridge}. However, these approaches do not explicitly model spectral evolution, thus cannot effectively capture the frequency variations. 


\subsection{Deep Learning for Probabilistic Time Series Forecasting}
Currently, most time series generation methods are based on deep generative models originally developed for computer vision tasks. Early approaches are predominantly based on Generative Adversarial Networks~(GANs)~\cite{mogren2016c, yoon2019time}. Nevertheless, the instability of adversarial training has motivated a shift toward other alternative deep generative paradigms. TimeVAE~\cite{desai2021timevae} incorporates an interpretable temporal prior into a Variational Autoencoder (VAE) framework, Fourier Flows~\cite{alaa2021generative} use a normalizing flow architecture and apply a cascade of spectral filters to enable exact likelihood optimization. Denoising Diffusion Probabilistic Models (DDPMs), have recently been adapted to time series due to their strong empirical performance in modeling complex distributions.  CSDI~\cite{tashiro2021csdi} introduces a masking strategy that enables joint training for imputation and generation; DiffusionTS~\cite{yuan2024diffusion} combines the Fourier Transform with DDPMs to reconstruct and interpret time series in the spectral domain; and \citet{alcaraz2022diffusion} extends DDPMs with a structured spatial prior to capture long-term dependencies.  Among generative tasks, probabilistic forecasting stands out as especially challenging due to the non-stationarity inherent in real-world time series. In this context, DDPMs have also shown great promise~\cite{tyralis2022review}:  TimeGrad~\cite{rasul2021autoregressive} integrates an autoregressive diffusion process guided by hidden states of RNN; TimeDiff~\cite{shen2023non} stabilizes generation through future mixup and autoregressive initialization; and \citet{kollovieh2024predict} propose a self-guiding strategy based on structured state-space models. Moreover, building on the Schrödinger bridge framework of \citet{shi2023diffusion}, \citet{chen2023provably} provides theoretical convergence guaranties and algorithmic refinements for diffusion-based time series modeling. However, most of these methods assume stationary endpoint distributions, which are ill-suited for non-stationary temporal dynamics. To address this, NsDiff~\cite{ye2025non} incorporates a location-scale noise model into DDPM, allowing the endpoint distribution to vary over time and thereby better capture non-stationarity.  Nevertheless, nearly all existing approaches are fundamentally built upon architectures originally designed for computer vision and therefore implicitly rely on the assumption of i.i.d. data~\cite{li2025diffusion}. In contrast, our proposed TimeES is grounded in the explicit assumption that the underlying data-generating process is non-stationary and stochastic.

\section{Proof of Theorems}
\label{sec:proofs}
\subsection{Theorem \ref{theorem:spectral_representation}}
\label{apdx:proof:spectral_representation}
When analyzing the spectrum of a continuous-time stochastic process $\{X(t)\}_{t \in \mathbb{R}}$, the Fourier integral is not directly applicable. First, the Fourier series assumes that the signal is composed of a finite sum of periodic components, which is incompatible with general stochastic processes. Second, the Fourier integral requires that the sample paths be absolutely integrable over $(-\infty, \infty)$, i.e., $\int_{-\infty}^\infty |X(t)| \, dt < \infty$. This condition almost surely fails for stochastic processes, as their realizations typically do not decay at infinity and thus are not  absolutely integrable.

To circumvent this, we consider a truncated version of the process as:
\begin{equation}
X_T(t) = 
\begin{cases}
X(t), & -T \leq t \leq T, \\
0, & \text{otherwise}.
\end{cases}
\end{equation}
Assuming that this function is continuous between $-T$ and $T$, we can express $X_T$ in Fourier integral as:
\begin{equation}
X_T(t) = \frac{1}{\sqrt{2\pi}} \int_{-\infty}^{\infty} G_T(\omega) e^{i \omega t} \, d\omega,
\end{equation}
where
\begin{equation}
G_T(\omega) = \frac{1}{\sqrt{2\pi}} \int_{-T}^{T} X(t) e^{-i \omega t} \, dt.
\end{equation}

A natural idea is to take the limit $T \to \infty$ to recover a spectral representation of $X(t)$. However, since $T \to \infty$, $G_T(\omega)$ does not converge in the usual sense because $X(t)$ is not integrable. In fact, one can show that $|G_T(\omega)| = \mathcal{O}(\sqrt{T})$, implying divergence. Instead of focusing on energy (which is infinite), we consider \emph{power}, defined via the normalized expected squared magnitude as:
\begin{equation}
h(\omega) := \lim_{T \to \infty} \mathbb{E}\left[ \frac{|G_T(\omega)|^2}{2T} \right],
\end{equation}
assuming that this limit exists. The function $h(\omega)$ is interpreted as the \emph{spectral density} of the process.

Crucially, although $G_T(\omega)$ diverges as $T \to \infty$, its scaled increments remain well-behaved. Define the integrated Fourier transform as:
\begin{equation}
Z_T(\omega) = \frac{1}{\sqrt{2\pi}} \int_{-\infty}^{\omega} G_T(\theta) \, d\theta.
\end{equation}
Then, over a small frequency interval $\delta \omega_n = \omega_{n+1} - \omega_n$, the increment satisfies 
\begin{equation}
\Delta Z_T(\omega_n) = Z_T(\omega_{n+1}) - Z_T(\omega_n) \approx \frac{1}{\sqrt{2\pi}} G_T(\omega_n) \, \delta \omega_n.
\end{equation}
Since $|G_T(\omega_n)| = \mathcal{O}(\sqrt{T})$ and $\delta \omega_n \sim 1/T$, it follows that $\Delta Z_T(\omega_n) = \mathcal{O}(1)$, i.e., the increments remain bounded as $T \to \infty$.

This observation motivates replacing the ill-defined Fourier integral with a \emph{stochastic integral} with respect to a random measure. In the limit $T \to \infty$, the process $Z_T(\omega)$ converges (in an appropriate probabilistic sense) to a random function $Z(\omega)$ with orthogonal increments satisfying $\mathbb{E}[|dZ(\omega)|^2] = h(\omega) \, d\omega$.

Consequently, the original process admits the spectral representation:
\begin{equation}
X(t) = \int_{-\infty}^{\infty} e^{i t \omega} \, dZ(\omega),
\end{equation}
where $Z(\omega)$ is a complex-valued stochastic process with uncorrelated (or orthogonal) increments. This completes the justification of the spectral representation theorem.

\subsection{Theorem \ref{theorem:main_theorem}}
\label{section:proof:maintheorem}

First, for a discrete signal $\{X_t\}$, a generalized Cramér-type representation~\cite{cohen1995time} can be written as a time-varying Fourier--Stieltjes integral~\cite{priestley1965evolutionary} as:
\begin{equation}
X_n=\frac{1}{\sqrt{2 \pi}} \int_{-\pi}^\pi e^{i \omega n} d Z_n(\omega)
\end{equation}
where $d Z_n(\omega)$ is a complex-valued random measure with a covariance structure governed by the evolutionary spectral density. Under the evolutionary spectra framework defined in Theorem~\ref{theorem:spectral_representation} and ~\ref{theorem:evolutionary_spectra}, $dZ_n(w)$ admits:
\begin{equation}
d Z_n(\omega)=A(n, \omega) d W(\omega)
\end{equation}
where $A(n, \omega)$ is a complex amplitude modulation function, $d W(\omega)$ is a complex Gaussian white noise measure on $[-\pi, \pi)$ satisfying:
\begin{equation}
\mathbb{E}[d W(\omega)]=0, \quad \mathbb{E}[d W(\omega) \overline{d W(\nu)}]=\delta(\omega-\nu) d \omega
\end{equation}
where $\delta$ is the Dirac delta function that is zero everywhere except in $\nu = \nu_0$. Since numerical simulation cannot handle continuous integrals, we discretize the frequency axis into $M$ equally spaced points over $[0,2 \pi)$:
$$
\omega_k=\frac{2 \pi k}{M}, \quad k=0,1, \ldots, M-1.
$$

To align with the standard Fourier convention on $[-\pi, \pi)$, we may map high frequencies:

\begin{equation}
\omega_k \leftarrow \begin{cases}\omega_k & \text { if } \omega_k \leq \pi \\ \omega_k-2 \pi & \text { if } \omega_k>\pi\end{cases}
\end{equation}
and define the frequency spacing as:
\begin{equation}
\Delta \omega=\frac{2 \pi}{M}.
\end{equation}

We approximate the stochastic integral Equation~\eqref{eq:spectrum_representation_theorem} by a Riemann sum:

\begin{equation}
X_n \approx \frac{1}{\sqrt{2 \pi}} \sum_{k=0}^{M-1} A\left(n, \omega_k\right) e^{i \omega_k n} \cdot \Delta W_k,
\end{equation}

where $\Delta W_k$ approximates the increment $d W(\omega)$ over the interval $\left[\omega_k, \omega_k+\Delta \omega\right)$.

For a complex Gaussian white noise measure, the increments satisfy:

\begin{equation}
\mathbb{E}\left[\Delta W_k\right]=0, \quad \mathbb{E}\left[\Delta W_k \overline{\Delta W_{\ell}}\right]=\delta_{k \ell} \Delta \omega.
\end{equation}

Thus, we can write:

\begin{equation}
\Delta W_k=\sqrt{\Delta \omega} \cdot W_k,
\end{equation}

where $W_k \sim$ are i.i.d. standard complex normal random variables.



Substituting into Equation~\eqref{eq:evolutionary_spectra1}:

\begin{equation}
X_n \approx \frac{1}{\sqrt{2 \pi}} \sum_{k=0}^{M-1} A\left(n, \omega_k\right) e^{i \omega_k n} \cdot \sqrt{\Delta \omega} \cdot W_k.
\end{equation}

Recall $\Delta \omega=2 \pi / M$, so $\sqrt{\Delta \omega}=\sqrt{2 \pi / M}$. Therefore:

\begin{equation}
X_n \approx \frac{1}{\sqrt{2 \pi}} \cdot \sqrt{\frac{2 \pi}{M}} \sum_{k=0}^{M-1} A\left(n, \omega_k\right) W_k e^{i \omega_k n}=\frac{1}{\sqrt{M}} \sum_{k=0}^{M-1} A\left(n, \omega_k\right) W_k e^{i \omega_k n}.
\end{equation}

Since $\omega_k=2 \pi k / M$, we have $e^{i \omega_k n}=e^{i 2 \pi k n / M}$, yielding the final discrete synthesis formula:

\begin{equation}
X_n \approx \frac{1}{\sqrt{M}} \sum_{k=0}^{M-1} A\left(n, \omega_k\right) W_k e^{i 2 \pi k n / M},
\end{equation}
which completes the proof.

\subsection{Theorem \ref{theorem:hermitian}}
\label{section:proof:hermitian}
\begin{proof}
We prove that $ X_n \in \mathbb{R} $ for all $ n $ if and only if the Hermitian symmetry condition~\eqref{eq:hermitian} holds.

($\Rightarrow$) Suppose $ X_n \in \mathbb{R} $ for all $ n $. Then $ X_n = \overline{X_n} $. Define $ B(n, k) = A(n, \omega_k) W_k $. Then Equation~\eqref{eq:evolutionary_spectra1} becomes
\[
\overline{X_n} = \frac{1}{\sqrt{M}} \sum_{k=0}^{M-1} \overline{B(n, \omega_k)} \, e^{-i \omega_k n}.
\]
Since $ \omega_k = 2\pi k / M $, we have $ e^{-i \omega_k n} = e^{i (\omega_{(M - k) \bmod M}) n} $. Reindexing the sum with $ k' = (M - k) \bmod M $ (which is a permutation of $ \{0, \dots, M-1\} $), we obtain
\[
\overline{X_n} = \frac{1}{\sqrt{M}} \sum_{k'=0}^{M-1} \overline{B(n, \omega_{(M - k') \bmod M})} \, e^{i \omega_{k'} n}.
\]
Because $ X_n = \overline{X_n} $ and the complex exponentials $ \{e^{i \omega_k n}\}_{k=0}^{M-1} $ form an orthogonal basis over $ n = 0, \dots, M-1 $, the coefficients in the two expansions must be equal. Hence,
\[
B(n, \omega_k) = \overline{B(n, \omega_{(M - k) \bmod M})}, \quad \forall k,
\]
which is precisely the Hermitian symmetry condition~\eqref{eq:hermitian}.

($\Leftarrow$) Conversely, assume~\eqref{eq:hermitian} holds. Then
\begin{align*}
\overline{X_n}
&= \frac{1}{\sqrt{M}} \sum_{k=0}^{M-1} \overline{B(n, \omega_k)} \, e^{-i \omega_k n} \\
&= \frac{1}{\sqrt{M}} \sum_{k=0}^{M-1} B(n, \omega_{(M - k) \bmod M}) \, e^{i (\omega_{(M - k) \bmod M}) n} \quad \text{(by~\eqref{eq:hermitian})} \\
&= \frac{1}{\sqrt{M}} \sum_{k'=0}^{M-1} B(n, \omega_{k'}) \, e^{i \omega_{k'} n} \quad \text{(reindex with } k' = (M - k) \bmod M) \\
&= X_n.
\end{align*}
Thus $ X_n = \overline{X_n} $, so $ X_n \in \mathbb{R} $ for all $ n $.

Finally, for $ k = 0 $, condition~\eqref{eq:hermitian} gives $ B(n, \omega_0) = \overline{B(n, \omega_0)} $, so $ B(n, \omega_0) \in \mathbb{R} $. If $ M $ is even, then $ k = M/2 $ satisfies $ (M - k) \bmod M = k $, so the same argument implies $ B(n, \omega_{M/2}) \in \mathbb{R} $.
\end{proof}

\subsection{Continuous STFT Approximates ES under Local Stationarity}
\label{apdx:proof:approx}

Consider a real- or complex-valued signal $x(u)$ that admits a generalized time--frequency representation of the form
\begin{equation}
    x(u) = \int_{-\infty}^{\infty} A(u, \nu)\, e^{i \nu u}\, \frac{d\nu}{\sqrt{2\pi}},
    \label{eq:signal_rep}
\end{equation}
where $A(u, \nu)$ is a smooth complex-valued function interpreted as the \emph{local spectral amplitude}. Such a representation is commonly used in the analysis of non-stationary signals under the assumption of \emph{local stationarity}.

Let $w \in \{L^1(\mathbb{R}) \cap L^2(\mathbb{R})\}$ be a real-valued analysis window with effective support concentrated near the origin, and assume that it is normalized so that
$\int_{-\infty}^{\infty} w(s)\, ds = 1$. The short-time Fourier transform (STFT) of $x$ with respect to $w$ is defined by
\begin{equation}
    X_{\mathrm{STFT}}(t, \omega) 
    = \int_{-\infty}^{\infty} x(u)\, w(u - t)\, e^{-i \omega u}\, du.
    \label{eq:stft_def}
\end{equation}

Substituting the representation~\eqref{eq:signal_rep} into~\eqref{eq:stft_def} and interchanging the order of integration (justified, for instance, if $A \in L^2(\mathbb{R}^2)$ and $w$ decays sufficiently fast), we obtain
\begin{align}
    X_{\mathrm{STFT}}(t, \omega)
    &= \int_{-\infty}^{\infty} \left[ \int_{-\infty}^{\infty} A(u, \nu)\, e^{i \nu u}\, \frac{d\nu}{\sqrt{2\pi}} \right] w(u - t)\, e^{-i \omega u}\, du \nonumber \\
    &= \int_{-\infty}^{\infty} \int_{-\infty}^{\infty} A(u, \nu)\, w(u - t)\, e^{i (\nu - \omega) u}\, du\, \frac{d\nu}{\sqrt{2\pi}}.
    \label{eq:double_integral}
\end{align}

Introduce the change of variables $s = u - t$, i.e., $u = t + s$, so that $du = ds$. Then~\eqref{eq:double_integral} becomes
\begin{align}
    X_{\mathrm{STFT}}(t, \omega)
    &= \int_{-\infty}^{\infty} \int_{-\infty}^{\infty} A(t + s, \nu)\, w(s)\, e^{i (\nu - \omega)(t + s)}\, ds\, \frac{d\nu}{\sqrt{2\pi}} \nonumber \\
    &= \int_{-\infty}^{\infty} \left[ \int_{-\infty}^{\infty} w(s)\, e^{i (\nu - \omega) s}\, ds \right] A(t + s, \nu)\, e^{i (\nu - \omega) t}\, \frac{d\nu}{\sqrt{2\pi}}.
    \label{eq:after_substitution}
\end{align}

We now invoke the local stationarity assumption: over the effective support of the window $w(s)$ (i.e., where $|s|$ is small compared to the time scale of variation of $A$), the function $A(t + s, \nu)$ varies slowly with $s$. Formally, we assume $ A(t + s, \nu) \approx A(t, \nu) \, \text{for all } s \text{ such that } w(s) \text{ is non-negligible}$. Under this approximation, $A(t, \nu)$ can be treated as constant with respect to $s$ and factored out of the inner integral:
\begin{align}
    X_{\mathrm{STFT}}(t, \omega)
    &\approx \int_{-\infty}^{\infty} A(t, \nu)\, e^{i (\nu - \omega) t} \left[ \int_{-\infty}^{\infty} w(s)\, e^{i (\nu - \omega) s}\, ds \right] \frac{d\nu}{\sqrt{2\pi}} \nonumber \\
    &= \int_{-\infty}^{\infty} A(t, \nu)\, e^{i (\nu - \omega) t}\, \widehat{w}(\omega - \nu)\, \frac{d\nu}{\sqrt{2\pi}},
    \label{eq:approx_stft}
\end{align}
where $\widehat{w}(\xi) = \int_{-\infty}^{\infty} w(s) e^{-i \xi s}\, ds$ denotes the Fourier transform of the window. Equation~\eqref{eq:approx_stft} shows that, under local stationarity, the STFT is approximately a \emph{frequency-smoothed} version of $A(t, \nu)$, modulated by a linear phase term. The smoothing kernel is precisely the spectral profile $\widehat{w}$ of the analysis window, which induces unavoidable \emph{time--frequency smearing} (or leakage). If, in addition, $A(t, \nu)$ is approximately constant over the bandwidth of $\widehat{w}$, that is, over frequencies $|\nu - \omega| \lesssim B_w$, where $B_w$ is the effective bandwidth of the window, then the convolution in Equation~\eqref{eq:approx_stft} is dominated by the value at $\nu = \omega$. In this case,
\[
    X_{\mathrm{STFT}}(t, \omega) \approx A(t, \omega)\, \widehat{w}(0)\, e^{i (\omega - \omega)t} = A(t, \omega),
\]
since $\widehat{w}(0) = \int w(s)\, ds = 1$ by normalization. Thus, up to negligible smearing terms due to the finite support of $w$, the STFT recovers the local spectral amplitude, which completes the proof.

\medskip

The approximation improves as the window narrows in time (better temporal localization) \emph{and} $A(u, \nu)$ varies more slowly in $u$. However, due to the uncertainty principle, narrowing the window in time broadens $\widehat{w}$ in frequency, increasing spectral smearing. Hence, the accuracy of the approximation depends on a trade-off governed by the signal’s intrinsic time--frequency structure and the choice of window.

\subsection{Relationships between SNR, $r$ and $K$}
\label{apdx:subsec:snr-r-K}
Suppose a additive noise model for the observed time series: $x_n = s_n + \varepsilon_n, \, n = 0, 1, \dots, N-1.$ where  $s_n$ is the latent clean signal, $\varepsilon_n$ is zero-mean additive noise, assumed to be uncorrelated with $s_n$, and drawn from a standard normal distribution.
Let $\hat{X}_k$, $\hat{S}_k$, and $\hat{E}_k$ denote the discrete Fourier transforms of $x_n$, $s_n$, and $\varepsilon_n$, respectively. Then:
$$
\hat{X}_k = \hat{S}_k + \hat{E}_k.
$$
The total signal energy (by Parseval’s theorem~\cite{chen2001freeway}) is:

$$
E_s = \lVert s \rVert_2^2 = \sum_{n=0}^{N-1} s_n^2 = \sum_{k=0}^{N-1} \frac{1}{N} |\hat{S}_k|^2.
$$
The expected total noise energy is:

$$
E_n = \mathbb{E}[\lVert \varepsilon\rVert_2^2] = N \sigma^2.
$$
Thus, the signal-to-noise ratio is defined as:

$$
\mathrm{SNR} := \frac{E_s}{E_n} = \frac{\lVert s\rVert_2^2}{N \sigma^2}.
$$
The periodogram at frequency bin $k$ is:
$$
I(k) = \frac{1}{N} |\hat{X}_k|^2 = \frac{1}{N} |\hat{S}_k + \hat{E}_k|^2.
$$
Taking expectation, we obtain:

$$
\mathbb{E}[I(k)] = \frac{1}{N} |\hat{S}_k|^2 + \frac{1}{N} \mathbb{E}[|\hat{E}_k|^2],
$$
where the first term is the signal energy and the second is the noise energy, defined as $E_s$ and $E_n$, respectively. Now consider selecting the smallest set of frequencies such that the cumulative observed energy reaches a fraction $r \in(0,1)$ of the total:

$$
\sum_{k \in \mathcal{K}} I(k) \geq r \sum_{k=0}^{N-1} I(k)
$$
In expectation, and suppose that the selected frequencies exactly capture a fraction $r$ of the total energy, this becomes:
$$
\mathbb{E}\left[\sum_{k \in \mathcal{K}} I(k)\right] \approx r\left(E_s+E_n\right)
$$
Then:
$$
\mathbb{E}\left[\sum_{k \in \mathcal{K}} I(k)\right] \approx E_s + K \sigma^2,
$$
since the signal energy $E_s$ is concentrated in $K$ dominant frequency bins, and each selected bin contributes $\sigma^2$ of noise energy in expectation (due to the flat spectrum of white noise). The total expected observed energy is $E_s + M \sigma^2 = E_s + E_n$, where $E_n = M \sigma^2$ is the total noise energy. Enforcing the energy retention criterion yields:
$$
E_s + K \sigma^2 \approx r (E_s + M \sigma^2).
$$
Solving for $r$, we obtain:
$$
r \approx \frac{E_s + K \sigma^2}{E_s + M \sigma^2}.
$$
we substitute $E_s = \mathrm{SNR} \cdot M \sigma^2$ to get:
$$
r \approx \frac{\mathrm{SNR} \cdot M \sigma^2 + K \sigma^2}{\mathrm{SNR} \cdot M \sigma^2 + M \sigma^2}
= \frac{N \cdot \mathrm{SNR} + K}{M (\mathrm{SNR} + 1)}.
$$
Equivalently, solving for SNR gives:

$$
\mathrm{SNR} \approx \frac{M r - K}{M (1 - r)}.
$$
This relationship reveals why fixing $r$ leads to adaptive behavior, when the signal has a high SNR,  only a few frequencies are needed; when noise dominates, requiring a larger set of frequencies to capture sufficient signal content. Hence, this eliminates the need for manual tuning like previous methods, requiring only a lower-bound  of $r$ to balance computational overhead and performance. We also provide analysis in Appendix~\ref{apdx:energy_distribution} to justify the selection of $r$.
Thus, the energy-thresholding rule automatically adjusts the retained frequencies $K$ to the data’s SNR, without requiring prior knowledge about the frequency structure or the noise level.

\section{Experimental Details}
\label{apdx:experiment}

Table~\ref{tab:experiment_benchmarks} shows a summary of the benchmarks. As a supplementary evaluation, we perform a classification task using the learned representations to validate their effectiveness in capturing task-relevant features.

\begin{table*}[htbp]
  \centering
  \footnotesize
  \setlength{\tabcolsep}{1pt}
  \caption{Summary of experiment benchmarks.}
    \begin{tabular}{m{3cm}|m{4.7cm}|m{2cm}|c|m{2.8cm}}
    \toprule
    \multicolumn{1}{m{3cm}|}{\centering Tasks} & 
    \multicolumn{1}{m{4.5cm}|}{\centering Benchmarks} & 
    \multicolumn{1}{m{2.2cm}|}{\centering Metrics} & 
    Input &  
    \multicolumn{1}{m{2.8cm}}{\centering Output} \\ 
    \midrule
    
    Long-term Forecasting (Single variate and \newline Multivariate) & ETT\{m1, m2, h1, h2\}, ECL, EXG, Traffic, Weather, Solar & MSE, MAE & 96 & \{96, 192, 336, 720\} \\
    \midrule
    Probabilistic \newline Forecasting & ETT\{m1, m2, h1, h2\}, ECL, EXG, Traffic, ILI, Solar & CRPS, QICE, \newline MSE, MAE & 168 & 36 (ILI),  192 (others)  \\
    \midrule
    Classification & UEA (5 subsets) & Accuracy & 315-2500 & \multicolumn{1}{c}{-} \\
    \bottomrule
    \end{tabular}%
  \label{tab:experiment_benchmarks}%
\end{table*}%

\subsection{Datasets Details}

\textbf{ETT}\footnote{\url{https://github.com/zhouhaoyi/ETDataset}.} (Electricity Transformer Temperature): The electricity transformer datasets contain power load information and oil temperature as the target variable. \{ETTh1, ETTh2\} are 1-hour-level datasets, and \{ETTm1, ETTm2\} are 15-minute-level datasets. Each data point consists of seven features: one ``oil temperature'' (OT) target and six power load features.

\textbf{ECL}\footnote{\url{ https://github.com/laiguokun/multivariate-time-series-data}.} (Electricity Consuming Load): The electricity consumption dataset contains hourly electricity consumption (in Kwh) of 321 clients in two years. 

\textbf{EXG} (Exchange-Rate): The Exchange-Rate dataset contains daily exchange rates of eight foreign countries including Australia, British, Canada, Switzerland, China, Japan, New Zealand and Singapore ranging from 1990 to 2016. 

\textbf{Traffic}\footnote{\url{ http://pems.dot.ca.gov}.\label{fn:PEMS}} (Traffic): The  data contains 48 months (2015-2016) of hourly road occupancy rates (between 0 and 1) measured by 862 sensors on freeways in the San Francisco Bay area.

\textbf{Weather}\footnote{\url{ https://www.ncei.noaa.gov/data/local-climatological-data}.} (Weather): The weather dataset contains 21 different meteorological features of the U.S. in 4 years, including visibility, wind speed, etc. 

\textbf{Solar}\footnote{\url{https://www.nrel.gov/grid/solar-power-data.html}.} (Solar Energy): The solar power production dataset records the hourly energy output from 137 photovoltaic (PV) plants in Alabama over the year 2006.

\textbf{ILI}\footnote{\url{https://gis.cdc.gov/grasp/fluview/fluportaldashboard.html}.} (Influenza-like Illness): The ILI dataset contains weekly percentages of patients reported with influenza-like illness with multiple flu seasons containing 7 features. 

\textbf{UEA Classification}\footnote{\url{https://timeseriesclassification.com}.}: The UEA \& UCR Time Series Classification Archive~\cite{bagnall2018uea} is a widely used benchmark suite for evaluating time series classification algorithms. It was introduced to unify and standardize evaluation across the field, combining datasets from the former UCR (University of California, Riverside) and UEA (University of East Anglia) archives.

\subsection{Metrics}

We adopt the following standard metrics to evaluate model performance:

\textbf{Mean Absolute Error (MAE)} measures the average magnitude of absolute prediction errors:
    \[
        \text{MAE} = \frac{1}{N} \sum_{i=1}^{N} |y_i - \hat{y}_i|,
    \]
    where $y_i$ and $\hat{y}_i$ denote the ground-truth and predicted values for the $i$-th sample, respectively, and $N$ is the total number of samples.

\textbf{Mean Squared Error (MSE)} quantifies the average squared deviation between predictions and ground truth, penalizing larger errors more heavily:
    \[
        \text{MSE} = \frac{1}{N} \sum_{i=1}^{N} (y_i - \hat{y}_i)^2.
    \]

\textbf{Accuracy} is used for classification tasks and represents the proportion of correctly classified samples:
    \[
        \text{Accuracy} = \frac{1}{N} \sum_{i=1}^{N} \mathbb{I}(\hat{y}_i = y_i),
    \]
    where $\mathbb{I}(\cdot)$ is the indicator function that returns 1 if the prediction matches the true label and 0 otherwise.

\textbf{CRPS}: The continuous ranked probability score (CRPS)~\cite{matheson1976scoring}  measures the compatibility of a cumulative distribution function (CDF) $F$ with an observation $x$ as: 
\begin{equation}
\operatorname{CRPS}(F, x)=\int_{\mathbb{R}}(F(z)-\mathbb{I}\{x \leq z\})^2 \mathrm{~d} z,
\end{equation}
where $\mathbb{I}_{z < q}$ denotes the indicator function. By approximating the predictive cumulative distribution function (CDF) $F$ with its empirical counterpart,
$\hat{F}(z) = \frac{1}{S} \sum_{s=1}^S \mathbb{I}\left\{ x^{0,s} \leq z \right\},$
constructed from $S$ samples $x^{0,s} \sim F$, the CRPS can be directly estimated from the generated trajectories. In our experiments, we use samples $S = 100$ to approximate the distribution $F$.

\textbf{QICE}: The Quantile Interval Calibration Error (QICE)~\cite{han2022card} measures the discrepancy between the empirical coverage of predicted quantile intervals (QIs) and their nominal target coverage. Specifically, given $I$ non-overlapping quantile intervals that partition the predictive distribution into equal-probability regions (each with nominal coverage $1/I$), QICE computes the average absolute deviation between the observed proportion of ground-truth values falling within each interval and the ideal proportion $1/I$. 

Formally, let $\hat{y}_n^{\text{low}_m}$ and $\hat{y}_n^{\text{high}_m}$ denote the lower and upper bounds of the $m$-th quantile interval for the $n$-th sample. The empirical coverage of the $m$-th interval is:
\[
r_m = \frac{1}{N} \sum_{n=1}^N \mathbb{I}\left\{ \hat{y}_n^{\text{low}_m} \leq y_n \leq \hat{y}_n^{\text{high}_m} \right\},
\]
where $\mathbb{I}\{\cdot\}$ is the indicator function. QICE is then defined as:
\begin{equation}
    \operatorname{QICE} := \frac{1}{I} \sum_{m=1}^I \left| r_m - \frac{1}{I} \right|.
\end{equation}
Under perfect calibration (i.e., when the predicted distribution matches the true data distribution), we expect $r_m \approx 1/I$ for all $m$, yielding $\text{QICE} \to 0$. Following~\citet{li2024tmdm}, we partition the probability range into $I = 10$ equal-sized decile-based intervals to compute QICE.

\subsection{Spectral Energy Sparsity}
\label{apdx:energy_distribution}

A key observation from the spectral analysis of real-world time series is the phenomenon of spectral energy sparsity, which refers to the fact that most of the signal energy is concentrated in a small subset of frequency components, while the majority of frequencies contribute negligible power. This property is empirically validated across multiple benchmark datasets, as illustrated in Figure~\ref{fig:energy_distribution}. As shown in Figure~\ref{fig:energy_distribution}, for all tested datasets, including electricity demand (ETTm1, ETTm2), traffic flow, weather variables, exchange rates, and solar energy, the dominant spectral energy is highly localized in a few low-frequency bins. Specifically, we highlight the top frequencies accounting for 90\% of total spectral energy using pink bars, revealing that this critical energy concentration often resides within only a handful of frequency indices. For instance, in ETTm1, 90\% of the energy is captured by just six frequencies; similarly, ETTm2 and ETTh1 require only two-five frequency components to retain 90\% of signal energy. Even more strikingly, in the ExchangeRate dataset, over 80\% of energy lies in a single dominant frequency, indicating strong periodicity.  Across all datasets, the number of required frequencies grows slowly with increasing energy ratio,indicating that even at lower thresholds (e.g., 95\%), only a small fraction of frequencies are responsible for most of the signal dynamics. Notably, even when capturing 99\% of energy, the number of involved frequencies rarely exceeds 10–20 out of hundreds, demonstrating a high degree of frequency-wise energy concentration. The analysis together verify our choice of a fixed energy ratio of $r=0.9$ is effective across the experiments.

\begin{figure}[h]
\vskip -0.05in
\begin{center}
\centerline{\includegraphics[width=1\columnwidth]{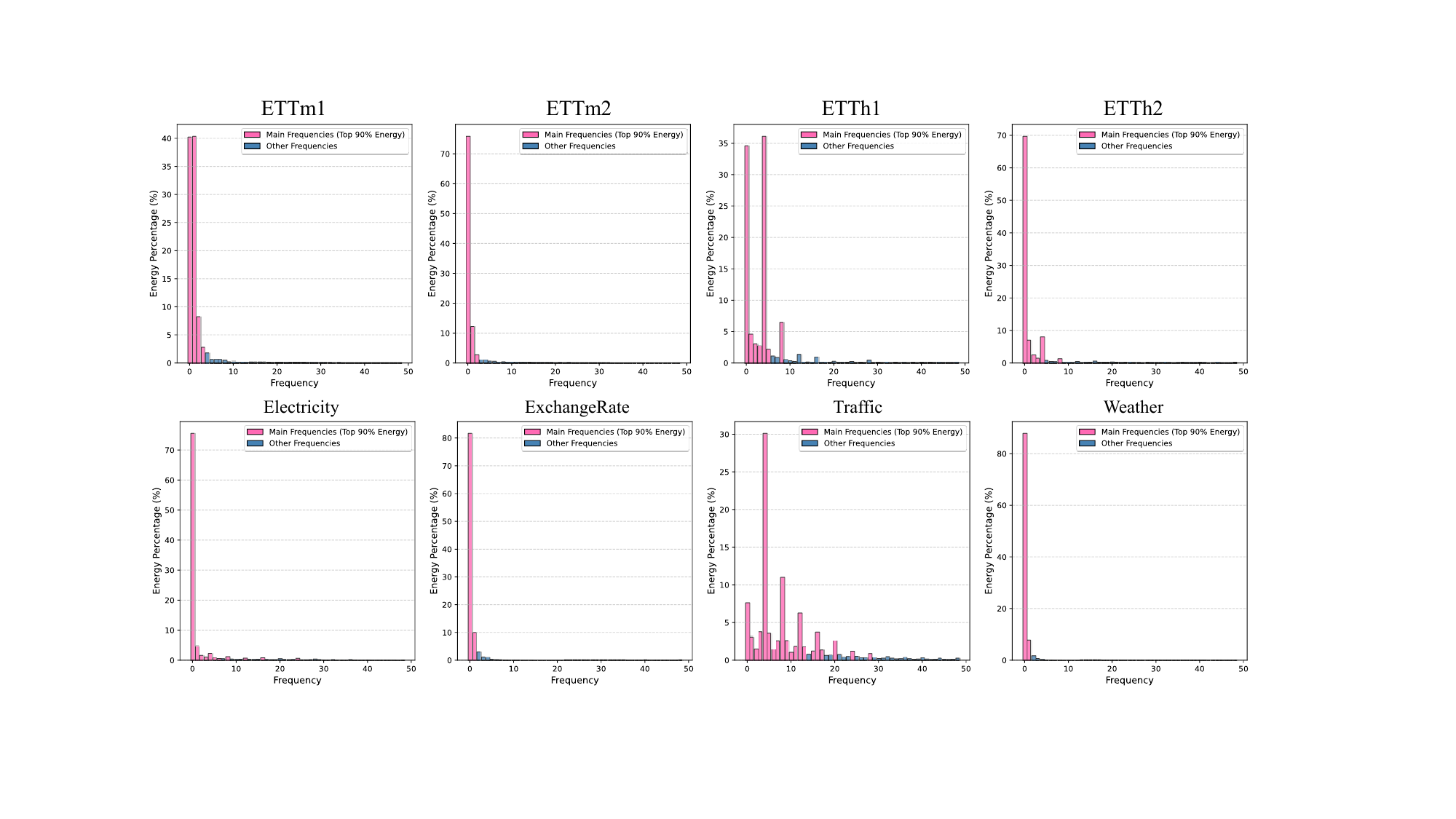}}
\caption{Energy distribution of different datasets. We use pink to highlight frequencies of top 90\% energy.}
\label{fig:energy_distribution}
\end{center}
\end{figure}

\subsection{Ablation Experiments}
\label{apdx:subsec:ablatino_exp}

To investigate the individual contributions of key components in TimeES, we conduct ablation studies on the ETTs datasets, focusing on two core modules: (1) the Evolutionary Spectra (E) module, which models the time-varying spectral structure; and (2) the DES module, responsible for generative modeling and uncertainty quantification. As shown in Table~\ref{tab:apdx:ablation_experiment_results}, removing the evolutionary spectra modeling (w/o E) leads to a significant performance degradation across all tasks. For instance, in single-variate long-term forecasting on ETTm1, MSE increases from 0.893 to 1.306, indicating that the model fails to capture the non-stationary dynamics when spectral evolution is ignored. Similar drops are observed in multi-variate settings and probabilistic forecasting, where CRPS rises from 0.310 to 0.417 without E. This confirms that explicitly modeling the time-dependent frequency content is crucial for accurate long-term prediction under changing periodicities. In contrast, removing the DES module (w/o DES) results in only minor performance changes in deterministic forecasting tasks, e.g., MSE increases by less than 5\% in most cases, suggesting that the core spectral representation remains effective even without the generative component. However, DES is essential for probabilistic forecasting: as shown in the bottom panel of Table~\ref{tab:apdx:ablation_experiment_results}, the model becomes non-generative without DES, rendering it incapable of producing predictive distributions or uncertainty estimates. Moreover, DES provides an interpretable frequency evolution matrix that enables fine-grained analysis of how dominant frequencies shift over time, a capability absent in standard Fourier-based methods. This not only enhances model transparency but also supports downstream applications such as anomaly detection and signal diagnosis.

As shown in Table~\ref{tab:apdx:ablation_experiment_results}, modeling the evolutionary spectra yields performance gains even without the DES. Notably, DES is a novel generative module; without it, the model cannot perform generative tasks such as probabilistic forecasting. Furthermore, DES provides an interpretable frequency evolution matrix that enables effective spectral analysis. These results collectively verify the importance of explicitly modeling the spectra at a finer-grained level.

\begin{table}[htbp]
  \centering
  \footnotesize
  \caption{Ablation results on ETTs datasets. We report MSE, MAE for 720-steps single- and multi-variate long-term forecasting and report CRPS, MSE for 192 steps multivariate probabilistic forecasting. The best result is denoted in \textbf{bold} characters.}
  \setlength{\tabcolsep}{12pt}
    \begin{tabular}{c|cccccccc}
    \toprule
    \multicolumn{9}{c}{Single-Variate Long-term Forecasting} \\
    \midrule
    Datasets & \multicolumn{2}{c|}{ETTm1} & \multicolumn{2}{c|}{ETTm2} & \multicolumn{2}{c|}{ETTh1} & \multicolumn{2}{c}{ETTh2} \\
    \midrule
    Variants & MSE   & \multicolumn{1}{c|}{MAE} & MSE   & \multicolumn{1}{c|}{MAE} & MSE   & \multicolumn{1}{c|}{MAE} & MSE   & MAE \\
    \midrule
    TimeES & \textbf{0.893} & \multicolumn{1}{c|}{\textbf{0.619}} & \textbf{0.590} & \multicolumn{1}{c|}{\textbf{0.553}} & \textbf{0.929} & \multicolumn{1}{c|}{\textbf{0.670}} & \textbf{0.618} & \textbf{0.585}  \\
    w/o E & 1.306  & \multicolumn{1}{c|}{0.792} & 0.673  & \multicolumn{1}{c|}{0.587} & 1.282  & \multicolumn{1}{c|}{0.840} & 0.785  & 0.676  \\
    w/o DES & 0.966  & \multicolumn{1}{c|}{0.662} & 0.639  & \multicolumn{1}{c|}{0.569} & 0.974  & \multicolumn{1}{c|}{0.699} & 0.744  & 0.646  \\
    \midrule
    \multicolumn{9}{c}{Multi-Variate  Long-term Forecasting} \\
    \midrule
    Datasets & \multicolumn{2}{c|}{ETTm1} & \multicolumn{2}{c|}{ETTm2} & \multicolumn{2}{c|}{ETTh1} & \multicolumn{2}{c}{ETTh2} \\
    \midrule
    Variants & MSE   & \multicolumn{1}{c|}{MAE} & MSE   & \multicolumn{1}{c|}{MAE} & MSE   & \multicolumn{1}{c|}{MAE} & MSE   & MAE \\
    \midrule
    TimeES & \textbf{0.465} & \multicolumn{1}{c|}{\textbf{0.445}} & \textbf{0.396} & \multicolumn{1}{c|}{\textbf{0.395}} & \textbf{0.485} & \multicolumn{1}{c|}{\textbf{0.479}} & \textbf{0.426} & \textbf{0.443} \\
    w/o E & 0.601  & \multicolumn{1}{c|}{0.510} & 0.414  & \multicolumn{1}{c|}{0.407} & 0.604  & \multicolumn{1}{c|}{0.538} & 0.439  & 0.452  \\
    w/o DES & 0.467  & \multicolumn{1}{c|}{0.447} & 0.411  & \multicolumn{1}{c|}{0.410} & 0.554  & \multicolumn{1}{c|}{0.508} & 0.432  & 0.449  \\
    \midrule
    \multicolumn{9}{c}{Multi-Variate  Probabilistic Forecasting} \\
    \midrule
    Datasets & \multicolumn{2}{c|}{ETTm1} & \multicolumn{2}{c|}{ETTm2} & \multicolumn{2}{c|}{ETTh1} & \multicolumn{2}{c}{ETTh2} \\
    \midrule
    \multicolumn{1}{c}{Variants} & CRPS  & \multicolumn{1}{c|}{MSE} & CRPS  & \multicolumn{1}{c|}{MSE} & CRPS  & \multicolumn{1}{c|}{MSE} & CRPS  & MSE \\
    \midrule
    TimeES & \textbf{0.310} & \multicolumn{1}{c|}{\textbf{0.371}} & \textbf{0.230} & \multicolumn{1}{c|}{\textbf{0.235}} & \textbf{0.349} & \multicolumn{1}{c|}{\textbf{0.474}} & \textbf{0.320} & \textbf{0.376} \\
    w/o E & 0.417  & \multicolumn{1}{c|}{0.378} & 0.289  & \multicolumn{1}{c|}{0.250} & 0.466  & \multicolumn{1}{c|}{0.592} & 0.363  & 0.387  \\
    w/o DES & \multicolumn{8}{c}{The model is not generative without DES.} \\
    \bottomrule
    \end{tabular}%
  \label{tab:apdx:ablation_experiment_results}%
\end{table}%

\subsection{Additional Baseline Experiments}
\label{subsec:addtional_baselines}
To provide a comprehensive evaluation of our proposed method, we conduct extensive experiments against six state-of-the-art baseline models spanning different architectural paradigms in time series forecasting. These include: KNF~\cite{wang2022koopman}, a Koopman neural operator-based approach; Koopa~\cite{liu2024koopa}, which leverages Koopman spectrum analysis; SKOLR~\cite{zhang2025skolr}, a structured Koopman learning framework; TimeBridge~\cite{liu2024timebridge}, a bridge-based temporal modeling architecture; TimeMixer~\cite{wang2024timemixer}, a channel-independent mixing strategy; and TimeXer~\cite{wang2024timexer}, a Transformer-based temporal encoder. This diverse set of baselines enables rigorous comparison across spectral methods, neural operators, and attention-based architectures, ensuring a thorough assessment of our method's effectiveness in multivariate time series forecasting tasks.

\begin{table}[htbp]
\centering
\footnotesize
\caption{Mean Squared Error (MSE) results on deterministic multivariate forecasting tasks across different datasets and prediction horizons. \textbf{Bold} values indicate the best performance for each setting. The last row shows the number of times each model achieves the best performance.}
\label{tab:mv_forecasting_results}
\setlength{\tabcolsep}{7pt}
\begin{tabular}{l|c|ccccccc}
\hline
\textbf{Dataset} & \textbf{Horizon} & \textbf{KNF} & \textbf{Koopa} & \textbf{SKOLR} & \textbf{TimeBridge} & \textbf{TimeMixer} & \textbf{TimeXer} & \textbf{TimeES} \\
\hline
\multirow{4}{*}{ETTm1} & 96 & 0.691 & 0.331 & 0.342 & \textbf{0.324} & 0.344 & 0.387 & 0.328 \\
 & 192 & 0.707 & 0.377 & 0.397 & 0.365 & 0.368 & 0.441 & \textbf{0.364} \\
 & 336 & 0.718 & 0.402 & 0.445 & 0.395 & \textbf{0.394} & 0.514 & 0.402 \\
 & 720 & 0.742 & 0.466 & 0.536 & 0.457 & \textbf{0.455} & 0.592 & 0.465 \\
\hline
\multirow{4}{*}{ETTm2} & 96 & 0.230 & 0.182 & 0.194 & 0.178 & 0.178 & 0.364 & \textbf{0.176} \\
 & 192 & 0.274 & 0.245 & 0.256 & 0.245 & 0.246 & 0.526 & \textbf{0.240} \\
 & 336 & 0.385 & 0.301 & 0.321 & 0.307 & 0.305 & 0.870 & \textbf{0.300} \\
 & 720 & 0.479 & 0.402 & 0.423 & 0.411 & 0.402 & 0.902 & \textbf{0.396} \\
\hline
\multirow{4}{*}{ETTh1} & 96 & 0.700 & 0.394 & 0.427 & 0.378 & 0.376 & 0.438 & \textbf{0.374} \\
 & 192 & 0.717 & 0.442 & 0.470 & 0.431 & \textbf{0.415} & 0.497 & 0.430 \\
 & 336 & 0.723 & 0.489 & 0.526 & 0.478 & \textbf{0.446} & 0.528 & 0.470 \\
 & 720 & 0.716 & 0.482 & 0.531 & 0.533 & \textbf{0.472} & 0.639 & 0.485 \\
\hline
\multirow{4}{*}{ETTh2} & 96 & 0.362 & 0.305 & 0.330 & 0.296 & \textbf{0.287} & 0.583 & 0.293 \\
 & 192 & 0.430 & 0.392 & 0.419 & 0.376 & 0.372 & 0.590 & \textbf{0.371} \\
 & 336 & 0.478 & 0.426 & 0.443 & 0.417 & 0.429 & 0.612 & \textbf{0.427} \\
 & 720 & 0.506 & 0.447 & 0.463 & 0.427 & 0.447 & 0.652 & \textbf{0.426} \\
\hline
\textbf{1st Count} & & 0 & 0 & 0 & 1 & 5 & 0 & 9 \\
\hline
\end{tabular}
\end{table}

\subsection{Synthetic Experiments}
\label{apdx:chirp_synthetic_exp}

To evaluate the ability of TimeES to capture \textit{spectral evolution}, a key challenge in non-stationary time series, we conduct a synthetic experiment on a chirp signal, which exhibits continuously varying frequency and amplitude over time. The signal is defined as:
\[
x(t) = A(t) \cdot \sin(2\pi f(t) t),
\]
where $A(t)$ and $f(t)$ are slowly increasing functions, simulating real-world phenomena such as radar signals or mechanical vibrations with changing dynamics, we let the periodicity change from 64 to 128, the magnitude change from 1 to 3, and other experiment settings are consistent with the deterministic forecasting experiment 96 step forecasting. As shown in Figure~\ref{fig:chirp_signal_exp_fig}(b), the training set contains only the initial portion of the chirp signal, where the frequency increases gradually. The testing set extends into a region with higher frequencies not seen during training, thus forming a challenging extrapolation task that requires modeling the \textit{evolution} of spectral components rather than mere interpolation. Figure~\ref{fig:chirp_signal_exp_fig}(c) visualizes the forecasting results across different models. While most methods (e.g., PatchTST, iTransformer, TimesNet) fail to track the increasing frequency and produce oscillations with fixed periodicity, FreTS shows partial success due to its frequency-aware design but still suffers from phase drift. In contrast, TimeES (NES) accurately captures both the evolving frequency and amplitude, producing a smooth and faithful reconstruction of the true signal. The quantitative comparison in Figure~\ref{fig:chirp_signal_exp_fig}(a) further confirms this advantage: TimeES achieves the lowest MSE ($0.290$), significantly outperforming all baselines. Notably, even Fourier-domain methods like FreTS, despite their explicit frequency modeling, struggle to maintain accuracy under spectral evolution, highlighting the limitation of static spectral bases. 
This synthetic experiment demonstrates that TimeES's learned evolutionary spectra can effectively model non-stationary dynamics by adapting to changing frequency content, providing strong empirical support for its capability in capturing long-term spectral trends.

\begin{figure}[h]
\begin{center}
\centerline{\includegraphics[width=1\linewidth]{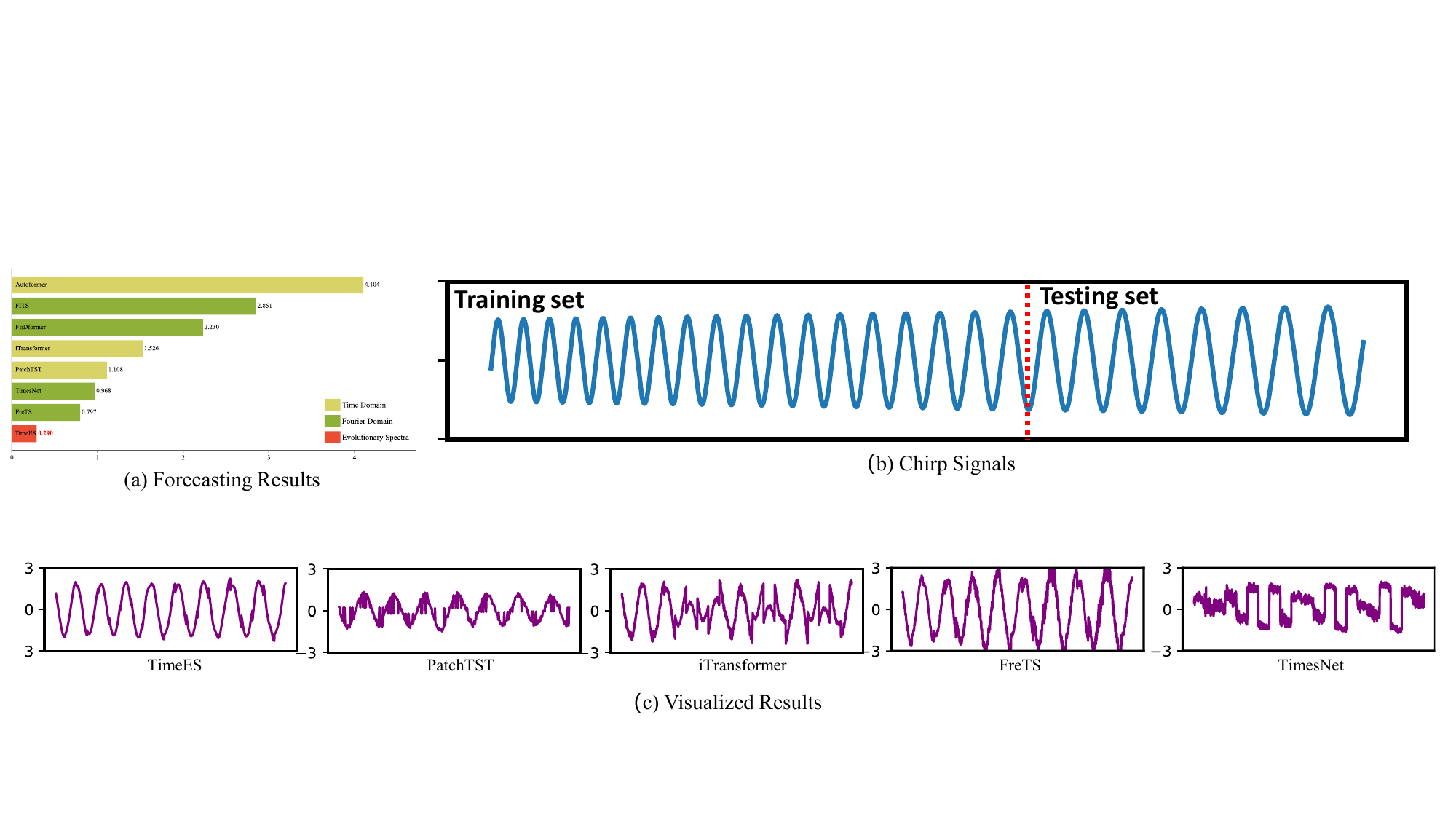}}
\caption{(a-f) Predictive uncertainty quantification by TimeES across different datasets. We plot the last 500 predicted time steps of the first four variables in each dataset. The orange line shows the ground truth, the blue line represents the predicted mean, and the gray shaded region indicates the 96\% prediction interval.}
\label{fig:chirp_signal_exp_fig}
\end{center}
\end{figure}

\subsection{Time Series Classification}
\label{subsec:exp:timeseries_classification}

We evaluate the discriminative power of the learned spectral representation $\mathbf{A}$ on five multivariate time series classification tasks from the UEA archive~\cite{bagnall2018uea}, as shown in Table~\ref{tab:classification} and summarized in Figure~\ref{fig:classificatoin_results}. Our method, TimeES, achieves an average accuracy of 62.96\%, outperforming the previous best model, TimesNet, by 2.85\%, demonstrating the effectiveness of modeling non-stationary spectral evolution for general-purpose time series representation. Notably, TimeES achieves state-of-the-art performance across multiple datasets: it reaches 92.83\% on \texttt{SelfRegulationSCP1}, a dataset with complex brainwave dynamics, and 36.81\% on EthanolConcentration, where fine-grained frequency patterns are critical. On UWaveGestureLibrary, our method performs competitively (86.50\%) despite being slightly behind iTransformer (87.50\%), suggesting that while temporal context is important, evolutionary spectral modeling captures complementary information. While other frequency-domain models like FreTS (59.04\%) and FEDformer (56.17\%) show improved performance over pure time-domain methods, they still fall short of TimeES. This suggests that simply applying frequency analysis is insufficient; the key lies in modeling how spectral support evolves over time, especially under non-stationary conditions such as changing periodicity. The performance of TimeES confirms that learning an adaptive and temporally coherent spectral representation not only benefits long-term forecasting but also enables more robust and transferable features for downstream tasks.

\begin{figure}[!h]
\vskip -0.05in
\begin{center}
\centerline{\includegraphics[width=0.8\linewidth]{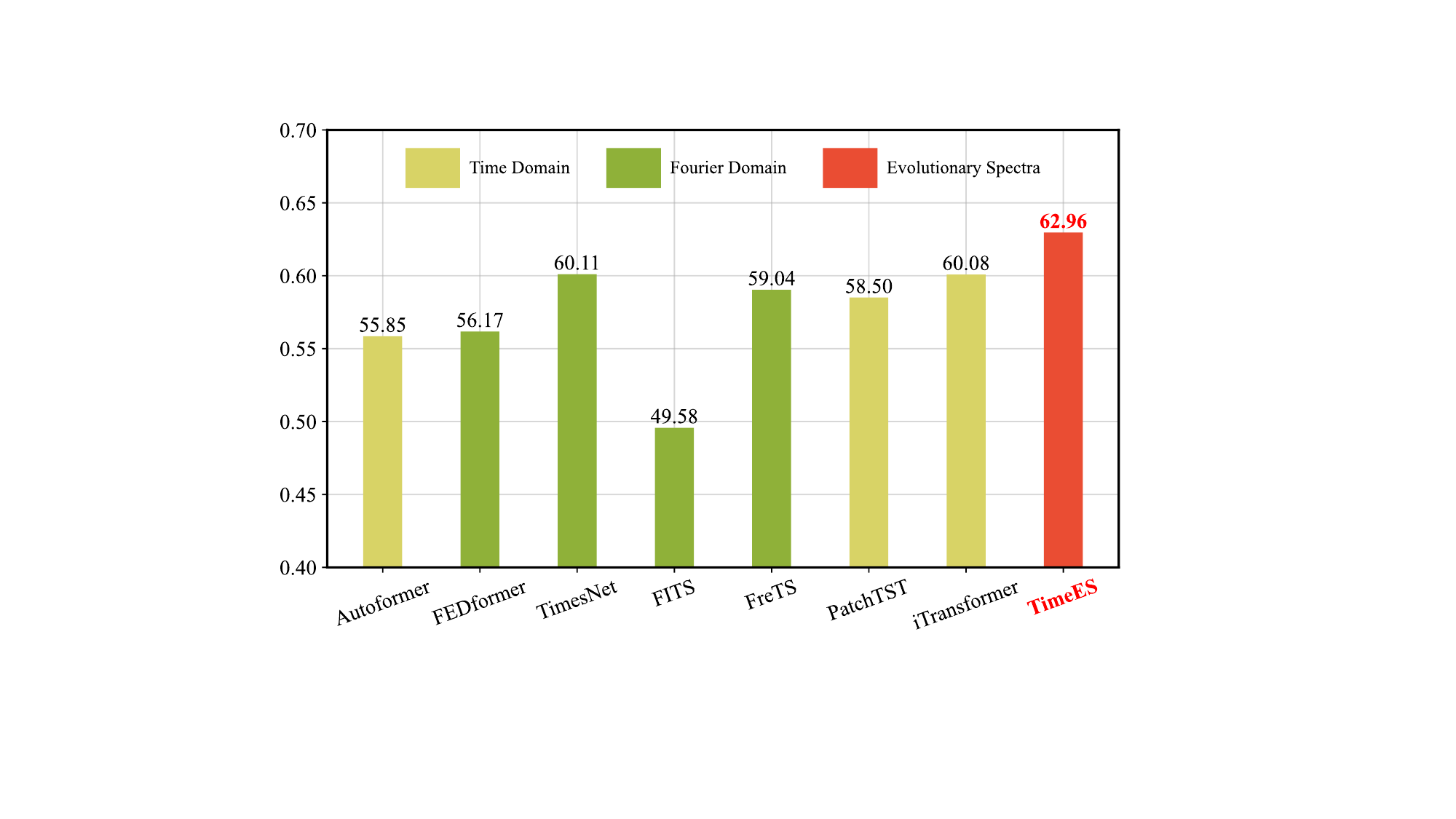}}
\vskip -0.1in 
\caption{Average accuracy across datasets.}
\label{fig:classificatoin_results}
\end{center}
\vskip -0.3in 
\end{figure}

\begin{table}[htbp]
  \centering
  \footnotesize
    \setlength{\tabcolsep}{4pt}
\caption{Classification accuracy on five datasets. \textbf{Bold} and \underline{underline} indicate best and second-best results respectively.}
    \begin{tabular}{ccccccccc}
    \toprule
          & Autoformer & FEDformer & TimesNet & FITS  & FreTS & PatchTST & iTransformer & TimeES \\
    \midrule
    EthanolConcentration & 31.60 & 31.20 & 35.70 & 25.10 & 24.79 & \underline{36.42} & 31.60 & \textbf{36.81} \\
    MotorImagery & 48.40 & 45.00 & 50.40 & 49.60 & 50.40 & \underline{52.80} & \textbf{55.60} & 52.00 \\
    SelfRegulationSCP1 & 84.00 & 88.70 & \underline{91.80} & 72.70 & 89.56 & 81.78 & 88.39 & \textbf{92.83} \\
    StandWalkJump & 29.33 & 30.67 & 37.33 & 38.67 & \underline{45.33} & 41.33 & 37.33 & \textbf{46.67} \\
    UWaveGestureLibrary & 85.90 & 85.30 & 85.30 & 61.81 & 85.10 & 80.19 & \textbf{87.50} & \underline{86.50} \\
    \midrule
    Avg.  & 55.85 & 56.17 & \underline{60.11} & 49.58 & 59.04 & 58.50 & 60.08 & \textbf{62.96} \\
    \bottomrule
    \end{tabular}%
    \label{tab:classification}%
\end{table}%


\section{Efficiency Analysis} 
\label{subsect:efficiency_analysis}
To evaluate the computational and memory efficiency of TimeES, we conduct an efficiency analysis on the ETTh1 dataset  probabilistic forecasting and 720-step long-term multivariate  forecasting. As shown in Figure~\ref{fig:efficiency_analysis}, TimeES achieves the highest efficiency among all compared methods in both generative and deterministic forecasting tasks. Notably, our probabilistic forecasting significantly outperforms prior diffusion-based approaches, which typically require hundreds or thousands of iterative denoising steps to generate predictions. In contrast, TimeES produces accurate samples in a single forward pass. Moreover, unlike frequency-domain models such as FITS that rely on explicit operations (e.g., DFT and iDFT) to compute spectral representations, TimeES directly learns time-series dynamics through a compact neural network without any explicit spectral transformation, which enables TimeES to surpass even highly optimized frequency-based models in deterministic forecasting speed, demonstrating its high efficiency.

\begin{figure}[!h]
\vskip -0.05in
\begin{center}
\centerline{\includegraphics[width=0.8\columnwidth]{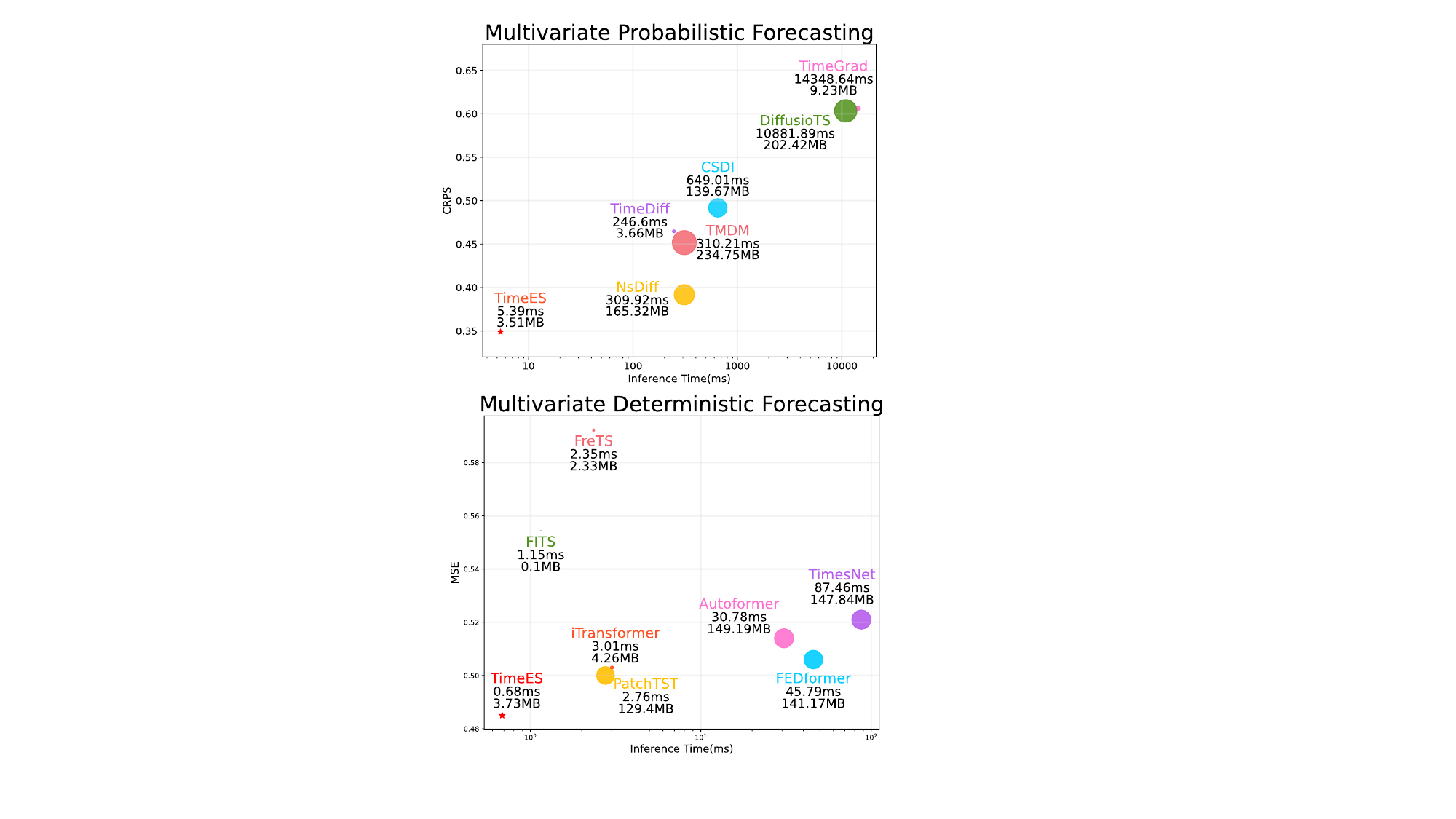}}
\vskip -0.1in 
\caption{Efficiency analysis on ETTh1 dataset.}
\label{fig:efficiency_analysis}
\end{center}
\vskip -0.3in 
\end{figure}






\section{Hyperparameter Sensitivity}
\label{apdx:hyperparameter_sensitivity}

We introduce a hyper-parameter $r$ to retain the frequency components that collectively account for the top $r$ ratio of total energy. The sensitivity analysis of this hyper-parameter is provided in Figure~\ref{fig:hyperparameter_sensitivity}. We observe that our empirically selected value $r = 0.9$ consistently yields stable performance across various forecasting tasks. Notably, on the Traffic dataset, aggressive frequency pruning leads to significant performance degradation, highlighting the importance of modeling multiple frequency components, which is consistent with the broad energy distribution observed in Figure~\ref{fig:energy_distribution}. Moreover, our energy-based selection strategy reliably captures the dominant frequencies, in contrast to naive top-$k$ approaches that may inadvertently miss critical spectral components due to rigid cardinality constraints. Importantly, on several datasets like ETTs multivariate in forecasting, appropriately selecting frequencies not only avoids performance loss but actually improves forecasting accuracy, further demonstrating that effective frequency selection can enhance model performance to a certain extent.

\begin{figure}[!h]
\begin{center}
\centerline{\includegraphics[width=1\columnwidth]{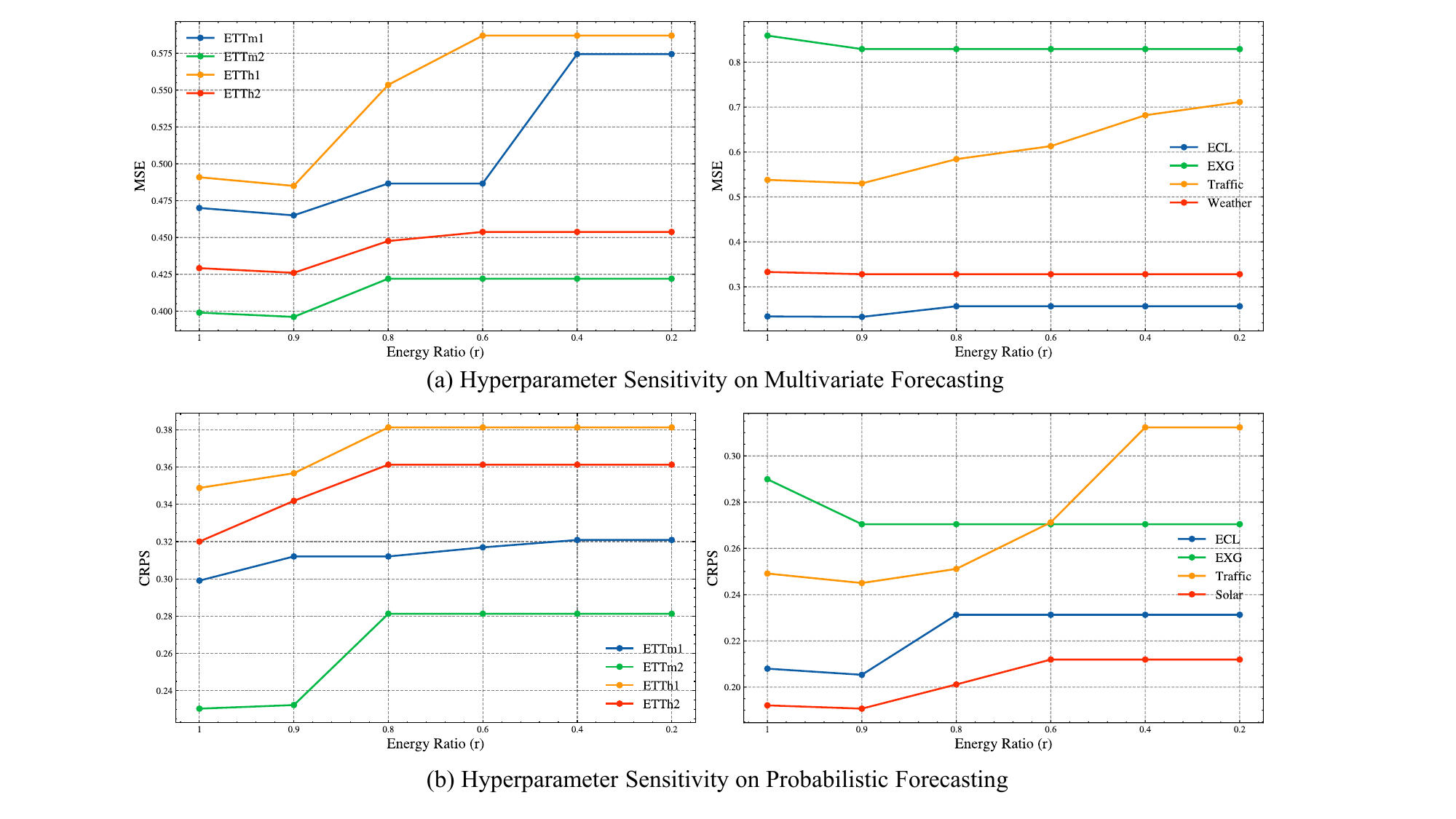}}
\caption{We conduct a sensitivity analysis of the hyper-parameter $r$ in multivariate long-term forecasting and probabilistic forecasting tasks. Specifically, we set the prediction horizon to 720 time steps. We report the Mean Squared Error (MSE) for deterministic forecasting and the Continuous Ranked Probability Score (CRPS) for probabilistic forecasting, evaluated on the ETT (ETTm1, ETTm2, ETTh1, ETTh2), ECL, EXG, Traffic, Weather, and Solar datasets.}
\label{fig:hyperparameter_sensitivity}
\end{center}
\end{figure}






\clearpage
\section{Additional Experimental Results}
\label{apdx:full_other_results}

Due to page limits, we place some full results of our experiments in the following: classification in Table~\ref{tab:classification},  probabilistic forecasting in Table~\ref{tab:probabilistic forecasting1} and Table~\ref{tab:probabilistic forecasting2}, single variable deterministic forecasting in Table~\ref{tab:uni_results},  multivariate deterministic forecasting results in Table~\ref{tab:multi_results}, backbone enhancement results in Table~\ref{tab:decompo_results}. The statistical significance is within 3\%.

\begin{table}[htbp]
  \centering
    \footnotesize

  \caption{Probabilistic forecasting performance (CRPS, QICE) across nine datasets.}
  \setlength{\tabcolsep}{5.5pt}
    \begin{tabular}{ccccccccccc}
    \toprule
    Models & Datasets & ETTh1 & ETTh2 & ETTm1 & ETTm2 & ECL   & EXG   & ILI   & Solar & Traffic \\
    \midrule
    TimeGrad & CRPS  & 0.606 & 1.212 & 0.647 & 0.775 & 0.397 & 0.826 & 1.140 & \underline{0.293} & 0.407 \\
    (2021) & QICE  & 6.731 & 9.488 & 6.693 & 6.962 & 7.118 & 9.464 & 6.519 & 7.378 & 4.581 \\
    \midrule
    CSDI  & CRPS  & 0.492 & 0.647 & 0.524 & 0.817 & 0.577 & 0.855 & 1.244 & 0.432 & 1.418 \\
    (2022) & QICE  & 3.107 & 5.331 & 2.828 & 8.106 & 7.506 & 7.864 & 7.693 & 9.957 & 13.613 \\
    \midrule
    TimeDiff & CRPS  & 0.465 & 0.471 & 0.464 & 0.316 & 0.750 & 0.433 & 1.153 & 0.700 & 0.771 \\
    (2023) & QICE  & 14.931 & 14.813 & 14.795 & 13.385 & 15.466 & 14.556 & 14.942 & 14.914 & 15.439 \\
    \midrule
    DiffusionTS & CRPS  & 0.603 & 1.168 & 0.574 & 1.035 & 0.633 & 1.251 & 1.612 & 0.470 & 0.668 \\
    (2024) & QICE  & 6.423 & 9.577 & 5.605 & 9.959 & 8.205 & 10.411 & 10.090 & \underline{6.627} & 5.958 \\
    \midrule
    TMDM  & CRPS  & 0.452 & 0.383 & 0.375 & 0.289 & 0.461 & 0.336 & 0.967 & 0.350 & 0.557 \\
    (2024) & QICE  & 2.821 & 4.471 & 2.567 & 2.610 & 10.562 & 6.393 & \underline{6.217} & 9.342 & 10.676 \\
    \midrule
    NsDiff & CRPS  & \underline{0.392} & \underline{0.358} & \underline{0.346} & \underline{0.256} & \underline{0.290} & \underline{0.324} & \textbf{0.806} & 0.300 & \underline{0.378} \\
    (2025) & QICE  & \textbf{1.470} & \textbf{2.074} & \textbf{2.041} & \underline{2.030} & \underline{6.685} & \underline{5.930} & \textbf{5.598} & 6.820 & \underline{3.601} \\
    \midrule
    TimeES    & CRPS  & \textbf{0.349} & \textbf{0.320} & \textbf{0.310} & \textbf{0.230} & \textbf{0.205} & \textbf{0.270} & \underline{0.834} & \textbf{0.193} & \textbf{0.245} \\
    (ours) & QICE  & \underline{1.921} & \underline{2.479} & \underline{2.247} & \textbf{1.984} & \textbf{1.941} & \textbf{4.302} & 6.930 & \textbf{4.768} & \textbf{2.177} \\
    \bottomrule
    \end{tabular}%
  \label{tab:probabilistic forecasting1}%
\end{table}%

\begin{table}[htbp]
  \centering
    \footnotesize
\caption{Deterministic forecasting performance (MSE, MAE) across nine datasets.}
\setlength{\tabcolsep}{6.3pt}
    \begin{tabular}{ccccccccccc}
    \toprule
    Models & Datasets & ETTh1 & ETTh2 & ETTm1 & ETTm2 & ECL   & EXG   & ILI   & Solar & Traffic \\
    \midrule
    TimeGrad & MAE   & 0.813 & 1.496 & 0.831 & 0.967 & 0.504 & 1.058 & 1.414 & 0.446 & 0.535 \\
    (2021) & MSE   & 1.062 & 3.462 & 1.218 & 1.690 & 0.505 & 1.567 & 4.197 & 0.475 & 0.983 \\
    \midrule
    CSDI  & MAE   & 0.708 & 0.900 & 0.752 & 1.069 & 0.822 & 1.081 & 1.481 & 0.675 & 0.925 \\
    (2022) & MSE   & 0.949 & 1.226 & 1.002 & 1.723 & 1.007 & 1.701 & 4.515 & 0.763 & 1.731 \\
    \midrule
    TimeDiff & MAE   & \underline{0.479} & \underline{0.485} & 0.477 & \underline{0.333} & 0.764 & 0.446 & 1.169 & 0.713 & 0.784 \\
    (2023) & MSE   & \underline{0.517} & \underline{0.456} & 0.537 & \underline{0.268} & 0.879 & 0.402 & 3.958 & 0.821 & 1.350 \\
    \midrule
    DiffusionTS & MAE   & 0.774 & 1.411 & 0.744 & 1.232 & 0.856 & 1.564 & 1.788 & 0.740 & 0.815 \\
    (2024) & MSE   & 1.089 & 3.273 & 1.030 & 2.372 & 1.072 & 3.628 & 6.053 & 0.749 & 1.473 \\
    \midrule
    TMDM  & MAE   & 0.607 & 0.490 & \underline{0.455} & 0.395 & 0.359 & 0.430 & 1.175 & 0.316 & 0.425 \\
    (2024) & MSE   & 0.696 & 0.512 & 0.494 & 0.315 & 0.257 & 0.334 & 3.636 & 0.250 & 0.679 \\
    \midrule
    NsDiff & MAE   & 0.523 & 0.490 & \underline{0.455} & 0.352 & \underline{0.306} & \underline{0.412} & \textbf{0.985} & \underline{0.307} & \underline{0.373} \\
    (2025) & MSE   & 0.594 & 0.514 & \underline{0.488} & 0.281 & \underline{0.209} & \underline{0.300} & \textbf{2.846} & \underline{0.242} & \underline{0.637} \\
    \midrule
    TimeES & MAE   & \textbf{0.455} & \textbf{0.408} & \textbf{0.403} & \textbf{0.292} & \textbf{0.263} & \textbf{0.347} & \underline{1.043} & \textbf{0.234} & \textbf{0.304} \\
    (ours) & MSE   & \textbf{0.474} & \textbf{0.376} & \textbf{0.371} & \textbf{0.235} & \textbf{0.180} & \textbf{0.242} & \underline{3.052} & \textbf{0.179} & \textbf{0.456} \\
    \bottomrule
    \end{tabular}%
  \label{tab:probabilistic forecasting2}%
\end{table}%

\label{apdx:uniandmulti_results_apdx_exp}
\begin{table*}[htbp]
  \centering
  \footnotesize
  \caption{Full results of the single variate forecasting task. We compare extensive competitive models under different prediction lengths following the setting of iTransformer~\cite{liu2023itransformer}. The input sequence length is set to 96 for all baselines. The statistical significance is within 1\%.}
  \setlength{\tabcolsep}{2.1pt}
%
  \label{tab:uni_results}%
\end{table*}%

\begin{table}[htbp]
  \centering
  \footnotesize
  \setlength{\tabcolsep}{2.1pt}
  \caption{Full results of the multivariate forecasting task. We compare extensive competitive models under different prediction lengths following the setting of iTransformer~\cite{liu2023itransformer}. The input sequence length is set to 96 for all baselines. The statistical significance is within 1\%.}
    %
%
  \label{tab:multi_results}%
\end{table}%

\begin{table}[htbp]
  \centering
  \caption{We evaluate TimeES as a decomposition framework on long-term single variate and multivariate forecasting tasks in ETTs datasets, the best results are denote in \textbf{bold} characters. }
  \footnotesize
  \setlength{\tabcolsep}{3.6pt}
    %
%
  \label{tab:decompo_results}%
\end{table}%

\clearpage
\section{Additional Showcases}
\label{apdx:other_prob_showcases}

We present other probabilistic forecasting showcases in Figure~\ref{fig:other_showcases_NES}.

\begin{figure}[!h]
\vskip -0.05in
\begin{center}
\centerline{\includegraphics[width=1\columnwidth]{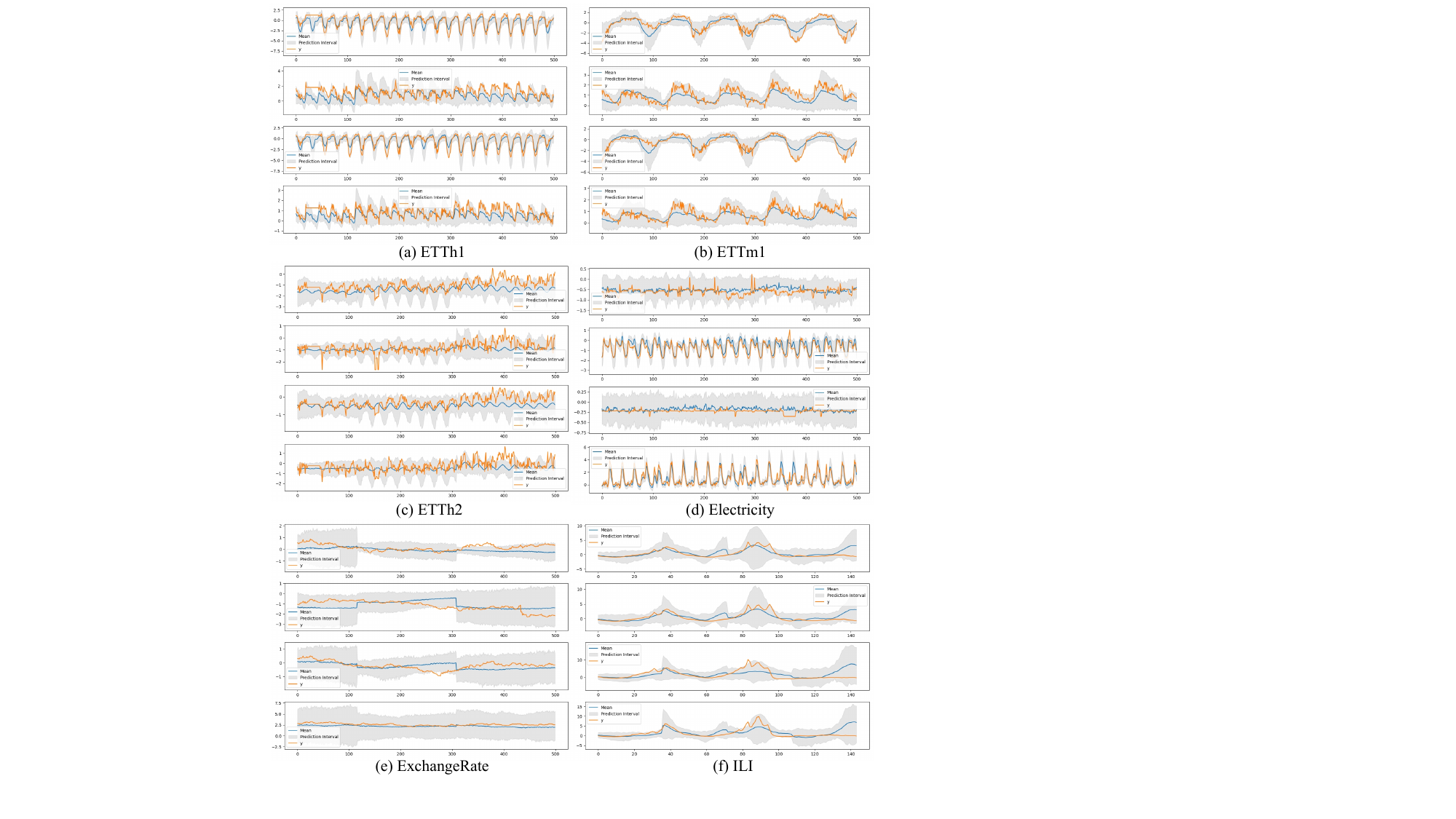}}
\vskip -0.1in 
\caption{(a-f) Predictive uncertainty quantification by TimeES across different datasets. We plot the last 500 predicted time steps of the first four variables in each dataset. The orange line shows the ground truth, the blue line represents the predicted mean, and the gray shaded region indicates the 96\% prediction interval.}
\label{fig:other_showcases_NES}
\end{center}
\vskip -0.3in 
\end{figure}

\section{Additional Interpretability Analysis}

We present additional results of the spectral decomposition analysis in Figure~\ref{fig:interpretability_analysis_adpx_fig}.

\label{apdx:interpretability_analysis}
\begin{figure}[!h]
\vskip -0.05in
\begin{center}
\centerline{\includegraphics[width=0.95\columnwidth]{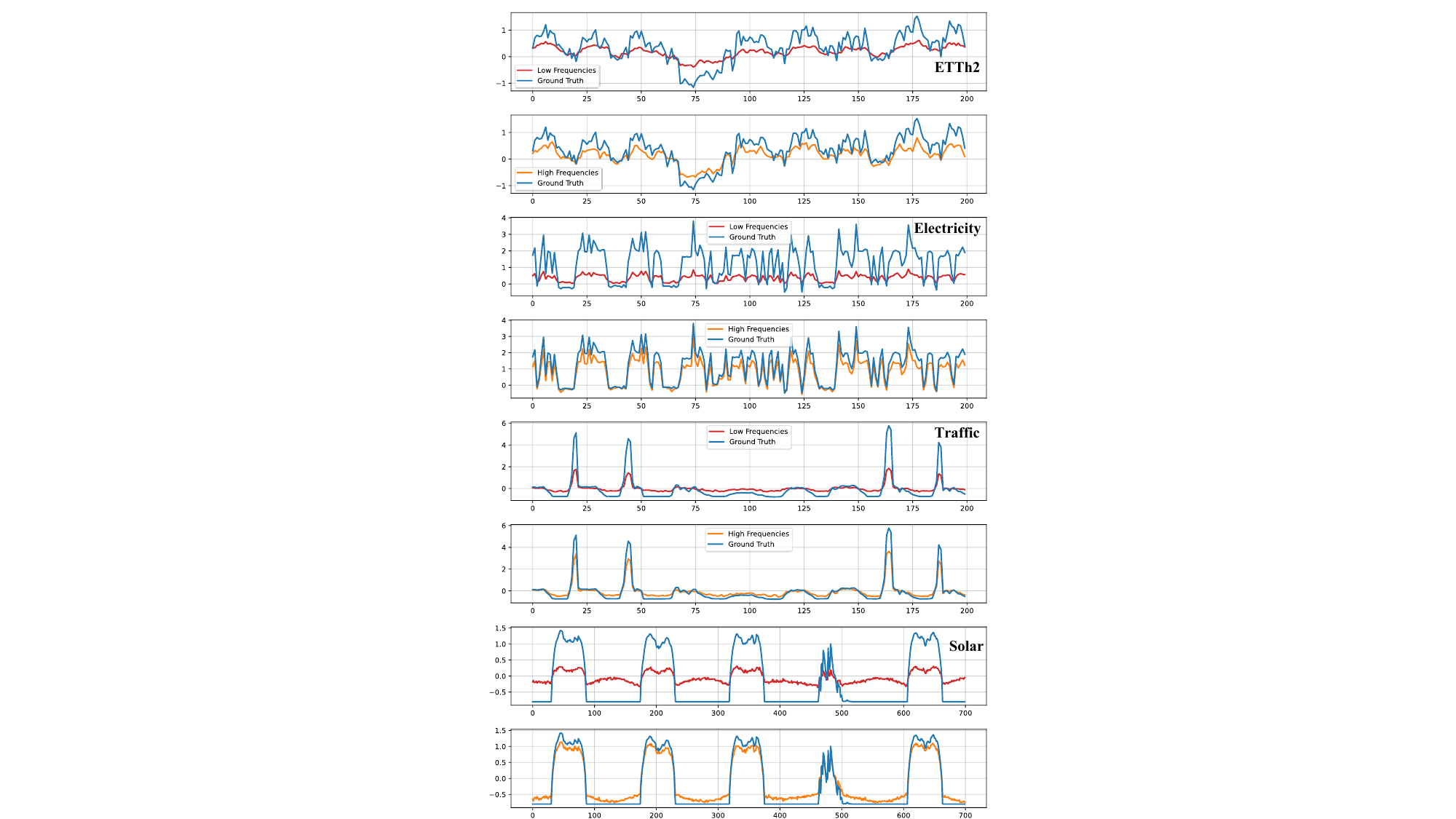}}
\caption{Spectral decomposition analysis on four dataset. We visualize the high and low frequencies.}
\label{fig:interpretability_analysis_adpx_fig}
\end{center}
\end{figure}

\clearpage

\textbf{Per-channel uncertainty attribution.} The variance decomposition in Equation~\eqref{eq:variance_decomposition} attributes the predictive uncertainty of each channel to individual frequencies. Since the learned $\sigma_{W_k}$ are nearly constant (Appendix~\ref{apdx:spectral_rv}), the structure of the uncertainty lies in the amplitude, and the quantity to read is the share of each frequency in the predictive variance, $dS(n,\omega_k)/\sum_{k'}dS(n,\omega_{k'})$. We compute these shares on ETTh1 ($M=168$, i.e., 85 frequencies, for which a uniform share would be 1.18\%), averaged over all forecast steps of 672 test windows. Table~\ref{tab:per_channel_uncertainty} reports the shares at the trend (DC), weekly, daily, 12-hour, and 8-hour frequencies, and Figure~\ref{fig:per_channel_uncertainty} shows all 85 frequencies. For the load channels, the trend carries 1.4 to 2.2 times the uniform share, with the largest values for HULL and MULL, whereas OT (oil temperature) places more uncertainty in the weekly band than any load channel (1.51\% vs.\ 0.44\% to 1.05\%), consistent with oil temperature being a slowly varying thermal state.

\begin{table}[htbp]
\centering
\footnotesize
\caption{Share (\%) of the predictive variance contributed by selected frequencies on ETTh1, averaged over 672 test windows. A uniform share would be 1.18\%. The largest value in each column is in \textbf{bold}.}
\label{tab:per_channel_uncertainty}
\begin{tabular}{lccccc}
\toprule
Channel & Trend (DC) & Weekly (168h) & Daily (24h) & 12h & 8h \\
\midrule
HUFL (high useful load) & 1.95 & 0.46 & 1.60 & 0.80 & 0.95 \\
HULL (high useless load) & 2.55 & 0.83 & 1.34 & 0.71 & 0.78 \\
MUFL (middle useful load) & 2.02 & 0.44 & 1.57 & 0.79 & 0.94 \\
MULL (middle useless load) & \textbf{2.56} & 0.84 & 1.32 & 0.70 & 0.81 \\
LUFL (low useful load) & 1.63 & 0.84 & \textbf{1.81} & \textbf{0.94} & 0.96 \\
LULL (low useless load) & 2.07 & 1.05 & 1.34 & 0.83 & 1.10 \\
OT (oil temperature) & 1.73 & \textbf{1.51} & 1.54 & 0.69 & \textbf{1.14} \\
\bottomrule
\end{tabular}
\end{table}

\begin{figure}[htbp]
\centering
\includegraphics[width=\linewidth]{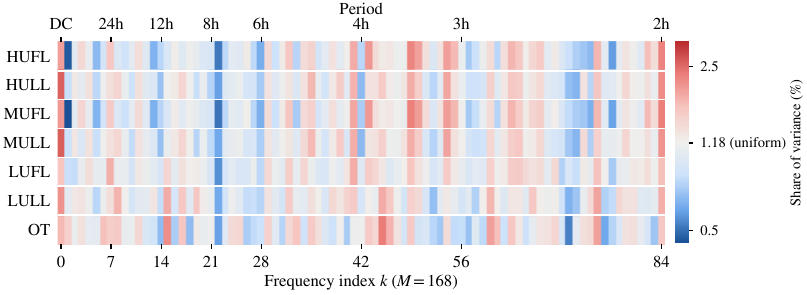}
\caption{Share of the predictive variance contributed by each frequency on ETTh1 for each channel ($M=168$, 85 frequencies). Red and blue cells indicate shares above and below the uniform share of 1.18\%, respectively; $k=1$ corresponds to the weekly period.}
\label{fig:per_channel_uncertainty}
\end{figure}

\section{Relation to Koopman Operators and Dynamic Mode Decomposition}
\label{apdx:koopman_dmd}
Koopman operator theory represents nonlinear dynamics by a linear operator acting on a lifted space of observables. Besides deterministic forecasters such as KNF~\cite{wang2022koopman}, Koopa~\cite{liu2024koopa}, and SKOLR~\cite{zhang2025skolr}, which we compare with in Appendix~\ref{subsec:addtional_baselines}, this line of work also learns continuous spectra~\cite{lusch2018deep} and includes probabilistic formulations such as Deep Probabilistic Koopman~\cite{mallen2021deep}, Koopman VAE~\cite{naiman2024generative}, and KooNPro~\cite{zheng2025koonpro}. Both Koopman models and TimeES describe time series through a spectrum, but they treat the spectrum differently. Taking KooNPro as a representative probabilistic Koopman model, we compare the two in terms of the randomness in the spectrum, the spectral decomposition, and the parameterization of non-stationarity, and then discuss the forecasting procedure, the relation to DMD, and an empirical comparison.

\subsection{Randomness in the Spectrum}
Both models place a Gaussian distribution on a spectral quantity, but on different ones. KooNPro places it on the frequency, $\omega_t \sim \mathcal{N}(\mu(\kappa_t), \sigma(\kappa_t))$, so its variance is a variance \emph{of frequencies} and expresses uncertainty about which dynamics generated the data. In evolutionary spectra, the spectrum \emph{is} the variance of the observation, resolved by frequency. Since the $W_k$ in Theorem~\ref{theorem:main_theorem} are independent, the predictive variance decomposes additively over the retained frequencies:
\begin{equation}
dS(n,\omega_k) = \frac{1}{M}\,|A(n,\omega_k)|^2\,\sigma_{W_k}^2, \qquad \sum_{k\in\mathcal{K}} dS(n,\omega_k) = \operatorname{Var}[X_n],
\label{eq:variance_decomposition}
\end{equation}
where $\sigma_{W_k}^2 = \mathbb{E}|W_k-\mu_{W_k}|^2$.\footnote{For real-valued series synthesized under Hermitian symmetry (Theorem~\ref{theorem:hermitian}), each retained frequency is paired with its conjugate, which changes the constant $1/M$ but not the additive decomposition. The check below uses the exact constant of our implementation.} We verify this identity with the trained ETTh1 probabilistic model on one test batch (32 windows, 7 channels, and 192 steps), comparing the analytic sum with the empirical variance of 4{,}000 sampled paths. Averaged over all steps, the analytic sum is 1.310 and the empirical variance is 1.307, with a mean relative error of 1.8\% (maximum 9.3\%), which matches the sampling error expected from 4{,}000 Gaussian samples (a mean absolute relative error of $\sqrt{4/(4000\pi)}\approx 1.8\%$). The randomness modeled by TimeES is therefore the stochastic excitation of the process itself, i.e., the discrete counterpart of $\mathbb{E}|dZ(\omega)|^2$ in Definition~\ref{theorem:evolutionary_spectra}.

\subsection{Spectral Decomposition}
The two spectra decompose different objects. A Koopman spectrum decomposes the \emph{evolution operator} in a lifted space: the modes live in the latent state $h$ and reach the observation only through a decoder $\phi^{-1}$, where $\phi$ denotes the encoder. Since $\phi^{-1}$ is nonlinear, $\phi^{-1}\big(\sum_j h^{(j)}\big) \neq \sum_j \phi^{-1}\big(h^{(j)}\big)$, so individual modes are not separable signal components. An evolutionary spectrum decomposes the \emph{observation itself}, and it does so additively:
\begin{equation}
x_n = \sum_{k\in\mathcal{K}} x_n^{(k)}, \qquad x_n^{(k)} = \frac{1}{\sqrt{M}}\,A(n,\omega_k)\,W_k\,e^{i\omega_k n}.
\end{equation}
Any sub-band of frequencies therefore yields a valid time-domain component on its own, without a decoder. This additivity underlies the band-wise reconstruction in Section~\ref{subsec:interpretability_analysis} and the per-channel uncertainty attribution in Appendix~\ref{apdx:interpretability_analysis}.

\subsection{Parameterization of Non-stationarity}
\label{apdx:subsec:am_fm}
KooNPro moves the frequencies ($\omega_t$ is re-drawn at every step), which is a form of frequency modulation. DES fixes the frequencies on the DFT grid of the observation and lets their amplitudes evolve, which is amplitude modulation, the form prescribed by Priestley's construction~\cite{priestley1965evolutionary}. This costs no generality: since $A(n,\omega_k)$ is an unconstrained function of $n$, an off-grid component $c\,e^{i\omega^\ast n}$ is represented exactly on a grid frequency $\omega_k$ by $A(n,\omega_k)=c\,e^{i(\omega^\ast-\omega_k)n}$, so a time-varying amplitude already subsumes a shifted frequency. Fixing the grid also keeps the synthesis basis orthogonal, without which $|A|^2$ would no longer be a power in the Parseval sense.

\begin{table}[htbp]
\centering
\footnotesize
\caption{CRPS (mean$\pm$std over 3 seeds) of TimeES with frequencies fixed on the DFT grid or learned jointly with the amplitudes.}
\label{tab:learnable_freq}
\begin{tabular}{lcc}
\toprule
Dataset & Fixed grid & Learnable \\
\midrule
ETTh1 & \textbf{0.363$\pm$0.006} & 0.372$\pm$0.008 \\
Off-grid synthetic & \textbf{0.156$\pm$0.018} & 0.157$\pm$0.017 \\
\bottomrule
\end{tabular}
\end{table}

\textbf{Fixed grid vs.\ learnable frequencies.} To test whether the fixed grid restricts the model, we make the frequencies learnable (initialized at the selected bins and trained jointly with the amplitudes) and compare against the fixed grid on ETTh1 and on a synthetic series whose periods deliberately miss the grid. The synthetic series is a sum of two sinusoids with periods of 30 and 45 steps plus Gaussian noise; with $M=168$, these correspond to 5.60 and 3.73 cycles per window, both between DFT bins. Both variants use the same training settings and seeds. As shown in Table~\ref{tab:learnable_freq}, the accuracy is unchanged and the frequencies stay on the grid: the largest movement of any frequency is 0.065 of a bin (0.02 to 0.03 on average). On the off-grid series, reaching the true components would require the nearest selected bins, 6 and 4, to move by 0.40 and 0.27 of a bin, roughly six and four times further than the optimizer actually moves them. The freedom to leave the grid is therefore available but unused, as the above argument predicts.

\subsection{Recursive vs.\ Closed-form Forecasting}
KooNPro draws a global dynamics representation $S$ from the neural-process context, which is frozen as $S_C$ at test time, and then applies its Koopman operator $\mathcal{U}$ recursively. DES instead predicts the whole amplitude field at once, so the time index enters only through a fixed Fourier phase and the entire horizon is a single matrix product:
\begin{equation}
z_H=\phi^{-1}\big(\mathcal{U}^{H-1}(\phi(z_1), S_C)\big) \qquad \text{vs.} \qquad \mathbf{y}=\frac{1}{\sqrt{M}}\big(\hat{\mathbf{A}}\odot\mathbf{F}\big)\mathbf{w},
\end{equation}
where $H$ denotes the forecast horizon. The left side requires $H-1$ sequential applications of $\mathcal{U}$, and since the frequency sampled at each step feeds into the next one, perturbations propagate rather than average out. The right side involves no recursion: lengthening the horizon adds terms to a sum rather than links to a chain, and a single draw of $\{W_k\}$ keeps each sampled path spectrally coherent.

\subsection{Relation to Dynamic Mode Decomposition}
DMD~\cite{schmid2022dynamic} fits a linear operator $\mathbf{G}$ between consecutive snapshot matrices, $\mathbf{X}_1\approx\mathbf{G}\mathbf{X}_0$, and reconstructs the series from its eigen-decomposition as $\mathbf{x}_n=\sum_j b_j\boldsymbol{\psi}_j e^{(\gamma_j+i\nu_j)n}$, where $\boldsymbol{\psi}_j$ is a mode with amplitude $b_j$, frequency $\nu_j$, and growth rate $\gamma_j$. Each DMD mode thus oscillates at a single frequency under an exponential envelope, and both are fixed once the operator is fitted. In the notation of DES, a DMD reconstruction corresponds to deterministic weights $W_k=1$ and amplitudes of the restricted form $A(n,\omega_k)\propto e^{\gamma_j n}e^{i(\nu_j-\omega_k)n}$, following the off-grid argument in Appendix~\ref{apdx:subsec:am_fm}. DES removes this restriction: the amplitudes are arbitrary functions of $n$ predicted from the lookback window, the whole horizon is produced at once instead of by iterating a one-step operator, and the random weights $W_k$ turn the decomposition into a generative model.

\subsection{Empirical Comparison with KooNPro}
The experimental protocols of KooNPro and TimeES differ in both history length and horizon: KooNPro uses a history of $L=10$ steps on the ETT datasets and a horizon of $H=24$, whereas our benchmark uses $L=168$ and $H=192$. We therefore ran the official implementation of KooNPro under both history lengths and at both horizons, with 3 seeds for each setting, and report the CRPS in Table~\ref{tab:koonpro}. TimeES outperforms KooNPro in all settings. At $H=24$, it reduces the CRPS by 27\% to 54\% with $L=168$, and by 33\% to 42\% with $L=10$, except on ETTm1, where 10 steps of 15-minute data cover only 2.5 hours and the margin narrows to 5\%. At the horizon of our benchmark ($H=192$), the CRPS of KooNPro increases by roughly an order of magnitude, and 83\% to 88\% of the test points fall below its 10th predictive percentile, which is consistent with the error accumulation of recursive forecasting discussed above.

\begin{table}[htbp]
\centering
\footnotesize
\caption{CRPS comparison with KooNPro on the ETT datasets (3 seeds per setting). $L$ denotes the history length and $H$ the forecast horizon; the original setting of KooNPro is $L=10$ and $H=24$, and the TimeES results for $L=168$, $H=192$ are from Table~\ref{tab:probabilistic_forecasting_results}. \textbf{Bold} indicates the better result.}
\label{tab:koonpro}
\begin{tabular}{llcccc}
\toprule
Setting & Model & ETTh1 & ETTh2 & ETTm1 & ETTm2 \\
\midrule
\multirow{2}{*}{$L=10$, $H=24$} & KooNPro & 0.527 & 0.351 & 0.395 & 0.277 \\
 & TimeES & \textbf{0.355} & \textbf{0.202} & \textbf{0.375} & \textbf{0.175} \\
\midrule
\multirow{2}{*}{$L=168$, $H=24$} & KooNPro & 0.398 & 0.380 & 0.352 & 0.290 \\
 & TimeES & \textbf{0.292} & \textbf{0.206} & \textbf{0.175} & \textbf{0.132} \\
\midrule
\multirow{2}{*}{$L=168$, $H=192$} & KooNPro & 3.079 & 2.922 & 3.465 & 3.656 \\
 & TimeES & \textbf{0.349} & \textbf{0.320} & \textbf{0.310} & \textbf{0.230} \\
\bottomrule
\end{tabular}
\end{table}

\section{Further Discussion and Analysis}
\label{apdx:further_discussion}

\subsection{Frequency Support and Evolving Spectra}
\label{apdx:freq_support}
The energy-based selection in Section~\ref{Sec:4.2} fixes a set of frequencies $\mathcal{K}$, whereas the premise of TimeES is that spectra evolve. The two are compatible because they concern different objects. In DES (Theorem~\ref{theorem:main_theorem}), non-stationarity is carried by the time-varying complex amplitude $A(n,\omega_k)$, not by the support $\mathcal{K}$ of the retained frequencies. $\mathcal{K}$ only determines which columns of the complete DFT grid are modeled, and a time-varying amplitude on a fixed grid can represent evolving spectral content, including frequency drift, since amplitude modulation between neighboring grid frequencies is equivalent to a varying instantaneous frequency (Appendix~\ref{apdx:subsec:am_fm}). The chirp experiment in Appendix~\ref{apdx:chirp_synthetic_exp} illustrates this: the test region contains frequencies never seen during training, yet TimeES tracks both the evolving frequency and the evolving amplitude. Moreover, $\mathcal{K}$ is estimated from the average periodogram of the entire training split rather than from individual lookback windows, so it reflects the spectral support of the dataset rather than a snapshot of a single window.

To quantify how well this support transfers to unseen data, we select $\mathcal{K}$ on the training split only ($M=96$ and $r=0.9$) and measure the fraction of the spectral energy of the test split captured by these frequencies. As shown in Table~\ref{tab:energy_transfer}, the train-selected support captures 92.3\% of the test energy on average, close to the target ratio $r=0.9$, while the evolution of the spectrum within this support is modeled by $A(n,\omega_k)$.

\begin{table}[htbp]
\centering
\footnotesize
\setlength{\tabcolsep}{4pt}
\caption{Fraction of the spectral energy of the test split captured by the frequencies selected on the training split ($M=96$, $r=0.9$).}
\label{tab:energy_transfer}
\begin{tabular}{lccccccccccc}
\toprule
Dataset & ETTh1 & ETTh2 & ETTm1 & ETTm2 & ECL & EXG & Traffic & Weather & Solar & ILI & Avg. \\
\midrule
Coverage & 0.885 & 0.862 & 0.914 & 0.900 & 0.915 & 0.992 & 0.904 & 0.965 & 0.928 & 0.963 & 0.923 \\
\bottomrule
\end{tabular}
\end{table}

\subsection{The Spectral Random Variables}
\label{apdx:spectral_rv}
\textbf{Parameterization.} The parameters $\mu_{W_k}$ and $\sigma_{W_k}$ are learnable constants, one pair for each selected frequency and each channel ($K\times C$ pairs), shared across the forecast horizon. The horizon dependence of the predictive distribution therefore enters entirely through $A(n,\omega_k)$, consistent with the ES construction, in which the randomness is carried by the spectral measure $dZ(\omega)$ with $\mathbb{E}|dZ(\omega)|^2=d\omega$ and all non-stationary structure is carried by $A(t,\omega)$. Training recovers this structure: on ETTh1, the learned $\sigma_{W_k}$ stay within about 4\% of their initial value $\sqrt{2}$ (unit variance for both the real and imaginary parts), ranging from 1.36 to 1.42 over all 595 frequency-channel pairs. The spectral variance of $\mathbf{w}$ is thus nearly flat across frequencies, and the modulation of uncertainty over time and frequency is expressed by $|A(n,\omega_k)|$, with the per-step predictive variance given by Equation~\eqref{eq:variance_decomposition}.

\textbf{Expressiveness.} The synthesis is linear in $\mathbf{w}$, but its coefficients are not: $A(n,\omega_k)$ is a nonlinear function of the input, so DES is a conditional location-scale model whose mean and full predictive covariance are both predicted nonlinearly from the history. This yields two properties that models with independent per-step noise lack. First, a single draw of $\{W_k\}$ is shared by the entire horizon, so a sampled trajectory stays spectrally coherent instead of fluctuating independently at each step. Second, the covariance between any two forecast steps is available in closed form rather than only through sampling; for the complex synthesis in Theorem~\ref{theorem:main_theorem}, $\operatorname{Cov}[X_n,X_m]=\frac{1}{M}\sum_{k}A(n,\omega_k)\overline{A(m,\omega_k)}\,\sigma_{W_k}^2 e^{i\omega_k(n-m)}$. Empirically, the Gaussian assumption does not appear restrictive: TimeES attains the best QICE on 5 of 9 datasets and the second best on 3 (Table~\ref{tab:probabilistic forecasting1}), against diffusion-based baselines with unconstrained generators. Its cost is limited flexibility in the tails of the per-step distribution, and extending DES with non-Gaussian or mixture spectral measures is a natural direction for future work.

\subsection{Sensitivity to Input Length}
\label{apdx:input_length}
We examine the effect of the input length $N$ on probabilistic forecasting, setting $M=N$ and keeping all other settings of Section~\ref{subsec:probabilistic_forecasting} (horizon 192, 3 seeds). As shown in Table~\ref{tab:input_length}, TimeES is relatively robust to the input length. On ETTh1, even a 24-step input (one day) stays within 2\% of the default setting. On ETTm1 (15-minute granularity), only the 24-step input degrades markedly (+28\%), since 6 hours do not cover a daily cycle; from 72 steps on, the CRPS stays within 4\% of the default setting, and the 96- and 336-step inputs improve on it.

\begin{table}[htbp]
\centering
\footnotesize
\caption{CRPS of TimeES with different input lengths (horizon 192, mean over 3 seeds). The default input length is 168. \textbf{Bold} indicates the best result.}
\label{tab:input_length}
\begin{tabular}{lccccc}
\toprule
Input length & 24 & 72 & 96 & 168 (default) & 336 \\
\midrule
ETTh1 & 0.356 & \textbf{0.336} & 0.339 & 0.349 & 0.369 \\
ETTm1 & 0.398 & 0.321 & \textbf{0.298} & 0.310 & 0.299 \\
\bottomrule
\end{tabular}
\end{table}

\subsection{Operating Range under Limited Data}
\label{apdx:limited_data}
TimeES estimates the evolutionary spectrum directly from data rather than assuming an analytic form, which underlies its gains in Table~\ref{tab:probabilistic_forecasting_results}: it achieves the best CRPS on 8 of 9 datasets and the best QICE on 5 of 9 (Table~\ref{tab:probabilistic forecasting1}). The same property also sets its boundary, since estimating a full time-frequency amplitude field requires sufficient samples. ILI is the extreme case in our benchmark: its 966 time steps yield only 473 training windows, compared with 17{,}420 time steps for ETTh1, and it is also where the variance across seeds is largest (a CRPS standard deviation of 0.172 over 3 seeds, compared with less than 0.01 on ETTh1). NsDiff~\cite{ye2025non} is less affected in this regime because its time-varying variance is estimated from a trailing sliding window over the observed series, which remains well defined with few samples and adapts freely to asymmetric residuals; the same flexibility explains its better decile-level QICE on ETTh1, ETTh2, and ETTm1. Learning evolutionary spectra under limited data is therefore an important direction for future work.

\clearpage

\newpage
\section*{NeurIPS Paper Checklist}

\begin{enumerate}

\item {\bf Claims}
    \item[] Question: Do the main claims made in the abstract and introduction accurately reflect the paper's contributions and scope?
    \item[] Answer: \answerYes{} 
    \item[] Justification: The main claims made in the abstract and introduction accurately reflect the paper's contributions and scope.
    \item[] Guidelines:
    \begin{itemize}
        \item The answer \answerNA{} means that the abstract and introduction do not include the claims made in the paper.
        \item The abstract and/or introduction should clearly state the claims made, including the contributions made in the paper and important assumptions and limitations. A \answerNo{} or \answerNA{} answer to this question will not be perceived well by the reviewers. 
        \item The claims made should match theoretical and experimental results, and reflect how much the results can be expected to generalize to other settings. 
        \item It is fine to include aspirational goals as motivation as long as it is clear that these goals are not attained by the paper. 
    \end{itemize}

\item {\bf Limitations}
    \item[] Question: Does the paper discuss the limitations of the work performed by the authors?
    \item[] Answer: \answerYes{} 
    \item[] Justification: We discuss the limitations of the work in Section~\ref{sec:conclusion}.
    \item[] Guidelines:
    \begin{itemize}
        \item The answer \answerNA{} means that the paper has no limitation while the answer \answerNo{} means that the paper has limitations, but those are not discussed in the paper. 
        \item The authors are encouraged to create a separate ``Limitations'' section in their paper.
        \item The paper should point out any strong assumptions and how robust the results are to violations of these assumptions (e.g., independence assumptions, noiseless settings, model well-specification, asymptotic approximations only holding locally). The authors should reflect on how these assumptions might be violated in practice and what the implications would be.
        \item The authors should reflect on the scope of the claims made, e.g., if the approach was only tested on a few datasets or with a few runs. In general, empirical results often depend on implicit assumptions, which should be articulated.
        \item The authors should reflect on the factors that influence the performance of the approach. For example, a facial recognition algorithm may perform poorly when image resolution is low or images are taken in low lighting. Or a speech-to-text system might not be used reliably to provide closed captions for online lectures because it fails to handle technical jargon.
        \item The authors should discuss the computational efficiency of the proposed algorithms and how they scale with dataset size.
        \item If applicable, the authors should discuss possible limitations of their approach to address problems of privacy and fairness.
        \item While the authors might fear that complete honesty about limitations might be used by reviewers as grounds for rejection, a worse outcome might be that reviewers discover limitations that aren't acknowledged in the paper. The authors should use their best judgment and recognize that individual actions in favor of transparency play an important role in developing norms that preserve the integrity of the community. Reviewers will be specifically instructed to not penalize honesty concerning limitations.
    \end{itemize}

\item {\bf Theory assumptions and proofs}
    \item[] Question: For each theoretical result, does the paper provide the full set of assumptions and a complete (and correct) proof?
    \item[] Answer: \answerYes{} 
    \item[] Justification: The assumptions and proofs are provided at Appendix~\ref{sec:proofs}.
    \item[] Guidelines:
    \begin{itemize}
        \item The answer \answerNA{} means that the paper does not include theoretical results. 
        \item All the theorems, formulas, and proofs in the paper should be numbered and cross-referenced.
        \item All assumptions should be clearly stated or referenced in the statement of any theorems.
        \item The proofs can either appear in the main paper or the supplemental material, but if they appear in the supplemental material, the authors are encouraged to provide a short proof sketch to provide intuition. 
        \item Inversely, any informal proof provided in the core of the paper should be complemented by formal proofs provided in appendix or supplemental material.
        \item Theorems and Lemmas that the proof relies upon should be properly referenced. 
    \end{itemize}

    \item {\bf Experimental result reproducibility}
    \item[] Question: Does the paper fully disclose all the information needed to reproduce the main experimental results of the paper to the extent that it affects the main claims and/or conclusions of the paper (regardless of whether the code and data are provided or not)?
    \item[] Answer: \answerYes{} 
    \item[] Justification: The paper fully discloses all the information needed to reproduce the main experimental results at Section~\ref{sec:experiments}.
    \item[] Guidelines:
    \begin{itemize}
        \item The answer \answerNA{} means that the paper does not include experiments.
        \item If the paper includes experiments, a \answerNo{} answer to this question will not be perceived well by the reviewers: Making the paper reproducible is important, regardless of whether the code and data are provided or not.
        \item If the contribution is a dataset and\slash or model, the authors should describe the steps taken to make their results reproducible or verifiable. 
        \item Depending on the contribution, reproducibility can be accomplished in various ways. For example, if the contribution is a novel architecture, describing the architecture fully might suffice, or if the contribution is a specific model and empirical evaluation, it may be necessary to either make it possible for others to replicate the model with the same dataset, or provide access to the model. In general. releasing code and data is often one good way to accomplish this, but reproducibility can also be provided via detailed instructions for how to replicate the results, access to a hosted model (e.g., in the case of a large language model), releasing of a model checkpoint, or other means that are appropriate to the research performed.
        \item While NeurIPS does not require releasing code, the conference does require all submissions to provide some reasonable avenue for reproducibility, which may depend on the nature of the contribution. For example
        \begin{enumerate}
            \item If the contribution is primarily a new algorithm, the paper should make it clear how to reproduce that algorithm.
            \item If the contribution is primarily a new model architecture, the paper should describe the architecture clearly and fully.
            \item If the contribution is a new model (e.g., a large language model), then there should either be a way to access this model for reproducing the results or a way to reproduce the model (e.g., with an open-source dataset or instructions for how to construct the dataset).
            \item We recognize that reproducibility may be tricky in some cases, in which case authors are welcome to describe the particular way they provide for reproducibility. In the case of closed-source models, it may be that access to the model is limited in some way (e.g., to registered users), but it should be possible for other researchers to have some path to reproducing or verifying the results.
        \end{enumerate}
    \end{itemize}

\item {\bf Open access to data and code}
    \item[] Question: Does the paper provide open access to the data and code, with sufficient instructions to faithfully reproduce the main experimental results, as described in supplemental material?
    \item[] Answer: \answerYes{} 
    \item[] Justification: We provide the link to code in the abstract.
    \item[] Guidelines:
    \begin{itemize}
        \item The answer \answerNA{} means that paper does not include experiments requiring code.
        \item Please see the NeurIPS code and data submission guidelines (\url{https://neurips.cc/public/guides/CodeSubmissionPolicy}) for more details.
        \item While we encourage the release of code and data, we understand that this might not be possible, so \answerNo{} is an acceptable answer. Papers cannot be rejected simply for not including code, unless this is central to the contribution (e.g., for a new open-source benchmark).
        \item The instructions should contain the exact command and environment needed to run to reproduce the results. See the NeurIPS code and data submission guidelines (\url{https://neurips.cc/public/guides/CodeSubmissionPolicy}) for more details.
        \item The authors should provide instructions on data access and preparation, including how to access the raw data, preprocessed data, intermediate data, and generated data, etc.
        \item The authors should provide scripts to reproduce all experimental results for the new proposed method and baselines. If only a subset of experiments are reproducible, they should state which ones are omitted from the script and why.
        \item At submission time, to preserve anonymity, the authors should release anonymized versions (if applicable).
        \item Providing as much information as possible in supplemental material (appended to the paper) is recommended, but including URLs to data and code is permitted.
    \end{itemize}

\item {\bf Experimental setting/details}
    \item[] Question: Does the paper specify all the training and test details (e.g., data splits, hyperparameters, how they were chosen, type of optimizer) necessary to understand the results?
    \item[] Answer: \answerYes{}{} 
    \item[] Justification: Experimental settings/details are discussed in Section~\ref{sec:experiments} including all the information to reproduce the results.
    \item[] Guidelines:
    \begin{itemize}
        \item The answer \answerNA{} means that the paper does not include experiments.
        \item The experimental setting should be presented in the core of the paper to a level of detail that is necessary to appreciate the results and make sense of them.
        \item The full details can be provided either with the code, in appendix, or as supplemental material.
    \end{itemize}

\item {\bf Experiment statistical significance}
    \item[] Question: Does the paper report error bars suitably and correctly defined or other appropriate information about the statistical significance of the experiments?
    \item[] Answer: \answerYes{} 
    \item[] Justification: We state in Appendix~\ref{apdx:full_other_results} about statistical significance. The statistical significance of probabilistic and deterministic is under 3\%/1\%, respectively.
    \item[] Guidelines:
    \begin{itemize}
        \item The answer \answerNA{} means that the paper does not include experiments.
        \item The authors should answer \answerYes{} if the results are accompanied by error bars, confidence intervals, or statistical significance tests, at least for the experiments that support the main claims of the paper.
        \item The factors of variability that the error bars are capturing should be clearly stated (for example, train/test split, initialization, random drawing of some parameter, or overall run with given experimental conditions).
        \item The method for calculating the error bars should be explained (closed form formula, call to a library function, bootstrap, etc.)
        \item The assumptions made should be given (e.g., Normally distributed errors).
        \item It should be clear whether the error bar is the standard deviation or the standard error of the mean.
        \item It is OK to report 1-sigma error bars, but one should state it. The authors should preferably report a 2-sigma error bar than state that they have a 96\% CI, if the hypothesis of Normality of errors is not verified.
        \item For asymmetric distributions, the authors should be careful not to show in tables or figures symmetric error bars that would yield results that are out of range (e.g., negative error rates).
        \item If error bars are reported in tables or plots, the authors should explain in the text how they were calculated and reference the corresponding figures or tables in the text.
    \end{itemize}

\item {\bf Experiments compute resources}
    \item[] Question: For each experiment, does the paper provide sufficient information on the computer resources (type of compute workers, memory, time of execution) needed to reproduce the experiments?
    \item[] Answer: \answerYes{}{} 
    \item[] Justification: The results are run in NVIDIA A6000 GTX GPU. Section~\ref{subsect:efficiency_analysis} shows the computational cost of various methods.
    \item[] Guidelines:
    \begin{itemize}
        \item The answer \answerNA{} means that the paper does not include experiments.
        \item The paper should indicate the type of compute workers CPU or GPU, internal cluster, or cloud provider, including relevant memory and storage.
        \item The paper should provide the amount of compute required for each of the individual experimental runs as well as estimate the total compute. 
        \item The paper should disclose whether the full research project required more compute than the experiments reported in the paper (e.g., preliminary or failed experiments that didn't make it into the paper). 
    \end{itemize}
    
\item {\bf Code of ethics}
    \item[] Question: Does the research conducted in the paper conform, in every respect, with the NeurIPS Code of Ethics \url{https://neurips.cc/public/EthicsGuidelines}?
    \item[] Answer: \answerYes{} 
    \item[] Justification: The work conforms with the NeurIPS Code of Ethics.
    \item[] Guidelines:
    \begin{itemize}
        \item The answer \answerNA{} means that the authors have not reviewed the NeurIPS Code of Ethics.
        \item If the authors answer \answerNo, they should explain the special circumstances that require a deviation from the Code of Ethics.
        \item The authors should make sure to preserve anonymity (e.g., if there is a special consideration due to laws or regulations in their jurisdiction).
    \end{itemize}

\item {\bf Broader impacts}
    \item[] Question: Does the paper discuss both potential positive societal impacts and negative societal impacts of the work performed?
    \item[] Answer: \answerYes{} 
    \item[] Justification: The paper discusses both potential positive societal impacts and negative societal impacts of the work performed.
    \item[] Guidelines:
    \begin{itemize}
        \item The answer \answerNA{} means that there is no societal impact of the work performed.
        \item If the authors answer \answerNA{} or \answerNo, they should explain why their work has no societal impact or why the paper does not address societal impact.
        \item Examples of negative societal impacts include potential malicious or unintended uses (e.g., disinformation, generating fake profiles, surveillance), fairness considerations (e.g., deployment of technologies that could make decisions that unfairly impact specific groups), privacy considerations, and security considerations.
        \item The conference expects that many papers will be foundational research and not tied to particular applications, let alone deployments. However, if there is a direct path to any negative applications, the authors should point it out. For example, it is legitimate to point out that an improvement in the quality of generative models could be used to generate Deepfakes for disinformation. On the other hand, it is not needed to point out that a generic algorithm for optimizing neural networks could enable people to train models that generate Deepfakes faster.
        \item The authors should consider possible harms that could arise when the technology is being used as intended and functioning correctly, harms that could arise when the technology is being used as intended but gives incorrect results, and harms following from (intentional or unintentional) misuse of the technology.
        \item If there are negative societal impacts, the authors could also discuss possible mitigation strategies (e.g., gated release of models, providing defenses in addition to attacks, mechanisms for monitoring misuse, mechanisms to monitor how a system learns from feedback over time, improving the efficiency and accessibility of ML).
    \end{itemize}
    
\item {\bf Safeguards}
    \item[] Question: Does the paper describe safeguards that have been put in place for responsible release of data or models that have a high risk for misuse (e.g., pre-trained language models, image generators, or scraped datasets)?
    \item[] Answer: \answerNA{} 
    \item[] Justification: The paper has no such risks.
    \item[] Guidelines:
    \begin{itemize}
        \item The answer \answerNA{} means that the paper poses no such risks.
        \item Released models that have a high risk for misuse or dual-use should be released with necessary safeguards to allow for controlled use of the model, for example by requiring that users adhere to usage guidelines or restrictions to access the model or implementing safety filters. 
        \item Datasets that have been scraped from the Internet could pose safety risks. The authors should describe how they avoided releasing unsafe images.
        \item We recognize that providing effective safeguards is challenging, and many papers do not require this, but we encourage authors to take this into account and make a best faith effort.
    \end{itemize}

\item {\bf Licenses for existing assets}
    \item[] Question: Are the creators or original owners of assets (e.g., code, data, models), used in the paper, properly credited and are the license and terms of use explicitly mentioned and properly respected?
    \item[] Answer: \answerYes{} 
    \item[] Justification: The creators or original owners of assets (e.g., code, data, models), used in the paper, properly credited and are the license and terms of use explicitly mentioned and properly respected.
    \item[] Guidelines:
    \begin{itemize}
        \item The answer \answerNA{} means that the paper does not use existing assets.
        \item The authors should cite the original paper that produced the code package or dataset.
        \item The authors should state which version of the asset is used and, if possible, include a URL.
        \item The name of the license (e.g., CC-BY 4.0) should be included for each asset.
        \item For scraped data from a particular source (e.g., website), the copyright and terms of service of that source should be provided.
        \item If assets are released, the license, copyright information, and terms of use in the package should be provided. For popular datasets, \url{paperswithcode.com/datasets} has curated licenses for some datasets. Their licensing guide can help determine the license of a dataset.
        \item For existing datasets that are re-packaged, both the original license and the license of the derived asset (if it has changed) should be provided.
        \item If this information is not available online, the authors are encouraged to reach out to the asset's creators.
    \end{itemize}

\item {\bf New assets}
    \item[] Question: Are new assets introduced in the paper well documented and is the documentation provided alongside the assets?
    \item[] Answer: \answerYes{} 
    \item[] Justification:  We provide the model code in the abstract.
    \item[] Guidelines:
    \begin{itemize}
        \item The answer \answerNA{} means that the paper does not release new assets.
        \item Researchers should communicate the details of the dataset\slash code\slash model as part of their submissions via structured templates. This includes details about training, license, limitations, etc. 
        \item The paper should discuss whether and how consent was obtained from people whose asset is used.
        \item At submission time, remember to anonymize your assets (if applicable). You can either create an anonymized URL or include an anonymized zip file.
    \end{itemize}

\item {\bf Crowdsourcing and research with human subjects}
    \item[] Question: For crowdsourcing experiments and research with human subjects, does the paper include the full text of instructions given to participants and screenshots, if applicable, as well as details about compensation (if any)? 
    \item[] Answer: \answerNA{} 
    \item[] Justification: the paper does not involve crowd-sourcing nor research with human subjects.
    \item[] Guidelines:
    \begin{itemize}
        \item The answer \answerNA{} means that the paper does not involve crowdsourcing nor research with human subjects.
        \item Including this information in the supplemental material is fine, but if the main contribution of the paper involves human subjects, then as much detail as possible should be included in the main paper. 
        \item According to the NeurIPS Code of Ethics, workers involved in data collection, curation, or other labor should be paid at least the minimum wage in the country of the data collector. 
    \end{itemize}

\item {\bf Institutional review board (IRB) approvals or equivalent for research with human subjects}
    \item[] Question: Does the paper describe potential risks incurred by study participants, whether such risks were disclosed to the subjects, and whether Institutional Review Board (IRB) approvals (or an equivalent approval/review based on the requirements of your country or institution) were obtained?
    \item[] Answer: \answerNA{} 
    \item[] Justification: The paper does not involve research with human subjects.
    \item[] Guidelines:
    \begin{itemize}
        \item The answer \answerNA{} means that the paper does not involve crowdsourcing nor research with human subjects.
        \item Depending on the country in which research is conducted, IRB approval (or equivalent) may be required for any human subjects research. If you obtained IRB approval, you should clearly state this in the paper. 
        \item We recognize that the procedures for this may vary significantly between institutions and locations, and we expect authors to adhere to the NeurIPS Code of Ethics and the guidelines for their institution. 
        \item For initial submissions, do not include any information that would break anonymity (if applicable), such as the institution conducting the review.
    \end{itemize}

\item {\bf Declaration of LLM usage}
    \item[] Question: Does the paper describe the usage of LLMs if it is an important, original, or non-standard component of the core methods in this research? Note that if the LLM is used only for writing, editing, or formatting purposes and does \emph{not} impact the core methodology, scientific rigor, or originality of the research, declaration is not required.
    \item[] Answer: \answerNA{} 
    \item[] Justification: The core method development in this research does not involve LLMs as any important, original, or non-standard components.
    \item[] Guidelines:
    \begin{itemize}
        \item The answer \answerNA{} means that the core method development in this research does not involve LLMs as any important, original, or non-standard components.
        \item Please refer to our LLM policy in the NeurIPS handbook for what should or should not be described.
    \end{itemize}

\end{enumerate}

\end{document}